\definecolor{ForestGreen}{RGB}{34,139,34}
\newcommand{\R}{\mathds{R}}
\declaretheorem[numberwithin=section]{theorem}
\declaretheorem[style=plain, name=Proposition, sibling=theorem]{proposition}
\declaretheorem[style=plain, name=Lemma, sibling=theorem]{lemma}
\declaretheorem[style=plain, name=Definition, sibling=theorem]{definition}
\declaretheorem[style=plain, name=Example, sibling=theorem]{example}
\declaretheorem[style=plain, name=Remark, sibling=theorem]{remark}
\declaretheorem[style=plain, name=Question, sibling=theorem]{question}
\declaretheorem[style=plain, name=Interpretation, sibling=theorem]{interpretation}
\newcommand{\states}{\mathcal{S}}
\newcommand{\actions}{\mathcal{A}}
\newcommand{\val}{J}
\DeclareMathOperator*{\eval}{\mathbf{E}}
\DeclareMathOperator{\map}{map}
\DeclareMathOperator{\lin}{lin}
\DeclareMathOperator{\Maps}{Maps}
\DeclareMathOperator{\Lin}{Lin}
\DeclareMathOperator{\id}{id}
\DeclareMathOperator{\im}{im}
\newcommand{\one}{\mathbf{1}}
\newcommand{\Transition}{\mathcal{T}}
\newcommand{\belief}{B}
\DeclareMathOperator{\bp}{\mathbf{B}}
\DeclareMathOperator{\FGamma}{\mathbf{\Gamma}} 
\newcommand{\F}{\mathcal{F}}
\newcommand{\Traj}{\Xi} 
\newcommand{\traj}{\xi} 
\newcommand{\Observations}{\mathcal{O}} 
\newcommand{\Features}{\Omega} 
\newcommand{\feature}{\omega} 
\newcommand{\FLambda}{\mathbf{\Lambda}} 
\newcommand{\slambda}{\lambda}
\newcommand{\Featureb}{\mathbf{\mathcal{E}}} 
\newcommand{\featureb}{\epsilon}
\newcommand{\Valid}{\mathcal{V}} 
\newcommand{\Model}{\mathcal{M}}
\newcommand{\Amb}{\mathrm{Amb}}
\newcommand{\FC}{\mathrm{FC}}
\DeclareMathOperator{\Pos}{Pos}
\newcommand{\downmapsto}{\rotatebox[origin=c]{-90}{$\scriptstyle\mapsto$}\mkern2mu}
\DeclareMathOperator{\bpr}{bp}
\title{Modeling Human Beliefs about AI Behavior \\ for Scalable Oversight}
\author{\name Leon Lang \email l.lang@uva.nl \\
      \addr University of Amsterdam\\
      \AND
      \name Patrick Forré \email p.d.forre@uva.nl \\
      \addr University of Amsterdam}
\begin{document}

\maketitle

\begin{abstract}
As AI systems advance beyond human capabilities, scalable oversight becomes critical: 
how can we supervise AI that exceeds our abilities? 
A key challenge is that human evaluators may form incorrect \emph{beliefs} about AI behavior in complex tasks, leading to unreliable feedback and poor value inference.
To address this, we propose \emph{modeling} evaluators' beliefs to interpret their feedback more reliably.
We formalize human belief models, analyze their theoretical role in value learning, and characterize when ambiguity remains. 
To reduce reliance on precise belief models, we introduce ``belief model covering'' as a relaxation.
This motivates our preliminary proposal to use the internal representations of adapted foundation models to mimic human evaluators' beliefs.
These representations could be used to learn correct values from human feedback even when evaluators misunderstand the AI's behavior.
Our work suggests that modeling human beliefs can improve value learning and outlines practical research directions for implementing this approach to scalable oversight.
\end{abstract}

\allowdisplaybreaks

\newpage

\setcounter{tocdepth}{2}
\tableofcontents

\newpage

\section{Introduction}\label{sec:introduction}

In recent years, reinforcement learning from human feedback (RLHF)~\citep{Christiano2017} and variations like direct preference optimization~\citep{Rafailov2023} have become a practical approach for aligning language models~\citep{Ziegler2020,Stiennon2022learning,Bai2022training,Ouyang2022}.
These techniques have then been used in the alignment of large-scale foundation models like ChatGPT~\citep{chatgpt} and GPT-4~\citep{OpenAI2024gpt4}, Gemini~\citep{Google2023b}, Llama~\citep{Touvron2023,Grattafiori2024llama3}, and Claude~\citep{Bai2022, Anthropic2023, Anthropic2023b}.

RLHF uses feedback of human evaluators on AI behavior to give information about desired behavior.
But as AI systems become more capable, they may eventually outstrip our ability to evaluate them.
The problem of \textbf{scalable oversight} therefore asks how to effectively teach our preferences to AI systems when they become more capable than ourselves~\citep{Amodei2016concreteproblemsaisafety,Christiano2018,Irving2018,leike2018scalableagentalignmentreward,Bowman2022}.
Conceptually, if human evaluators lack the capacity to fully understand the AI's behavior, then this may incentivize the AI to perform behavior that the human evaluator only \textbf{believes} to be good, possibly at the expense of actual value.

Recent work has in fact revealed problems with RLHF that stem from erroneous human beliefs about AI behavior. 
In an early example~\citep{Amodei2017}, an AI was supposed to grasp a ball with a robot hand in a simulation.
The evaluators, who looked at a video of the behavior, sometimes believed that the hand was grasping the ball when it was in fact only \emph{in front of} it, leading to positive feedback for the wrong behavior.
In~\citet{Cloud2024}, a synthetic overseer in a baseline setting is unaware of whether a reached goal is a diamond (positive) or ghost (negative), leading the policy to learn to approach the ghost. 
\citet{Denison2024} shows a language model modifying a file unbeknownst to a synthetic evaluator, successfully deceiving it into believing a checklist has been accomplished. 
Similar models have also been shown to mislead humans who are limited in their time or competence to evaluate question answering or programming tasks, leading them to believe that the performance is better than it is~\citep{Wen2024}.

Given the crucial role of evaluator beliefs in erroneous feedback, in this theoretical work, we are exploring the idea to \emph{model} human beliefs about AI behavior.
Our idea is that if we knew what the human \emph{believes} the AI has done in its environment, then we could properly relate the human's feedback to that believed behavior instead of to the actual behavior.
For example, imagine the robot hand from~\citet{Amodei2017} is in front of the ball and the human evaluator gives positive feedback.
Additionally, assume we know the human believes the ball was \emph{actually grasped}.
If so, then we know that the human thinks \emph{grasping the ball} is positive, and can incentivize this behavior from now on.

But what \emph{are} beliefs?
In classical work on partial observability, beliefs are formalized as probability distributions over environment states~\citep{Kaelbling1998}, possibly including models of other agents~\citep{Gmytrasiewicz_2005}.
In~\citet{Lang2024}, human evaluator beliefs are given by probability distributions over state sequences in the AI's environment.
They show in theoretical toy examples that modeling such beliefs can help to learn effectively from feedback. 
However, in reality it is unrealistic that humans form probability distributions over entire state sequences, and it would be prohibitive to specify such a belief explicitly. 
We think it is more realistic that humans think in terms of \textbf{features}, which we conceptualize as higher-level and independent properties of trajectories in an environment.
We then model a human belief about an AI's behavior as a vector of feature strengths. 

This leads to our notion of a \textbf{human belief model}, which we introduce in~\Cref{sec:general_models}.
This model, together with feedback on observations, then theoretically allows to infer the return function over trajectories up to an inherent \textbf{ambiguity}, which we characterize in terms of the human belief model.
The ambiguity disappears when the model is \textbf{complete}, which holds, in particular, when the human's beliefs of all observations linearly cover the feature combinations that are possible in the environment (\Cref{thm:characterization_complete}).
We then analyze conceptual toy examples of non-complete and complete models and consequences for the resulting return function inference.

We would like to use human belief modeling in practice to make concrete progress on scalable oversight.
However, realistically, we cannot model human beliefs precisely; after all, we do not even have an explicit specification of the human's feature set!
In~\Cref{sec:morphisms}, we thus relax the requirement of exact modeling by investigating belief models that can \textbf{represent} all return functions and feedback functions that are compatible with the \emph{true} belief model.
We then say that these models \textbf{cover} the true belief model.
When such a model is complete, it can replace the true model for the return function inference (\Cref{thm:Morphism_preserves_identifiability}).
In a conceptual example, we analyze a human with symmetry-invariant features, which we can cover with a complete model that assumes symmetry-invariant reward functions.

We also characterize model coverage by the existence of what we call \textbf{model morphisms} and \textbf{linear belief-compatible ontology translations} (\Cref{thm:existence_of_morphism}).
Intuitively, this means that if one can \emph{linearly translate} from the specified model's features to the human's features in a way that is compatible with their beliefs about observations, then coverage holds. 
The notion of a linear ontology translation is strikingly similar to work in the field of mechanistic interpretability on sparse autoencoders~\citep{Cunningham2023,Bricken2023}, which linearly map from the internal representation space of language models to a space of human-interpretable features.

Inspired by this connection, in~\Cref{sec:in_practice}, we make a preliminary proposal to use adapted foundation models to construct a belief model that covers the true human belief model.
The resulting training scheme would work like typical reinforcement learning from human feedback, but use an adapted foundation model with internal representations that mimic the human's beliefs of the AI's behavior. 
After learning the reward model, it can then be applied during policy optimization by using a more capable foundation model for creating the internal representations.
Our theory specifies precise conditions under which the procedure leads to a correct return function.
This training scheme does not assume the human to fully understand the behavior they evaluate and is not based on amplification of the human's capabilities, which we think provides a new angle to make progress on the problem of scalable oversight.

Finally, in our discussion in~\Cref{sec:discussion}, we summarize our work, survey related work, motivate theoretical and empirical future work, and conclude.

\section{Human belief models}\label{sec:general_models}

In this section we define the notion of human belief models that can help to infer return functions from the human's feedback.

In~\Cref{sec:conventions} we start by explaining our conventions and some preliminaries for linear algebra.
We recommend also experienced readers to briefly skim this section to understand our notation.
In~\Cref{sec:preliminaries_mdps}, we briefly define Markov decision processes and slightly adapt them: We do \emph{not} make the assumption that the return function over trajectories comes from a reward function.
This allows us to be slightly more general.
In~\Cref{sec:human_ontology}, we then introduce the notion of the human's \emph{ontology}, which is a map that assigns a vector of feature strengths to each trajectory. 
Our crucial assumption is that the return function evaluates a trajectory by summing up the rewards of each feature, weighted by the feature strengths. 
This allows us to recover classical reward functions as a special case.

In reinforcement learning, the goal is to maximize the return function, but we assume this function to be unknown: It represents the implicit values of a human evaluator and needs to be inferred from the human's \emph{feedback}. 
In~\Cref{sec:human_feature_belief}, we explain the human to give feedback based on forming a (possibly incorrect) \emph{belief over features} from \emph{observations}. 
We call the resulting feedback the \emph{observation return function}.
In~\Cref{sec:scalable_oversight} we then explain how our problem setup connects to the general problem of scalable oversight and motivate our solution approach:
It is based on leveraging \emph{human belief models} to better interpret the human's feedback; we introduce them in~\Cref{sec:def_human_models}.  
When a model of the human's beliefs is known, one can infer the return function from the human's feedback up to an \emph{ambiguity}, which we characterize in~\Cref{sec:ambiguity}.
We also define \emph{complete} human belief models, for which the ambiguity vanishes, and characterize them in~\Cref{thm:characterization_complete}.
In~\Cref{sec:faithfulness}, we introduce the dual notion of faithfulness.
In~\Cref{sec:examples}, we study conceptual examples that give an intuition for when the ambiguity does, or does not, vanish.

\subsection{Conventions and preliminaries for linear algebra}\label{sec:conventions}

Let $X$ be a set.
For $x \in X$, we define the delta function $\delta_x: X \to \R$ as usual by
\begin{equation*}
  \delta_x(x') = 
  \begin{cases}
    1, \ x' = x, \\
    0, \text{ else }.
  \end{cases}
\end{equation*}
Let $\R$ be the real numbers.
Then $\R^{X}$ denotes the vector space of functions from $X$ to $\R$, which one can also view as column-vectors indexed by $x \in X$.
For a function $v \in \R^{X}$, we write $v(x)$ for the entry of $v$ at position $x \in X$.
The standard basis functions of $\R^{X}$ are given by $\{e_x\}_{x \in X}$ with $e_x = \delta_x$.
Then every function $v \in \R^X$ can be written as $v = \sum_{x \in X} v(x) \cdot e_x$.
We define the standard scalar product $\left\langle \cdot , \cdot \right\rangle: \R^{X} \times \R^{X} \to \R$ by
\begin{equation*}
  \left\langle v, w \right\rangle  \coloneqq \sum_{x \in X} v(x) w(x).
\end{equation*}

Let $A: \R^{X} \to \R^{Y}$ be a linear map.
Then we view $A$ also as a matrix with matrix elements $A_{yx} \coloneqq \big[A(e_{x})\big](y) \in \R$ for $x \in X, \ y \in Y$. 
We write $A_y \in \R^{X}$ for the row of $A$ at index $y \in Y$, which is the function with entries $A_y(x) = A_{yx}$.
Consequently, if $v \in \R^{X}$ and $y \in Y$, then we obtain
\begin{equation*}
  \big[A(v) \big](y) = \sum_{x \in X} v(x) \big[ A(e_x) \big](y) 
  = \sum_{x \in X} A_{yx} v(x)
  = \left\langle A_{y}, v \right\rangle.
\end{equation*}
This corresponds to the typical way that linear functions can be represented as matrix-vector products.
If $\Valid \subseteq \R^{X}$ is a vector subspace, then we write $A|_{\Valid}: \Valid \to \R^{Y}$ for the restriction of $A$ to $\Valid$, which is simply given by $(A|_{\Valid})(v) = A(v)$ for all $v \in \Valid$.
We denote the kernel and image of $A: \R^{X} \to \R^{Y}$ by 
\begin{align*}
  \ker(A) & \coloneqq \Big\lbrace v \in \R^{X} \ \ \big| \ \  A(v) = 0 \Big\rbrace \subseteq \R^{X}, \\
  \im(A) & \coloneqq \Big\lbrace w \in \R^{Y} \ \ \big| \ \ \exists v \in \R^{X}\colon A(v) = w \Big\rbrace \subseteq \R^{Y}.
\end{align*}
I.e., they are the set of functions in $\R^{X}$ that are sent to zero by $A$, and the set of functions in $\R^{Y}$ that are hit by $A$, respectively.
Both are vector subspaces of their respective surrounding vector spaces.
We explain basic properties of kernels and images in~\Cref{app:properties_kernel_image} and will refer to those results when needed.

Sometimes, our linear functions ``come from'' a deterministic function in the other direction.
I.e., if $h: Y \to X$ is a function, then we define the linear dual function $h^*: \R^{X} \to \R^{Y}$ for $v \in \R^{X}$ and $y \in Y$ by 
\begin{equation}\label{eq:early_h}
  \big[h^*(v)\big](y) \coloneqq  v\big(h(y)\big).
\end{equation}
The matrix elements are then given by
\begin{equation}\label{eq:matrix_elements_dual}
  h^*_{yx} = \big[h^*(e_x)\big](y) = e_x\big(h(y)\big) =
  \begin{cases}
    1, \ h(y) = x, \\
    0, \ \text{else}.
  \end{cases}
\end{equation}

For two linear maps $A: \R^{X} \to \R^{Y}$ and $B: \R^{Y} \to \R^{Z}$, we write their composition as $B \circ A: \R^{X} \to \R^{Z}$, which has matrix elements $(B \circ A)_{zx} = \sum_{y \in Y} B_{zy}A_{yx}$
for $x \in X, \ z \in Z$.
This also implies
\begin{equation}\label{eq:row_matrix_correspondence}
  (B \circ A)_z = \sum_{y \in Y} B_{zy} A_y. 
\end{equation}

Finally, whenever we draw a diagram of (usually linear) functions in this paper, the diagram \emph{commutes}, meaning that all directed pathways from one node to another node are the same function.
For example, in a diagram of the form

\begin{equation*}
	\begin{tikzcd}
	  & \R^{Y} \ar[ddr, "B"] \\
        \\
	\R^{X} \ar[rr, "C"'] \ar[ruu, "A"] & & \R^{Z} 
      \end{tikzcd}
\end{equation*}

we have $C = B \circ A$.
The only exceptions to this convention are~\Cref{eq:commuting_diagram_completed,eq:hope_for_the_best} in the appendix.

\subsection{Preliminaries on Markov Decision Processes}\label{sec:preliminaries_mdps}

Much prior work involving human feedback, especially in RLHF, considers the contextual bandit framework~\citep{Ziegler2020}.
However, we are motivated to contribute to a theory that will apply to advanced AI systems interacting with more complex environments, as this is where we expect most of the risks of advanced AI to be. 
We thus work in the setting of an \textbf{MDP} $(\states, \actions, \Transition, P_0, T, G)$, with  a finite set of \textbf{states} $\states$ and \textbf{actions} $\actions$, a \textbf{transition kernel} $\Transition: \states \times \actions \to \Delta(\states)$, an \textbf{initial state distribution} $P_0 \in \Delta(\states)$, a \textbf{finite time horizon} $T \in \{0, 1, 2, 3, \dots\}$, and the human's implicit \textbf{return function} $G$, which we clarify after defining trajectories below.
We fix this generic MDP for the rest of the paper.

We now define trajectories. 
In the rest of the paper, whenever we have a state-action sequence $\traj \in (\states \times \actions)^T \times \states$ present in some context and then write about states and actions $s_0, a_0, \dots, s_{T-1}, a_{T-1}, s_T$, then they implicitly refer to the states and actions in $\traj$.
Then the set of \textbf{trajectories} be given by
\begin{equation*}
  \Traj = \Big\lbrace \traj \in (\states \times \actions)^{T} \times \states \ \ \big| \ \ \traj \text{ is  \emph{possible}}\Big\rbrace \subseteq (\states \times \actions)^{T} \times \states,
\end{equation*}
where we call a state-action sequence $\traj$ \emph{possible} if $P_0(s_0) > 0$ and $T(s_{t} \mid s_{t-1}, a_{t-1}) > 0$ for all $t \geq 1$.
Then the return function is a function $G \in \R^{\Traj}$.

Note that this formalism does \emph{not} assume that $G$ decomposes into a reward function $R: \states \times \actions \times \states \to \R$.
In other words, the formalism allows for human preferences that do not satisfy the reward hypothesis~\citep{Bowling2022} and is thus slightly more general than typical reinforcement learning settings.

\textbf{Policies} are functions $\pi: \states \to \Delta(\actions)$.
A policy $\pi$ together with the MDP induces a distribution $P^{\pi} \in \Delta(\Traj)$ over trajectories by sampling initial states from $P_0$, actions from $\pi$, and transitions from $\Transition$.
\textbf{The policy evaluation function} is then given by 
\begin{equation}\label{eq:policy_evaluation_function}
  \val(\pi) \coloneqq \eval_{\traj \sim P^{\pi}(\cdot)}\big[G(\traj)\big].
\end{equation}
The goal in reinforcement learning is to find a policy $\pi$ that maximizes this evaluation function.

\subsection{The human's ontology and reward object}\label{sec:human_ontology}

We assume that the return function $G: \Traj \to \R$ encodes what a given human cares about. 
We imagine that $G$ measures the quality of each trajectory linearly in certain feature strengths associated to each trajectory; here, features are high-level return-relevant concepts.

More precisely, we assume the human comes equipped with a finite \textbf{feature set} $\Features$.
The human's \textbf{ontology} is a function $\slambda: \Traj \to \R^{\Features}$ that encodes for each trajectory $\traj \in \Traj$ and feature $\feature \in \Features$ the extent $\big[\slambda(\traj)\big](\feature)$ to which the feature $\feature$ is present in the trajectory $\traj$.
As we discuss in greater detail in~\Cref{sec:human_feature_belief}, these feature strengths $\slambda(\traj)$ are \emph{idealized}, i.e., the human may not be able to compute them.
In the general theory, we allow them to be negative, though it may help to imagine them to be non-negative.
The human's \textbf{reward object} is a fixed function $R_{\Features} \in \R^{\Features}$ that assigns to each feature $\feature \in \Features$ the degree $R_{\Features}(\feature)$ to which the human likes this feature.
The return function $G: \Traj \to \R$ evaluates a trajectory by summing up the quality of all features, weighted by the extent to which they appear in the trajectory:

\begin{equation}\label{eq:form_return_function}
  G(\traj) = \sum_{\feature \in \Features} \big[\slambda(\traj)\big](\feature) \cdot R_{\Features}(\feature) = \big\langle \slambda(\traj), R_{\Features} \big\rangle.
\end{equation}
Let $\FLambda: \R^{\Features} \to \R^{\Traj}$ be given by $\big[\FLambda(\tilde{R}_{\Features})\big](\traj) \coloneqq \big\langle \slambda(\traj), \tilde{R}_{\Features} \big\rangle$.
Then $\FLambda$ is a linear function from which we can recover $\slambda$ using the matrix elements of $\FLambda$: $\big[\slambda(\traj)\big](\feature) = \FLambda_{\traj \feature}$ (\Cref{pro:map_linear_correspondence}).
We slightly abuse the terminology by referring to \emph{both} $\slambda: \Traj \to \R^{\Features}$ and $\FLambda: \R^{\Features} \to \R^{\Traj}$ as the human's ontology.
This representation of the ontology satisfies $\FLambda(R_{\Features}) = G$.

\begin{example}[Classical reward functions]\label{ex:classical_MDP}
  Assume that the human's feature set is $\Features = \states \times \actions \times \states$, i.e., the set of all state-action-state transitions. 
  Then the reward object $R_{\Features} = R \in \R^{\states \times \actions \times \states}$ is a reward function in the classical sense.
  Let $\gamma \in [0, 1]$ be a discount factor and define the ontology $\FLambda = \FGamma: \R^{\states \times \actions \times \states} \to \R^{\Traj}$ by
  \begin{equation*}
    \big[\FGamma(\tilde{R})\big](\traj) = \sum_{t = 0}^{T-1} \gamma^t \tilde{R}(s_t, a_t, s_{t+1}).
  \end{equation*}
  $G = \FGamma(R)$ is then given by the discounted sum of rewards of individual transitions in a trajectory, as is typical in reinforcement learning.
  The matrix elements of the ontology $\FGamma$ are given by
  \begin{equation*}
    \FGamma_{\traj,(s, a, s')} = \big[ \FGamma(e_{(s, a, s')}) \big](\traj) = \sum_{t = 0}^{T-1} \gamma^t e_{(s, a, s')}(s_{t}, a_{t}, s_{t+1}) = \sum_{t = 0}^{T-1} \gamma^t \delta_{(s, a, s')}(s_{t}, a_{t}, s_{t+1}),
  \end{equation*}
  In other words, the extent to which the ``feature'' $(s, a, s')$ is present in the trajectory $\traj$ is simply the discounted number of times that it appears.
\end{example}

\begin{remark}[Linearity assumption]
  \label{rem:linearity}
  \Cref{eq:form_return_function} encodes that the true return function is \emph{linear} in the features of the human's ontology. 
  This is an important assumption that simplifies our theory substantially, but it also means that our results might hinge on the assumption being true. 
  Is this justified?

  First of all, linearity assumptions in reward learning are common, for example in~\citet{Ng2000,Abbeel2004,Ziebart2008} for inverse reinforcement learning or in parts of~\citet{function_approx2024} for the case of preference finetuning.
  Additionally, a priori, our linearity assumption is quite flexible. 
  For example, if the feature set is large enough, then any function can be considered linear in such features, as the case of classical reward functions in~\Cref{ex:classical_MDP} demonstrates:
  In that case, the return functions are linear in features given by state-action-state transitions.

  Yet later in this work, we don't \emph{only} want to theoretically assume the human's return function to obey a linearity assumption --- we also want to be able to concretely \emph{model} the underlying features.
  If the features that allow for a linear return function get increasingly complex, we expect it to become more difficult to model those correctly in practice. 
  We thus consider it an open question to what extent assuming linearity is appropriate, and we hope for future theoretical work that relaxes this assumption (cf.~\Cref{sec:future_work}).
\end{remark}

\subsection{The human's feature belief and observation return function}\label{sec:human_feature_belief}

In standard reinforcement learning, it is typically assumed that we know the return function $G$ to be maximized by a policy.
However, $G$ is the implicit return function of a given human.
We must thus rely on \emph{feedback} by the human to infer information about $G$.
A naive idea would be to show the human each trajectory $\traj$, ask them to evaluate it by computing $G(\traj)$, and to use these returns to train a policy.
There are three challenges with this plan:
\begin{enumerate}[(i)]
  \item The human might not have access to the full trajectory $\traj$ or all trajectories $\traj \in \Traj$. For example, in complex environments, they would usually only receive partial observations.
  \item Even if $\traj$ were fully accessible, the human might not have the \emph{capacity} to compute the features $\slambda(\traj)$.
    For example, if $\traj$ encodes a proof-attempt of a mathematical conjecture and $\feature$ encodes ``correctness'', then the human may not have the competence to determine the extent $\big[\slambda(\traj)\big](\feature)$ to which $\traj$ is correct. 
  \item Even if the human could fully compute $\slambda(\traj)$, they may not have full access to their reward object $R_{\Features}$ in order to then compute $G(\traj)$ as an explicit number.  
\end{enumerate}

To address (i), we assume a set of \textbf{observations} $\Observations$ that the human can receive. 
They may come from an observation function $O\colon \Traj \to \Observations$, as in some examples from~\citet{Amodei2017,Lang2024, Cloud2024, Denison2024}.
For example, $O(\traj)$ could be a sequence of observation segments, one for each time-step.
Alternatively, $\Observations$ could be a set of simulations to probe the human's opinion on specific situations that \emph{may} be present in real trajectories.
We can also imagine $\Observations$ to be a subset of $\Traj$. 
For example, it can make sense to only show trajectories to the human if they are able to correctly evaluate them, as in easy-to-hard generalization~\citep{Sun2024, Hase2024, Ding2024}.
In any case, we assume $\Observations$ to be a fixed set of observations where, for the most part, we are agnostic about the process that generates them.

To address (ii), we assume the human then has a \textbf{feature belief function}, i.e. a function $\featureb: \Observations \to \R^{\Features}$ that encodes for each $o \in \Observations$ the extent $\big[\featureb(o)\big](\feature)$ to which the human \emph{believes} the feature $\feature$ to be present in the observation $o$ or an associated unknown trajectory $\traj$.
Even if $\Observations = \Traj$, we can have $\featureb \neq \slambda$, which reflects that the human may believe the features of a trajectory to be different from what they actually are.

The human then judges observations $o \in \Observations$ according to the \textbf{observation return function} $G_{\Observations} \in \R^{\Observations}$.
It evaluates an observation by summing up the quality of all features, weighted by the extent to which the human \emph{believes} them to be present:
\begin{equation}\label{eq:feature_belief_function_app}
  G_{\Observations}(o) \coloneqq \sum_{\feature \in \Features} \big[ \featureb(o)\big] (\feature) \cdot R_{\Features}(\feature) = \big\langle \featureb(o), R_{\Features} \big\rangle.
\end{equation}
Let $\Featureb: \R^{\Features} \to \R^{O}$ be given by $\big[ \Featureb(\tilde{R}_{\Features}) \big](o) \coloneqq \big\langle \featureb(o), \tilde{R}_{\Features} \big\rangle$.
Then as for $\FLambda$, $\Featureb$ is a linear function from which we can recover $\featureb$ using the matrix elements of $\Featureb$: $\big[ \featureb(o) \big](\feature) = \Featureb_{o \feature}$ (\Cref{pro:map_linear_correspondence}).
We again slightly abuse the terminology by referring to \emph{both} $\featureb: \Observations \to \R^{\Features}$ and $\Featureb: \R^{\Features} \to \R^{\Observations}$ as the human's feature belief function.
This representation of the featre belief function satisfies $\Featureb(R_{\Features}) = G_{\Observations}$.

To address (iii), in RLHF it is typically assumed that the human can make \emph{choices} between observations~\citep{Christiano2017} that \emph{implicitly} reflect the underlying return function (or, on our case, observation return function).
We discuss this setting in~\Cref{app:row_constant_theory} for the special case that all relevant linear functions are row-constant.
However, in the main paper, we do not want to overcomplicate the theory and assume that the human can \emph{directly compute $G_{\Observations}$}.
In principle, if we would show each $o \in \Observations$ to the human, we could then record the entire observation return function $G_{\Observations}$.
Thus, we assume $G_{\Observations}$, which represents the human's feedback, to be fully known.

\begin{remark}[Comparison to other work involving partial observations]
  \label{rem:comparison_pomdps}
  Since we work with a set of observations $\Observations$, we should clarify that we do \emph{not} work in a classical POMDP setting.
  In POMDPs~\citep{Kaelbling1998}, the \emph{agent} only partially observes the environment, and may then keep track of a belief state for making decisions in the environment and maximizing reward.
  In contrast, we work in an MDP setting in which the agent's policy $\pi: \states \to \Delta(\actions)$ \emph{fully} observes the environment state $s \in \states$, see~\Cref{sec:preliminaries_mdps}.
  It is the human evaluator who may observe the environment only partially for evaluating the policy.

  More closely related is the framework of I-POMDPs~\citep{Gmytrasiewicz_2005}, which is a multi-agent setting in which each agent maintains a belief over not only the environment, but also over other agents and their beliefs. 
  Recent work on partially observable assistance games~\citep{Emmons2024} provides a special case that is more similar to our setting by considering an agent and a human in a shared environment in which the agent is trying to maximize the human's latent reward function. 
  In that work, the human is assumed to form a \emph{calibrated} belief over \emph{entire environment states}, whereas we consider \emph{potentially faulty} beliefs \emph{over trajectory features}.
  We think it might be useful to attempt a synthesis of both settings (cf.~\Cref{sec:related_work,sec:theory_extensions}).
\end{remark}

\subsection{Connection to the problem of scalable oversight}\label{sec:scalable_oversight}

In this interlude, we explain how the setup that we developed over the previous subsections relates to the problem of scalable oversight~\citep{Amodei2016concreteproblemsaisafety,Christiano2018,Irving2018,leike2018scalableagentalignmentreward,Bowman2022}.
We also motivate our solution approach of modeling human beliefs.

Scalable oversight is the general problem of how to oversee AI systems when they are more capable than the human overseers. 
While it is difficult to directly model the capabilities of AI systems, we can at least note that highly capable AI systems will plausibly act in \emph{large-scale environments} and perform behaviors whose precise meaning may not be adequately \emph{understood} by humans.
This can be modeled in the setting that we developed earlier:
\begin{itemize}
  \item \textbf{Scale}: If the AI acts in an environment with trajectories $\traj \in \Traj$, human overseers may not be able to observe the entirety of the trajectories. 
    We model this by a set of potentially smaller observations $o \in \Observations$ on which humans give feedback.
  \item \textbf{Understanding}: We model the human's \emph{understanding} of an observation by its feature belief $\featureb(o) \in \R^{\Features}$.
    In cases where an observation $o \in \Observations$ belongs to a trajectory $\traj \in \Traj$ in the environment (e.g., by observing a subset or other transformation of the trajectory), a gap in the human's understanding can be characterized by $\featureb(o) \neq \slambda(\traj)$: The human's understanding of the observation is different from the true features of the corresponding trajectory.
\end{itemize}

How do these assumptions impact the return function inference?
Naively, if we did not properly account for the human's limitations, we might \emph{assume} that the human directly observes the full trajectory $\traj$ and assesses it correctly.
If we then try to learn a return function $\tilde{G}: \Traj \to \R$ and observe the human's feedback $G_{\Observations}(o)$, we make the inference $\tilde{G}(\traj) = G_{\Observations}(o) = \big\langle \featureb(o), R_{\Features} \big\rangle$, which can differ from the true return $G(\traj) = \big\langle \slambda(\traj), R_{\Features} \big\rangle$ since $\featureb(o) \neq \slambda(\traj)$.
Now, assume there is a feature $\feature \in \Features$ that is very bad ($R_{\Features}(\feature) \ll 0$) and present in the true trajectory ($\big[\slambda(\traj)\big](\feature) > 0$), but not in the human's belief of the corresponding observation ($\big[\featureb(o)\big](\feature) = 0$).
Then this feature leads to a large negative contribution to $G(\traj)$, but not to $G_{\Observations}(o)$, which can lead to an overestimation of the inferred return for $\traj$:
\begin{equation*}
  \tilde{G}(\traj) = \big\langle \featureb(o), R_{\Features} \big\rangle > \big\langle \slambda(\traj), R_{\Features} \big\rangle = G(\traj). 
\end{equation*}
Thus, in the policy optimization phase using the inferred return function $\tilde{G}$, $\traj$ may be positively reinforced, leading to a policy with potentially bad performance as judged by the true return function $G$.
This phenomenon is analyzed in a special case under the name \emph{deceptive inflation} in~\citet{Lang2024}, and might also lie at the heart of the empirical failures observed in~\citet{Amodei2017,Denison2024,Wen2024,Williams2024}.

In our work, we attempt to solve this problem by viewing $G_{\Observations}(o) = \big\langle \featureb(o), R_{\Features} \big\rangle$ \emph{not} as an estimate for the true return of a trajectory $\traj$, but instead as \emph{evidence} for the human's reward object $R_{\Features}$.
The idea is that we assume to \emph{know} the human's faulty belief $\featureb(o)$, which means that $G_{\Observations}(o) = \big\langle \featureb(o), R_{\Features} \big\rangle$ is a linear equation that can be used to narrow down the space of possibilities for $R_{\Features}$.
Once $R_{\Features}$ is sufficiently determined, and if $\slambda(\traj)$ is also known, we can then infer the true return function $G$ using the equation $G(\traj) = \big\langle \slambda(\traj), R_{\Features} \big\rangle$.
This approach complements other work on scalable oversight, which usually falls into the regime of \emph{amplified oversight}~\citep{leike2018scalableagentalignmentreward,Saunders2022,Irving2018}.
Such work attempts to \emph{improve} the human's ability to give accurate feedback, whereas our work attempts to learn the correct return function from potentially \emph{inaccurate} feedback by leveraging additional information in the form of a known model of the human's beliefs.
We then get closer to reality by relaxing the assumption of knowing the human's beliefs in Section~\Cref{sec:morphisms}.

\subsection{The definition of human belief models}\label{sec:def_human_models}

In order to infer the return function $G$ from the human's feedback $G_{\Observations}$, we assume the features $\Features$, ontology $\FLambda$, and feature belief function $\Featureb$ are all known;
additionally, we assume we may have a priori knowledge of a vector subspace of \textbf{valid reward objects} $\Valid \subseteq \R^{\Features}$ in which $R_{\Features}$ resides: $R_{\Features} \in \Valid$.
This may come from any a priori knowledge, e.g. of certain symmetries in the environment that leave rewards invariant (cf.~\Cref{ex:symmetries,sec:equivariance_made_concrete}).
This leads to the following notion:

\begin{definition}[Human belief model, representing]
  \label{def:feedback_model}
  Let $\Traj$ be the set of trajectories in an MDP and $\Observations$ the fixed set of observations that a human evaluator receives.
  Then a \textbf{human belief model} (or \emph{belief model}, or \emph{model}, for short) is a tuple $\Model = (\Features, \FLambda, \Featureb, \Valid)$, where:
  \begin{itemize}
    \item $\Features$ is a set called feature set;
    \item $\FLambda: \R^{\Features} \to \R^{\Traj}$ is a linear function, called ontology;
    \item $\Featureb: \R^{\Features} \to \R^{\Observations}$ is a linear function, called feature belief function;
    \item and $\Valid \subseteq \R^{\Features}$ is a vector subspace, called space of valid reward objects.
  \end{itemize}
  A human belief model $\Model$ \textbf{represents} the true return function $G: \Traj \to \R$ and observation return function $G_{\Observations}: \Observations \to \R$ if there is a reward object $R_{\Features} \in \Valid$ with $G = \FLambda(R_{\Features})$ and $G_{\Observations} = \Featureb(R_{\Features})$.
\end{definition}

When appropriate, we will also use the representation $\slambda: \Traj \to \R^{\Features}$ with $\big[\slambda(\traj)\big](\feature) = \FLambda_{\traj \feature}$ and $\featureb: \Observations \to \R^{\Features}$ with $\big[ \featureb(o) \big](\feature) = \Featureb_{o \feature}$ of the ontology and feature belief function, respectively (cf.~\Cref{pro:map_linear_correspondence}).
We visualize a model $\Model = (\Features, \FLambda, \Featureb, \Valid)$ as
\begin{equation*}
  \begin{tikzcd}
    & & \R^{\Traj} & & & & & & \R^{\Traj} \\
    \substack{ \R^{\Features} \\ \rotatebox{90}{$\subseteq$} \\ \Valid }\ar[rru, "\FLambda"] \ar[rrd, "\Featureb"'] & & & & \text{or} & & \R^{\Features} \ar[rru, "\FLambda"] \ar[rrd, "\Featureb"']
     \\
     & & \R^{\Observations} & & & & & & \R^{\Observations},
   \end{tikzcd}
\end{equation*}
where we use the latter version in the special case that $\Valid = \R^{\Features}$.

\begin{remark}
  [\emph{Which} human belief model?]
  \label{rem:which_model}
  A priori, there can be \emph{many} human belief models $\widehat{\Model} = (\widehat{\Features}, \widehat{\FLambda}, \widehat{\Featureb}, \widehat{\Valid})$ that can represent the true return function $G$ and the human's feedback $G_{\Observations}$.
  For example, in~\Cref{sec:equivariance_made_concrete} we will discuss a situation with three different models, and in~\Cref{app:diagram} one with many more. 
  In fact, there is always a trivial belief model that can represent $G$ and $G_{\Observations}$:
  View $\R$ as the $\R$-vectorspace $\R^{\{\star\}}$ of functions from a one-element set $\{\star\}$ to $\R$.
  Associate to $G: \Traj \to \R = \R^{\{\star\}}$ and $G_{\Observations}: \Observations \to \R = \R^{\{\star\}}$ via~\Cref{pro:map_linear_correspondence} the linear functions $\lin(G): \R^{\{\star\}} \to \R^{\Traj}$ and $\lin(G_{\Observations}): \R^{\{\star\}} \to \R^{\Observations}$ given by
  \begin{equation*}
    \big[\lin(G)\big](r \cdot e_{\star}) \coloneqq r \cdot G, \quad \big[ \lin(G_{\Observations}) \big](r \cdot e_{\star}) \coloneqq r \cdot G_{\Observations}
  \end{equation*}
  for $r \in \R$ and the only basis vector $e_{\star}$.
  Then, define the human belief model 
  \begin{equation*}
    \Model \coloneqq \big(\{\star\}, \ \lin(G), \ \lin(G_{\Observations}), \ \R^{\{\star\}}\big)
  \end{equation*}
  with the feature set $\{\star\}$.
  This represents $G$ and $G_{\Observations}$ with the reward object $e_{\star} \in \R^{\{\star\}}$ since $\big[\lin(G)\big](e_{\star}) = G$ and $\big[\lin(G_{\Observations})\big](e_{\star}) = G_{\Observations}$.
  Intuitively, the feature $\star$ then means ``goodness'', the ``ontology'' $G: \Traj \to \R^{\{\star\}}$ measures the extent to which ``goodness'' is present in a trajectory, and the ``feature belief function'' $G_{\Observations}: \Observations \to \R^{\{\star\}}$ measures how much goodness the human believes to be present in an observation.
  This trivial model is not very interesting for our purposes since assuming that the human belief model is known would then presuppose knowledge of $G$ itself --- which is the return function that we want to \emph{infer} based on feedback $G_{\Observations}$.

  A belief model $\Model = (\Features, \FLambda, \Featureb, \Valid)$ is more interesting  if $\Features$ consists of a variety of meaningful concepts such that $G$ and $G_{\Observations}$ decompose linearly over these features, in such a way that:
  \begin{enumerate}[(i)]
    \item It is easier to know the model $\Model$ than to know the return function $G$;
    \item \emph{Given} $\Model$ and $G_{\Observations}$, it is easy to determine $G$.
  \end{enumerate}
  For the rest of this section, we imagine to know the ``true'' belief model $\Model$, which we assume to have these properties.
  Note that these are \emph{philosophical} assumptions that guide how we talk about $\Model$ and how we imagine its application.
  But \emph{mathematically}, all of our claims here and in the upcoming sections are correct whenever $\Model$ \emph{represents} $G$ and $G_{\Observations}$.

  In~\Cref{sec:morphisms}, we then relax the requirement (i) of ``knowledge'' of $\Model$, and specifically in~\Cref{sec:in_practice} we make a proposal for model specification using foundation models.
We then also show how $G$ could in principle be learned by learning $G_{\Observations}$ via supervised learning.
\end{remark}

\begin{example}\label{ex:old_paper_model}
  Assume the setting from~\Cref{ex:classical_MDP}, in which $\Features = \states \times \actions \times \states$ and $\FLambda = \FGamma: \R^{\states \times \actions \times \states} \to \R^{\Traj}$.
  In this example we add a feature belief function to specify a full human belief model.

  Assume the human comes equipped with a belief function $b: \Observations \to \R^{\Traj}$ that assigns a \emph{probability distribution over trajectories} $b(o) \in \Delta(\Traj) \subseteq \R^{\Traj}$ to each observation $o \in \Observations$.
  For example, this could be a posterior belief if $o$ is sampled by first sampling a trajectory $\traj$ according to some policy and then computing the observation as $o = O(\traj)$ for an obervation function $O: \Traj \to \Observations$.
  Then we assume that the feature belief function $\featureb: \Observations \to \R^{\states \times \actions \times \states}$ computes, for each $o \in \Observations$ and $(s, a, s') \in \states \times \actions \times \states$, the expected discounted number of times that the transition $(s, a, s')$ appears in the trajectory $\traj$ that gave rise to observation $o$:
 \begin{equation*}
   \big[ \featureb(o) \big](s, a, s') \coloneqq \sum_{\traj \in \Traj} \big[ b(o) \big](\traj) \cdot \FGamma_{\traj, (s, a, s')}.
 \end{equation*}
 We now determine the matrix form of $\featureb$.
 Let $\bp: \R^{\Traj} \to \R^{\Observations}$ be the linear function given by $\big[ \bp(G) \big](o) \coloneqq \big\langle b(o), G \big\rangle$, which has matrix elements $\bp_{o\traj} = \big[ b(o) \big](\traj)$ by~\Cref{pro:map_linear_correspondence}.
 We obtain
 \begin{equation*}
   \Featureb_{o, (s, a, s')} = \big[ \featureb(o) \big](s, a, s') = \sum_{\traj \in \Traj} \bp_{o \traj} \FGamma_{\traj,(s, a, s')} = (\bp \circ \FGamma)_{o,(s, a, s')}.
 \end{equation*}
 Thus, $\Featureb = \bp \circ \FGamma$.

 Finally, assume that the set of valid reward objects is simply given by $\Valid = \R^{\states \times \actions \times \states}$.
 Then, in total, our model is given by $(\Features, \FLambda, \Featureb, \Valid) = (\states \times \actions \times \states, \ \FGamma, \ \bp \circ \FGamma, \ \R^{\states \times \actions \times \states})$:
\begin{equation*}
  \begin{tikzcd}
    & & \R^{\Traj} \\
    \R^{\states \times \actions \times \states} 
    \ar[rru, "\FGamma", bend left = 15] \ar[rrd, "\bp \circ \FGamma"', bend right = 15] \\
    & & \R^{\Observations}.
  \end{tikzcd}
\end{equation*}

One can compare this with~\citet{Lang2024} and realize that our result coincides with their belief model, thus showing that we generalize their work.
\end{example}

\subsection{Complete belief models and the ambiguity}\label{sec:ambiguity}

In this whole subsection, we fix an MDP with trajectories $\Traj$, observations $\Observations$, and corresponding return function $G \in \R^{\Traj}$ and observation return function $G_{\Observations} \in \R^{\Observations}$.
We also fix a human belief model $\Model = (\Features, \FLambda, \Featureb, \Valid)$ that represents $G$ and $G_{\Observations}$ with an implicit reward object $R_{\Features} \in \Valid$: $\FLambda(R_{\Features}) = G$ and $\Featureb(R_{\Features}) = G_{\Observations}$.
We can visualize the model together with the reward object and return functions as follows:
\begin{equation*}
  \begin{tikzcd}
    & & G & & & & & &  \R^{\Traj}\\
   R_{\Features} \ar[rru, "\FLambda", mapsto] \ar[rrd, "\Featureb"', maps to]  & & & & \scalebox{1.5}{$\in$} & & \substack{ \R^{\Features} \\ \rotatebox{90}{$\subseteq$} \\ \Valid }\ar[rru, "\FLambda"] \ar[rrd, "\Featureb"']
     \\
     & & G_{\Observations} & & & & & & \R^{\Observations}.
   \end{tikzcd}
\end{equation*}

We are concerned with the following question:
\begin{question}\label{qu:core_question}
  Assume the human belief model $\Model$ and the human feedback, in the form of the observation return function $G_{\Observations}$, are known.
  Under what conditions, or to what extent, can we infer the return function $G$?
\end{question}

In~\Cref{sec:morphisms}, we will relax the assumption that the belief model is known precisely.

The idea for the answer is as follows:
When trying to infer $G$ from $G_{\Observations}$, we make use of the knowledge that they are connected by the unknown reward object $R_{\Features} \in \Valid$.
Thus, one can first try to determine a reward object $\tilde{R}_{\Features} \in \Valid$ with the correct observation return function $\Featureb(\tilde{R}_{\Features}) = G_{\Observations}$.
Then the question is whether the corresponding return function $\FLambda(\tilde{R}_{\Features})$ equals the true return function $G$.
They differ by the return function $G' = \FLambda(\tilde{R}_{\Features}) - G$.
The \emph{ambiguity} will then be defined as the set of all these differences, and the question will be how to characterize it and when it vanishes, leading to the notion of a \emph{complete} human belief model.

\begin{definition}[Feedback-compatible, ambiguity]
  \label{def:ambiguity} 
  We define the set of return functions that are \textbf{feedback-compatible} with $G_{\Observations}$ as
  \begin{equation*}
    \FC^{\Model}(G_{\Observations}) \coloneqq \Big\lbrace \tilde{G} \in \R^{\Traj} \ \big| \ \exists \tilde{R}_{\Features} \in \Valid \colon \Featureb(\tilde{R}_{\Features}) = G_{\Observations} \text{ and } \FLambda(\tilde{R}_{\Features}) = \tilde{G} \Big\rbrace.
  \end{equation*}
We define the \textbf{ambiguity} left in the return function $G$ after the observation return function $G_{\Observations}$ is known by
  \begin{equation*}
    \Amb^{\Model}(G, G_{\Observations}) \coloneqq
  \left\lbrace  G' \in \R^{\Traj} \ \big| \ G' = \tilde{G} - G \text{ for } \tilde{G} \in \FC^{\Model}(G_{\Observations})  \right\rbrace.
  \end{equation*}
\end{definition}

Then clearly, we have
\begin{equation*}
  \FC^{\Model}(G_{\Observations}) = G + \Amb^{\Model}(G, G_{\Observations}).
\end{equation*}

This leaves open the question of how to characterize the ambiguity and when it vanishes.

\begin{proposition}[Ambiguity characterization]
  \label{pro:ambiguities_characterizations}
  We have
  \begin{equation*}
    \Amb^{\Model}(G, G_{\Observations}) = \FLambda(\ker(\Featureb) \cap \Valid).
  \end{equation*}
\end{proposition}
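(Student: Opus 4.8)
The plan is to prove equality of two sets by double inclusion, unwinding the definitions and exploiting linearity of $\FLambda$ and $\Featureb$. Recall from \Cref{def:ambiguity} that $\Amb^{\Model}(G, G_{\Observations})$ consists of all differences $G' = \tilde{G} - G$ where $\tilde{G} = \FLambda(\tilde{R}_{\Features})$ for some $\tilde{R}_{\Features} \in \Valid$ satisfying $\Featureb(\tilde{R}_{\Features}) = G_{\Observations}$. Since the model represents $G$ and $G_{\Observations}$, we have a fixed reference reward object $R_{\Features} \in \Valid$ with $\FLambda(R_{\Features}) = G$ and $\Featureb(R_{\Features}) = G_{\Observations}$. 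The key idea is that the difference $\tilde{R}_{\Features} - R_{\Features}$ is the natural object to track: it lands in $\ker(\Featureb) \cap \Valid$, and $\FLambda$ maps it to $G'$.

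For the forward inclusion $\Amb^{\Model}(G, G_{\Observations}) \subseteq \FLambda(\ker(\Featureb) \cap \Valid)$, I would take an arbitrary $G' = \FLambda(\tilde{R}_{\Features}) - G$ with $\tilde{R}_{\Features} \in \Valid$ and $\Featureb(\tilde{R}_{\Features}) = G_{\Observations}$. Set $R' \coloneqq \tilde{R}_{\Features} - R_{\Features}$. Then $R' \in \Valid$ because $\Valid$ is a vector subspace and both $\tilde{R}_{\Features}, R_{\Features} \in \Valid$. By linearity of $\Featureb$, we get $\Featureb(R') = \Featureb(\tilde{R}_{\Features}) - \Featureb(R_{\Features}) = G_{\Observations} - G_{\Observations} = 0$, so $R' \in \ker(\Featureb)$. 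Hence $R' \in \ker(\Featureb) \cap \Valid$. Finally, by linearity of $\FLambda$, $\FLambda(R') = \FLambda(\tilde{R}_{\Features}) - \FLambda(R_{\Features}) = \FLambda(\tilde{R}_{\Features}) - G = G'$, exhibiting $G'$ as an element of $\FLambda(\ker(\Featureb) \cap \Valid)$.

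For the reverse inclusion, I would start from $G' = \FLambda(R')$ with $R' \in \ker(\Featureb) \cap \Valid$ and set $\tilde{R}_{\Features} \coloneqq R_{\Features} + R'$. Again $\tilde{R}_{\Features} \in \Valid$ since $\Valid$ is a subspace. Then $\Featureb(\tilde{R}_{\Features}) = \Featureb(R_{\Features}) + \Featureb(R') = G_{\Observations} + 0 = G_{\Observations}$, so $\tilde{R}_{\Features}$ witnesses feedback-compatibility; and $\FLambda(\tilde{R}_{\Features}) = G + \FLambda(R') = G + G' \eqqcolon \tilde{G}$, so $\tilde{G} \in \FC^{\Model}(G_{\Observations})$ and $G' = \tilde{G} - G \in \Amb^{\Model}(G, G_{\Observations})$.

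The proof is essentially a routine affine-shift argument, so there is no serious obstacle; the only point requiring care is that the two inclusions genuinely rely on $\Valid$ being a linear subspace (closed under both differences and sums with $R_{\Features}$) rather than an arbitrary affine set, and that the fixed representing reward object $R_{\Features}$ furnishes the basepoint translating between the affine set $\FC^{\Model}(G_{\Observations})$ and the linear image $\FLambda(\ker(\Featureb) \cap \Valid)$. I would make sure to invoke the representation hypothesis explicitly at the start, since without a valid $R_{\Features}$ the set $\FC^{\Model}(G_{\Observations})$ could in principle be empty while the right-hand side always contains $0$.
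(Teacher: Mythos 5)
Your proof is correct and follows essentially the same route as the paper's: both directions are handled by the same affine shift, using $R' = \tilde{R}_{\Features} - R_{\Features}$ for the forward inclusion and $\tilde{R}_{\Features} = R_{\Features} + R'$ for the reverse, with linearity of $\Featureb$ and $\FLambda$ doing all the work. Your closing remarks on $\Valid$ being a subspace and on the representing reward object $R_{\Features}$ (fixed in the paper's setup at the start of the subsection) are accurate but add nothing beyond the paper's argument.
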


\begin{proof}
  For one direction, let $G' \in \Amb^{\Model}(G, G_{\Observations})$.
  Then $G' = \FLambda(\tilde{R}_{\Features}) - G$ for $\tilde{R}_{\Features} \in \Valid$ with $\Featureb(\tilde{R}_{\Features}) = G_{\Observations}$.
  By the linearity of $\Featureb$ we obtain 
  \begin{equation*}
    \Featureb(\tilde{R}_{\Features} - R_{\Features}) = \Featureb(\tilde{R}_{\Features}) - G_{\Observations} = 0
  \end{equation*}
  and thus $\tilde{R}_{\Features} - R_{\Features} \in \ker(\Featureb) \cap \Valid$.
  Since $\FLambda$ is linear, we obtain 
  \begin{equation*}
    G' = \FLambda(\tilde{R}_{\Features}) - G = \FLambda(\tilde{R}_{\Features}) - \FLambda(R_{\Features}) = \FLambda(\tilde{R}_{\Features} - R_{\Features}) \in \FLambda(\ker(\Featureb) \cap \Valid).
  \end{equation*}
  For the other direction, let $G' \in \FLambda(\ker(\Featureb) \cap \Valid)$.
  Then $G' = \FLambda(R'_{\Features})$ for $R'_{\Features} \in \ker(\Featureb) \cap \Valid$.
  Define $\tilde{R}_{\Features} \coloneqq R_{\Features} + R'_{\Features} \in \Valid$.
  By the linearity of $\Featureb$ and by $R'_{\Features} \in \ker(\Featureb)$ we obtain
  \begin{equation*}
    \Featureb(\tilde{R}_{\Features}) = \Featureb(R_{\Features}) + \Featureb(R'_{\Features}) = G_{\Observations} + \Featureb(R'_{\Features}) = G_{\Observations}. 
  \end{equation*}
  Finally, the linearity of $\FLambda$ shows
  \begin{equation*}
    G' = \FLambda(R'_{\Features}) = \FLambda(\tilde{R}_{\Features} - R_{\Features}) = \FLambda(\tilde{R}_{\Features}) - \FLambda(R_{\Features}) = \FLambda(\tilde{R}_{\Features}) - G.
  \end{equation*}
  This shows $G' \in \Amb^{\Model}(G, G_{\Observations})$.
\end{proof}

See~\Cref{pro:ambiguity_balanced} for a version of the preceding proposition where the feedback is given by a choice probability function instead of $G_{\Observations}$.
See~\Cref{app:resulting_diagram} for the ambiguities of many human belief models.

\begin{remark}
  \label{rem:ambiguities_simplification}
  In light of the previous proposition, it turns out that the ambiguity does not depend on the true return function and observation return function, and we can thus simply write it as $\Amb^{\Model}$.
\end{remark}

\begin{example}\label{ex:clarifying_original_identifiability_theorem}
  We continue~\Cref{ex:old_paper_model}, with the model given by $\Model = (\states \times \actions \times \states, \FGamma, \bp \circ \FGamma, \R^{\states \times \actions \times \states})$.
  \Cref{pro:ambiguities_characterizations} implies that the ambiguity is given by $\Amb^{\Model} = \FGamma(\ker(\bp \circ \FGamma)) = \ker(\bp) \cap \im(\FGamma)$.
  Interested readers can find characterizations of this ambiguity in examples and special cases in~\citet{Lang2024}. 
  See~\Cref{sec:examples,sec:equivariance_made_concrete} for self-contained examples of ambiguity characterizations in our work.
\end{example}

It is important to know when the ambiguity disappears, which is captured as \emph{completeness} in the following definition.

\begin{definition}[Completeness]
  \label{def:faithful_complete}
  We call the human belief model $\Model = (\Features, \FLambda, \Featureb, \Valid)$ \textbf{complete} if $\ker(\Featureb) \cap \Valid \subseteq \ker(\FLambda) \cap \Valid$, i.e. if $\Amb^{\Model} = \FLambda(\ker(\Featureb) \cap \Valid) = 0$ (cf.~\Cref{pro:ambiguities_characterizations}).
\end{definition}

We will explain in Interpretation~\ref{int:ambiguity_zero_interpretation} why we call this property completeness.
We find equivalent and sufficient conditions of completeness in the following theorem:

\begin{theorem}[Completeness characterization]
  \label{thm:characterization_complete}
  Remember the representations $\slambda: \Traj \to \R^{\Features}$ and $\featureb: \Observations \to \R^{\Features}$ of the ontology $\FLambda: \R^{\Features} \to \R^{\Traj}$ and feature belief function $\Featureb: \R^{\Features} \to \R^{\Observations}$, respectively.
  Consider the following four statements:
  \begin{enumerate}
    \item $\Model$ is complete.
    \item There exists a linear function $Z: \R^{\Observations} \to \R^{\Traj}$ with $\FLambda|_{\Valid} = Z \circ \Featureb|_{\Valid}$.
    \item There exists a linear function $Z: \R^{\Observations} \to \R^{\Traj}$ with $\FLambda = Z \circ \Featureb$. 
    \item For all $\traj \in \Traj$, $\slambda(\traj) \in \R^{\Features}$ is contained in the span of the image of $\featureb$: $\slambda(\traj) \in \Big\lbrace \sum_{o \in \Observations} Z_{o} \featureb(o) \mid Z_o \in \R \Big\rbrace$. 
  \end{enumerate}
  Then $1$ and $2$ are equivalent, and $3$ and $4$ are equivalent and imply $1$ and $2$:
  \begin{equation*}
    \begin{tikzcd}
      1 \ar[r, Rightarrow, bend left] & 2 \ar[l, Rightarrow, bend left] & 3 \ar[l, Rightarrow] \ar[r, Rightarrow, bend left] & 4 \ar[l, Rightarrow, bend left]
    \end{tikzcd}
  \end{equation*}
  Furthermore, the linear function $Z$ from $2$ and $3$ satisfies $G = Z(G_{\Observations})$.
  Finally, if $\Valid = \R^{\Features}$, then all four statements are equivalent.
\end{theorem}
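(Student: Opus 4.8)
The plan is to prove the implications in the order $1 \Leftrightarrow 2$, then $3 \Leftrightarrow 4$, then $3 \Rightarrow 2$, and finally to read off the formula for $Z$ and handle the degenerate case $\Valid = \R^{\Features}$. Conceptually, the whole theorem is a repackaging of the universal property of linear maps: a linear map $\FLambda$ factors through another linear map $\Featureb$ precisely when $\ker(\Featureb) \subseteq \ker(\FLambda)$, and completeness is exactly this kernel containment restricted to $\Valid$. Statements $3$ and $4$ are the same factorization phrased ``row by row'' without the restriction to $\Valid$, which is why they are strictly stronger.

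For $1 \Leftrightarrow 2$ I would argue directly. The direction $2 \Rightarrow 1$ is immediate: if $\FLambda|_{\Valid} = Z \circ \Featureb|_{\Valid}$ and $v \in \ker(\Featureb) \cap \Valid$, then $\FLambda(v) = Z(\Featureb(v)) = Z(0) = 0$, so $v \in \ker(\FLambda) \cap \Valid$, giving completeness. For $1 \Rightarrow 2$ I would build $Z$ by hand. Completeness says $\ker(\Featureb|_{\Valid}) \subseteq \ker(\FLambda|_{\Valid})$, so the assignment $\Featureb(v) \mapsto \FLambda(v)$ is well defined on the subspace $\im(\Featureb|_{\Valid}) \subseteq \R^{\Observations}$: whenever $\Featureb(v_1) = \Featureb(v_2)$ with $v_1, v_2 \in \Valid$, the difference lies in $\ker(\Featureb|_{\Valid}) \subseteq \ker(\FLambda|_{\Valid})$, forcing $\FLambda(v_1) = \FLambda(v_2)$. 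This defines a linear map on $\im(\Featureb|_{\Valid})$, which I then extend arbitrarily (for instance by zero on a linear complement) to a linear map $Z \colon \R^{\Observations} \to \R^{\Traj}$ satisfying $\FLambda|_{\Valid} = Z \circ \Featureb|_{\Valid}$ by construction.

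For $3 \Leftrightarrow 4$ I would translate the global matrix identity $\FLambda = Z \circ \Featureb$ into its rows. Using the row-composition identity~\Cref{eq:row_matrix_correspondence} together with the correspondences $\FLambda_{\traj} = \slambda(\traj)$ and $\Featureb_{o} = \featureb(o)$ (\Cref{pro:map_linear_correspondence}), the equation $\FLambda = Z \circ \Featureb$ reads, for each $\traj \in \Traj$, as $\slambda(\traj) = \sum_{o \in \Observations} Z_{\traj o}\,\featureb(o)$. This is exactly the claim that each $\slambda(\traj)$ lies in the span of the image of $\featureb$, so statement $4$ holds iff suitable coefficients $Z_{\traj o}$ exist, i.e. iff statement $3$ holds; the coefficients witnessing $4$ assemble directly into the matrix of $Z$ in $3$, and conversely. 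The implication $3 \Rightarrow 2$ is then mere restriction to $\Valid$ with the same $Z$. The formula $G = Z(G_{\Observations})$ follows by feeding the implicit reward object $R_{\Features} \in \Valid$ into the factorization: since $G = \FLambda(R_{\Features})$, $G_{\Observations} = \Featureb(R_{\Features})$, and $\FLambda|_{\Valid} = Z \circ \Featureb|_{\Valid}$ (which holds under $2$, hence also under $3$), we get $G = Z(\Featureb(R_{\Features})) = Z(G_{\Observations})$. Finally, when $\Valid = \R^{\Features}$ the restrictions $\FLambda|_{\Valid}$ and $\Featureb|_{\Valid}$ are literally $\FLambda$ and $\Featureb$, so statements $2$ and $3$ coincide, and all four become equivalent.

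The main obstacle I expect is the $1 \Rightarrow 2$ construction: one must be careful that $Z$ is only pinned down on $\im(\Featureb|_{\Valid})$, and the extension to the rest of $\R^{\Observations}$ is a genuine (though harmless) choice — this is also the source of the non-uniqueness of $Z$ in general. The remaining steps are bookkeeping with rows, restrictions, and the definitions of $G$ and $G_{\Observations}$.
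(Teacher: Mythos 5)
Your proposal is correct and follows essentially the same route as the paper: the paper also reduces $1 \Leftrightarrow 2$ to the kernel-containment factorization lemma (its~\Cref{pro:kernel_factorization}, proved via a basis of the image, which your well-definedness-plus-extension construction inlines directly), proves $3 \Leftrightarrow 4$ by the same row/matrix-element translation, and obtains $G = Z(G_{\Observations})$ by feeding $R_{\Features}$ through the factorization. The only cosmetic difference is that you construct $Z$ explicitly where the paper cites its appendix lemma, so no gap remains.
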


\begin{proof}
  Note that $\ker(\FLambda) \cap \Valid = \ker(\FLambda|_{\Valid})$ and $\ker(\Featureb) \cap \Valid = \ker(\Featureb|_{\Valid})$.
  Thus, the equivalence of statements $1$ and $2$ immediately follow from~\Cref{pro:kernel_factorization}.
  That $3$ implies $2$ is obvious.

  Now assume statement $4$.
  For all $\traj \in \Traj$ and $o \in \Observations$, let $Z_{\traj o} \in \R$ be coefficients with
  \begin{equation*}
    \slambda(\traj) = \sum_{o \in \Observations} Z_{\traj o} \featureb(o). 
  \end{equation*}
  This implies
  \begin{equation*}
    \FLambda_{\traj \feature} = \sum_{o \in \Observations} Z_{\traj o} \Featureb_{o \feature}. 
  \end{equation*}
  Define the linear function $Z: \R^{\Observations} \to \R^{\Traj}$ to have the matrix elements $Z_{\traj o}$.
  Then the previous equation implies $\FLambda = Z \circ \Featureb$, proving statement $3$.
  That $3$ implies $4$ follows by reversing these arguments.

  Assume statement $2$.
  Then
  \begin{equation*}
    G = \FLambda|_{\Valid}(R_{\Features}) = \big(Z \circ \Featureb|_{\Valid} \big) (R_{\Features}) = Z\big(\Featureb(R_{\Features})\big) = Z(G_{\Observations}).
  \end{equation*}
  That all statements are equivalent if $\Valid = \R^{\Features}$ is clear.
\end{proof}

Interestingly, for complete models, the preceding proposition shows that $G = Z(G_{\Observations})$ for a linear function $Z$ that only depends on the human belief model $\Model$, thus showing once more that we can infer $G$ from $G_{\Observations}$ if the model is complete.
Since $Z$ may be impractical to find, it may however be more useful to determine $G$ as the unique feedback-compatible return function by first finding $\tilde{R}_{\Features}$ with $\Featureb(\tilde{R}_{\Features}) = G_{\Observations}$.

\begin{interpretation}\label{int:ambiguity_zero_interpretation} 
  Overall, we have thus answered Question~\ref{qu:core_question}:
When the human belief model is known, then $G$ can be inferred from $G_{\Observations}$ up to the ambiguity $\Amb^{\Model} = \FLambda(\ker(\Featureb) \cap \Valid)$, which vanishes if the model $\Model$ is complete.
Looking back at the definition of feedback-compatible return functions, $G$ is then determined as $G = \FLambda(\tilde{R}_{\Features})$ for any $\tilde{R}_{\Features} \in \Valid$ that gives rise to the human's feedback: $\Featureb(\tilde{R}_{\Features}) = G_{\Observations}$.
For completeness, we then found further equivalent and sufficient conditions in~\Cref{thm:characterization_complete}.

  We now explain why we chose the name \emph{completeness}. 
  By~\Cref{thm:characterization_complete}, statement 4, a sufficient condition for completeness is the existence of coefficients $Z_{\traj o} \in \R$ such that for all $\traj \in \Traj$, we can write $\slambda(\traj)$ as a linear combination of the $\featureb(o)$:  
  \begin{equation*}
    \slambda(\traj) = \sum_{o \in \Observations} Z_{\traj o} \featureb(o).
  \end{equation*}
  This means that the observation-dependent feature beliefs $\featureb(o)$ span the true feature strengths of trajectories.
  In this sense, the belief model is ``complete'': 
  The human's interpretations of observations sufficiently entail true features of trajectories in the MDP.
  It is thus not a surprise that completeness implies that the ambiguity disappears, and thus that we can infer the return function from the observation return function $G_{\Observations}$. 
\end{interpretation}

\subsection{Faithful belief models}\label{sec:faithfulness}

Let again $\Model = (\Features, \FLambda, \Featureb, \Valid)$ be a human belief model for an MDP together with trajectories $\Traj$ and observations $\Observations$, and let $R_{\Features} \in \Valid$ with $G = \FLambda(R_{\Features})$, $G_{\Observations} = \Featureb(R_{\Features})$.
Here, we briefly study the notion of faithfulness of human belief models, which is dual to completeness.
It captures the idea that any two reward objects that cannot be distinguished by their return functions should also not differ in their observation return functions if the human's feedback reflects a belief over the actual trajectories in the MDP.
We now define the notion and then come back to this interpretation in Interpretation~\ref{int:faithfulness_interp}.

\begin{definition}[Faithfulness]
  \label{def:faithfulness}
  We call the human belief model $\Model$ \textbf{faithful} if $\ker(\FLambda) \cap \Valid \subseteq \ker(\Featureb) \cap \Valid$.
\end{definition}

Many human belief models we study in this paper are faithful.
The model from~\Cref{ex:clarifying_original_identifiability_theorem} is faithful since $\ker(\FGamma) \subseteq \ker(\bp \circ \FGamma)$.
All models from~\Cref{sec:equivariance_made_concrete} are faithful.
In~\Cref{app:various_feedback_models} we study several more faithful human belief models.
We can find a characterization of faithfulness that is completely dual to~\Cref{thm:characterization_complete}:

\begin{proposition}[Faithfulness characterization]
  \label{pro:faithfulness_charac}
  Remember the representations $\slambda: \Traj \to \R^{\Features}$ and $\featureb: \Observations \to \R^{\Features}$ of the ontology $\FLambda: \R^{\Features} \to \R^{\Traj}$ and feature belief function $\Featureb: \R^{\Features} \to \R^{\Observations}$, respectively.
  Consider the following four statements:
  \begin{enumerate}
    \item $\Model$ is faithful.
    \item There exists a linear function $Y: \R^{\Traj} \to \R^{\Observations}$ with $\Featureb|_{\Valid} = Y \circ \FLambda|_{\Valid}$.
    \item There exists a linear function $Y: \R^{\Traj} \to \R^{\Observations}$ with $\Featureb = Y \circ \FLambda$. 
    \item For all $o \in \Observations$, $\featureb(o) \in \R^{\Features}$ is contained in the span of the image of $\slambda$: $\featureb(o) \in \Big\lbrace \sum_{\traj \in \Traj} Y_\traj \slambda(\traj) \mid Y_\traj \in \R \Big\rbrace$. 
  \end{enumerate}
  Then $1$ and $2$ are equivalent, and $3$ and $4$ are equivalent and imply $1$ and $2$:
  \begin{equation*}
    \begin{tikzcd}
      1 \ar[r, Rightarrow, bend left] & 2 \ar[l, Rightarrow, bend left] & 3 \ar[l, Rightarrow] \ar[r, Rightarrow, bend left] & 4 \ar[l, Rightarrow, bend left]
    \end{tikzcd}
  \end{equation*}
  Furthermore, the linear function $Y$ from $2$ and $3$ satisfies $G_{\Observations} = Y(G)$.
  Finally, if $\Valid = \R^{\Features}$, then all four statements are equivalent.
\end{proposition}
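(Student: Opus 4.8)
The plan is to mirror the proof of~\Cref{thm:characterization_complete} essentially line by line, exploiting the fact that faithfulness is obtained from completeness by interchanging the two legs of the model: the ontology $\FLambda$ and the feature belief function $\Featureb$ swap roles, as do the trajectory side $\Traj$ and the observation side $\Observations$, the representations $\slambda$ and $\featureb$, and the factoring maps $Z$ and $Y$. Completeness was the kernel inclusion $\ker(\Featureb|_{\Valid}) \subseteq \ker(\FLambda|_{\Valid})$, whereas faithfulness is the reversed inclusion $\ker(\FLambda|_{\Valid}) \subseteq \ker(\Featureb|_{\Valid})$, so every step of that earlier proof transfers under this symmetry.

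First I would record, exactly as before, that $\ker(\FLambda) \cap \Valid = \ker(\FLambda|_{\Valid})$ and $\ker(\Featureb) \cap \Valid = \ker(\Featureb|_{\Valid})$, so that faithfulness reads $\ker(\FLambda|_{\Valid}) \subseteq \ker(\Featureb|_{\Valid})$. The equivalence of statements $1$ and $2$ then follows immediately from the kernel-factorization result~\Cref{pro:kernel_factorization}, now applied to factor $\Featureb|_{\Valid}$ through $\FLambda|_{\Valid}$ rather than the other way around: a linear $Y$ with $\Featureb|_{\Valid} = Y \circ \FLambda|_{\Valid}$ exists precisely when $\ker(\FLambda|_{\Valid}) \subseteq \ker(\Featureb|_{\Valid})$. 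The implication $3 \Rightarrow 2$ is again immediate by restricting to $\Valid$.

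Next I would establish $4 \Leftrightarrow 3$ by the same matrix-element computation. Assuming $4$, for each $o \in \Observations$ I choose coefficients $Y_{o\traj} \in \R$ with $\featureb(o) = \sum_{\traj \in \Traj} Y_{o\traj}\, \slambda(\traj)$; reading off feature-coordinates gives $\Featureb_{o\feature} = \sum_{\traj} Y_{o\traj}\, \FLambda_{\traj\feature}$, so the linear map $Y \colon \R^{\Traj} \to \R^{\Observations}$ with these matrix elements satisfies $\Featureb = Y \circ \FLambda$, which is statement $3$; reversing the computation gives $3 \Rightarrow 4$. For the displayed consequence, assuming $2$ I would compute $G_{\Observations} = \Featureb|_{\Valid}(R_{\Features}) = (Y \circ \FLambda|_{\Valid})(R_{\Features}) = Y(\FLambda(R_{\Features})) = Y(G)$. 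Finally, when $\Valid = \R^{\Features}$ the restrictions $\FLambda|_{\Valid}$ and $\Featureb|_{\Valid}$ coincide with $\FLambda$ and $\Featureb$, collapsing the distinction between statements $2$ and $3$ (and between $1$ and the unrestricted kernel inclusion), so all four statements become equivalent.

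Since the argument is purely dual, I expect no genuine obstacle; the only point requiring care is checking that~\Cref{pro:kernel_factorization} is stated symmetrically enough to factor $\Featureb|_{\Valid}$ through $\FLambda|_{\Valid}$ --- i.e. that it governs factorization of an arbitrary linear map through another map out of the same domain, keyed solely on the containment of kernels --- so that it applies with the two legs interchanged. As long as that lemma is direction-agnostic in this sense, the faithfulness characterization follows essentially verbatim from the completeness proof.
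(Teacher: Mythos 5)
Your proposal is correct and matches the paper exactly: the paper's entire proof of this proposition is the remark that it is ``analogous to the one for~\Cref{thm:characterization_complete}'', and you have simply carried out that dualization explicitly --- swapping $\FLambda$ with $\Featureb$, $\Traj$ with $\Observations$, and $Z$ with $Y$. Your one point of care is also well-placed and resolves favorably:~\Cref{pro:kernel_factorization} concerns two linear maps out of a common domain and is keyed solely on kernel containment, so it applies verbatim with the two legs interchanged.
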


\begin{proof}
  The proof is analogous to the one for~\Cref{thm:characterization_complete}. 
\end{proof}

\begin{interpretation}\label{int:faithfulness_interp}
  By~\Cref{pro:faithfulness_charac}, a sufficient condition for faithfulness is the existence of coefficient $Y_{o \traj} \in \R$ such that for all $o \in \Observations$, we can write $\featureb(o)$ as a linear combination of the $\slambda(\traj)$:
  \begin{equation*}
    \featureb(o) = \sum_{\traj \in \Traj} Y_{o \traj} \slambda(\traj). 
  \end{equation*}
  We can suggestively interpret $Y_{o \traj}$ as the human's ``belief'' that the true trajectory $\traj$ was responsible for the observation $o$.\footnote{Though note that $Y$ is usually not unique and $Y_{o} \in \R^{\Traj}$ is usually not an actual probability distribution over $\traj$.}
  Under this interpretation, the feature strengths $\featureb(o)$ are like an ``expected value'' of true feature strengths, given the human's beliefs.
  Thus, the feature beliefs encoded by $\featureb$ are ``faithful'' to a belief over the actual MDP.
  This interpretation is especially valid for~\Cref{ex:old_paper_model}, where $Y$ is given by $\bp$, and thus $Y_{o \traj} = \bp_{o\traj} = \big[ b(o) \big](\traj)$ is the probability of the trajectory $\traj$, given $o$. 

  Faithfulness is also philosophically reasonable for a related reason.
  Assume $R_{\Features}, \tilde{R}_{\Features} \in \Valid$ are two reward objects with $\FLambda(R_{\Features}) = \FLambda(\tilde{R}_{\Features})$.
  Then the resulting return functions coincide in their evaluation of all trajectories.
  It would then be reasonable to assume that they also coincide in their evaluation of observations, insofar as observations give information about a state of affairs in the actual MDP: $\Featureb(R_{\Features}) = \Featureb(\tilde{R}_{\Features})$.
  This requires that $\FLambda(R_{\Features} - \tilde{R}_{\Features}) = 0$ implies $\Featureb(R_{\Features} - \tilde{R}_{\Features}) = 0$, which exactly means that $\ker(\FLambda) \cap \Valid \subseteq \ker(\Featureb) \cap \Valid$, i.e. the faithfulness of the model.
  However, since human evaluators do not necessarily have feature beliefs that are rational to this extent, and since this property is not needed in the rest of our theory, we do not assume faithfulness of our human belief models in the general theory.
\end{interpretation}

\subsection{Conceptual examples}\label{sec:examples}

We now present some simple conceptual examples to illustrate the theory, and especially the ambiguity.
We will not define entire MDPs but only those parts of the formalism that are necessary to reason about the ambiguity.
In all our examples, we assume $\Observations \subseteq \Traj$, i.e., observations are entire trajectories: They are fully observed.
This brings other factors than observability into the focus, like the human's \emph{understanding} of the trajectories, and whether there is enough data. 
We refer to~\Cref{sec:equivariance_made_concrete} for a concrete examples where the MDP is fully defined and a possibly remaining ambiguity stems from partial observability.
Interested readers can also find more such examples in~\citet{Lang2024}.

In all examples, we have a feature set $\Features = \{\feature_1, \feature_2, \feature_3, \feature_4\}$ whose meaning will vary in each scenario.
We also have observations $o_1, o_2, o_3, o_4, o_5$, though the observation set $\Observations \subseteq \Traj$ is sometimes only a subset of $\{o_1, o_2, o_3, o_4, o_5\}$.
We will make use of the feature belief function $\featureb: \Observations \to \R^{\Features} \cong \R^{4}$.
We represent each $\featureb(o)$ as a row-vector with entries $\featureb(o)_i = \big[\featureb(o)\big](\feature_i)$ as follows:
\begin{align}\label{eq:belief_vectors}
  \begin{split}
  \featureb(o_1) &= (1, 0, 0, 1)  \\
  \featureb(o_2) &= (0, 1, 0, 1)  \\
  \featureb(o_3) &= (0, 0, 1, 1)  \\
  \featureb(o_4) &= (0, 3, 0, 2)  \\
  \featureb(o_5) &= (0, 0, 0, 1). \\
  \end{split}
\end{align}
In all examples, we assume $\Traj$, $\Valid$, and $\FLambda$ (and thus overall $\Model = (\Features, \FLambda, \Featureb, \Valid)$) to also be known, though we do not make them explicit.

\begin{example}[Simple Completeness]\label{ex:subsampling} 
  The goal is to create children stories $\traj \in \Traj$ that appeal to a specific child Alice.
  We assume that it is known that children care about the presence or absence of exactly the following four features $\Features = \{\feature_1, \feature_2, \feature_3, \feature_4\}$:
  \begin{itemize}
    \item $\feature_1$: Rule-breaking.
    \item $\feature_2$: Non-linear story-telling.
    \item $\feature_3$: Scariness.
    \item $\feature_4$: Moral lessons.
  \end{itemize}
  We show Alice four stories $\Observations = \{o_1, o_2, o_3, o_4\} \subseteq \Traj$ that give rise to this feature belief function (cf.~\Cref{eq:belief_vectors}):
  \begin{equation}\label{eq:full_matrix}
    \Featureb: \R^{\Features} \to \R^{\Observations}, \quad \Featureb = 
    \begin{pmatrix}
      1 & 0 & 0 & 1 \\
      0 & 1 & 0 & 1 \\
      0 & 0 & 1 & 1 \\
      0 & 3 & 0 & 2
    \end{pmatrix}.
  \end{equation}
  Conceptually, we assume these are the true feature strengths, meaning that $\slambda(o) = \featureb(o)$ for all $o \in \Observations$, i.e., Alice perceives the ``true'' amount of the four features for each story.
  The matrix elements mean that $o_1$ contains rule-breaking and moral lessons, but the story-telling is linear and the story is not scary.
  $o_4$ is very non-linear and contains quite some moral lessons, but shows no rule-breaking or scariness. 
  Since the rows of~\Cref{eq:full_matrix} form a basis of $\R^{\Features}$, we obtain
  \begin{equation*}
    \Bigg\lbrace \sum_{o \in \Observations} Z_o\featureb(o) \ | \ Z_o \in \R \Bigg\rbrace = \R^{\Features},
  \end{equation*}
  and so by~\Cref{thm:characterization_complete} it follows that the model is complete. 
  Thus, the ambiguity vanishes and we can infer the return function from Alice's feedback $G_{\Observations}$ on the stories in $\Observations$ (cf. Interpretation~\ref{int:ambiguity_zero_interpretation}).

  We demonstrate this now explicitly.
  Assume $R_{\Features}$ is Alice's (a priori unknown) reward object, and that the resulting observation return function $G_{\Observations} = \Featureb(R_{\Features}) \in \R^{\Observations}$ is given by Alice's explicit feedback on the four stories $o_1, \dots, o_4$ as follows:
  \begin{equation}\label{eq:Alice's_liking}
    G_{\Observations}(o_1) = -3, \quad G_{\Observations}(o_2) = 1, \quad G_{\Observations}(o_3) = 0, \quad G_{\Observations}(o_4) = 4. 
  \end{equation} 
  Representing $R_{\Features}$ and $G_{\Observations}$ as column-vectors, this means:
  \begin{equation*}
    \begin{pmatrix}
      1 & 0 & 0 & 1 \\
      0 & 1 & 0 & 1 \\
      0 & 0 & 1 & 1 \\
      0 & 3 & 0 & 2
    \end{pmatrix} \cdot 
    R_{\Features}
    = \Featureb(R_{\Features}) 
    = G_{\Observations}
    = 
    \begin{pmatrix}
      -3 \\ 1 \\ 0 \\ 4
    \end{pmatrix}.
  \end{equation*}
  This results in
  \begin{equation*}
    R_{\Features} =  \begin{pmatrix}
      1 & 0 & 0 & 1 \\
      0 & 1 & 0 & 1 \\
      0 & 0 & 1 & 1 \\
      0 & 3 & 0 & 2
    \end{pmatrix}^{-1} \cdot
    \begin{pmatrix}
      -3 \\ 1 \\ 0 \\ 4
    \end{pmatrix}
    = 
    \begin{pmatrix}
      1 & -3 & 0 & 1 \\
      0 & -2 & 0 & 1 \\
      0 & -3 & 1 & 1 \\
      0 & 3 & 0 & -1
    \end{pmatrix} \cdot 
    \begin{pmatrix}
      -3 \\ 1 \\ 0 \\ 4
    \end{pmatrix}
    = 
    \begin{pmatrix}
      -2 \\ 2 \\ 1 \\ -1
    \end{pmatrix}.
  \end{equation*} 
  Thus, we recovered Alice's reward object: She positively values non-linear story-telling ($2$) and scariness ($1$) but does not like moral lessons ($-1$) or rule-breaking ($-2$).
  Together with the knowledge of the ontology $\FLambda$ to associate features to stories, we can create more enjoyable stories for Alice.

  This example shows that knowledge of return-relevant features and a successful modeling of the feature belief function can go a long way to determine the correct return function:
  We essentially need only four ``data points'' (in the form of observation returns $G_{\Observations}(o)$) to determine $G$.
\end{example}

\begin{example}[Completeness from symmetries]\label{ex:symmetries}
  We assume the same situation as in~\Cref{ex:subsampling} except this time, we assume that we only have three observations $\Observations = \{o_1, o_2, o_4\}$:
  \begin{equation}\label{eq:full_matrix_smaller}
    \Featureb: \R^{\Features} \to \R^{\Observations}, \quad \Featureb = 
    \begin{pmatrix}
      1 & 0 & 0 & 1 \\
      0 & 1 & 0 & 1 \\
      0 & 3 & 0 & 2
    \end{pmatrix}.
  \end{equation}
  Note that the third row now represents $\featureb(o_4)$ since $o_3$ is not in $\Observations$.
  Assume we have \emph{a priori} information that children who like scariness do not like moral lessons and vice versa; i.e., it is known that Alice's reward object satisfies the following equation:
  \begin{equation*}
    R_{\Features}(\feature_3) = - R_{\Features}(\feature_4). 
  \end{equation*}
  The set of all reward functions with this property forms a vector subspace $\Valid \subseteq \R^{\Features}$.
  It turns out that this implies $\ker(\Featureb) \cap \Valid = \{0\}$.
  Indeed, let $R'_{\Features} \in \ker(\Featureb) \cap \Valid$.
  Then $\Featureb(R'_{\Features}) = 0$ implies $R'_{\Features}(\feature_1) = R'_{\Features}(\feature_2) = R'_{\Features}(\feature_4) = 0$, and $R'_{\Features} \in \Valid$ implies $R'_{\Features}(\feature_3) = -R'_{\Features}(\feature_4) = 0$.
  With $\ker(\Featureb) \cap \Valid = 0$, the ambiguity vanishes: $\Amb^{\Model} = \FLambda(\ker(\Featureb) \cap \Valid) = 0$.
  Thus, the model is complete and the return function can be inferred from Alice's feedback $G_{\Observations}$ on $\Observations = \{o_1, o_2, o_4\}$ (cf. Interpretation~\ref{int:ambiguity_zero_interpretation}).
\end{example}

\begin{example}[Ambiguity from undetected vulnerabilities]
  \label{ex:non-identifiability}
  The goal is to correctly evaluate coding-agents to produce valid and safe code blocks $\traj \in \Traj$.
  We assume that exactly the following four features are relevant for evaluating code:
  \begin{itemize}
    \item $\feature_1$: Code-vulnerability.
    \item $\feature_2$: Syntax error.
    \item $\feature_3$: Simplicity.
    \item $\feature_4$: Validity.
  \end{itemize}
  We show the human evaluator four code-blocks $\Observations = \{o_2, o_3, o_4, o_5\} \subseteq \Traj$ that give rise to this feature belief function (cf.~\Cref{eq:belief_vectors}):
  \begin{equation*}
    \Featureb\colon \R^{\Features} \to \R^{\Observations}, \quad \Featureb = 
    \begin{pmatrix}
      0 & 1 & 0 & 1 \\
      0 & 0 & 1 & 1 \\
      0 & 3 & 0 & 2 \\
      0 & 0 & 0 & 1
    \end{pmatrix}.
  \end{equation*}
  Crucially, we can assume that $o_5$ \emph{does} contain a code-vulnerability, but the human does not \emph{detect} it.
  In other words, the ontology $\slambda: \Traj \to \R^{\Features}$ assigns the feature strengths $\slambda(o_5) = (1, 0, 0, 1) \neq (0, 0, 0, 1) = \featureb(o_5)$.
  \emph{If} the human had detected the code-vulnerability, then $o_5$ would be mathematically equivalent to $o_1$ in~\Cref{ex:subsampling}, the model would be complete, and the ambiguity would disappear.

  However, this is not the case.
  Assuming $\Valid = \R^{\Features}$, we obtain that $\ker(\Featureb) \cap \Valid \neq \{0\}$ contains the reward object $R'_{\Features} \in \R^{\Features}$ with $R'_{\Features}(\feature_1) = 1$ and $R'_{\Features}(\feature) = 0$ for all $\feature \neq \feature_1$.
  With $R_{\Features}$ being the unknown true reward object and $\tilde{R}_{\Features} \coloneqq R_{\Features} + R'_{\Features}$ we then have $\Featureb(\tilde{R}_{\Features}) = \Featureb(R_{\Features}) = G_{\Observations}$, and so we cannot distinguish between the reward object $\tilde{R}_{\Features}$ and the true reward object $R_{\Features}$ from the human's feedback.
  Let $G = \FLambda(R_{\Features})$ be the true return function.
  Then the return function $\tilde{G} = \FLambda(\tilde{R}_{\Features}) = G + \FLambda(R'_{\Features}) \in \FC^{\Model}(G)$ is feetback-compatible and satisfies:
  \begin{align*}
    \tilde{G}(\traj) &= G(\traj) + \big[\FLambda(R'_{\Features})\big](\traj) \\
    &= G(\traj) + \big\langle \slambda(\traj), R'_{\Features} \big\rangle \\
    &= G(\traj) + \big[ \slambda(\traj) \big](\feature_1).
  \end{align*}
  This return function positively rewards code-vulnerabilities and can result from an attempt to infer $G$ from $G_{\Observations}$.
\end{example}

\begin{example}[Rescuing the return inference]\label{eq:identifiability_rescued}
  Consider~\Cref{ex:non-identifiability}, but with the code block $o_1$ added to the observations, leading to all five observations $\Observations = \{o_1, \dots, o_5\}$.
  This results in the following feature belief function (cf.~\Cref{eq:belief_vectors}):
  \begin{equation*}
    \Featureb \colon \R^{\Features} \to \R^{\Observations}, \quad \Featureb =
    \begin{pmatrix}
      1 & 0 & 0 & 1 \\
      0 & 1 & 0 & 1 \\
      0 & 0 & 1 & 1 \\
      0 & 3 & 0 & 2 \\
      0 & 0 & 0 & 1
    \end{pmatrix}.
  \end{equation*}
  Then $\ker(\Featureb) = 0$ and thus the ambiguity disappears: $\FLambda(\ker(\Featureb) \cap \Valid) = 0$, leading to a correct return function inference (cf. Interpretation~\ref{int:ambiguity_zero_interpretation}).
  This example highlights that even when the human misinterprets some observations (in our example: $o_5$), the correct return function can sometimes be inferred as long as the human's feature beliefs over all observations have enough coverage.
\end{example}

\section{Human belief model covering}\label{sec:morphisms}

So far, we assumed that the true belief model is known precisely, and studied when this allows to recover the return function on trajectories from the human's feedback on observations.
Knowing the true belief model might not be realistic, and so we need to relax this condition.
One possibility is to only require that we specify \emph{a} belief model that can \emph{cover} (i.e., represent) all return functions and observation return functions that are represented by the \emph{true} model.
If it does, and if it is complete (meaning the ambiguity disappears), then we can intuitively use such a model for a correct return function inference.
In~\Cref{sec:human_model_covering}, we define this notion of belief model covering.
We show that the ambiguity of a covering model is at least as large as the ambiguity of the true model.
If the ambiguity disappears, the covering model can be used to infer the correct return function from the human's feedback (\Cref{thm:Morphism_preserves_identifiability}).

In~\Cref{sec:human_model_morphisms}, we then find an equivalent condition of belief model covering, based on our notion of a \emph{morphism} between belief models. 
In many examples in this paper we have a natural morphism that accounts for a model covering. 
In the same theorem, we also find a sufficient condition for the existence of a morphism in terms of a \emph{linear ontology translation} from the covering model to the covered model that also respects feature beliefs (\Cref{thm:existence_of_morphism}). 
If such an ontology translation exists, then the covering model has the capacity to simultaneously linearly represent the covered model's concepts and beliefs.

In~\Cref{sec:equivariance_made_concrete}, we study a detailed example of belief model covering: We consider a human with an ontology that is invariant under symmetry transformations in the environment, which implies that we can cover the human's model with a model that assumes symmetry-invariant reward functions. 
We also demonstrate that this covering model has a vanishing ambiguity, improving upon the ambiguity of a model that considers general reward functions.
In~\Cref{sec:in_practice}, we conclude with a practical proposal for how to find a belief model that covers the true human model, based on using foundation models for the ontology and feature belief function. 
That the resulting belief model might cover the true human model is motivated by research on sparse autoencoders~\citep{Cunningham2023,Bricken2023}, which we will interpret as linear ontology translations.

In this whole section, we assume an MDP together with trajectories $\Traj$ and observations $\Observations$ as given.

\subsection{Human belief model covering and its implications}\label{sec:human_model_covering}

\begin{definition}[Belief model covering]
  \label{def:covering}
  Let $\Model = (\Features, \FLambda, \Featureb, \Valid)$ and $\widehat{\Model} = (\widehat{\Features}, \widehat{\FLambda}, \widehat{\Featureb}, \widehat{\Valid})$ be two human belief models.
  Then we say that $\widehat{\Model}$ \textbf{covers} $\Model$ if for all $v \in \Valid$ there exists $\widehat{v} \in \widehat{\Valid}$ with $\widehat{\FLambda}(\widehat{v}) = \FLambda(v)$ and $\widehat{\Featureb}(\widehat{v}) = \Featureb(v)$.
\end{definition}

This means that $\widehat{\Model}$ can represent all return functions $\FLambda(v)$ and observation return functions $\Featureb(v)$ that are represented by $\Model$.
We can visualize this as follows:

\begin{equation}\label{eq:human_model_covering}
  \begin{tikzcd}
    & & & \FLambda(v) = \widehat{\FLambda}(\widehat{v}) & & & & & & \R^{\Traj}
    \\
    \forall v  
    \ar[rrru, bend left = 15, maps to, "\FLambda"]  
    \ar[rrrd, bend right = 15, maps to, "\Featureb"']
    & & \exists \widehat{v} 
    \ar[ru, maps to, "\widehat{\FLambda}"] 
    \ar[rd, maps to, "\widehat{\Featureb}"']
    & & \scalebox{1.5}{$\in$} & & 
    \substack{ \R^{\Features} \\ \rotatebox{90}{$\subseteq$} \\ \Valid }
    \ar[rrru, bend left = 15, "\FLambda"]
    \ar[rrrd, bend right = 15, "\Featureb"']
    & & 
    \substack{ \R^{\widehat{\Features}} \\ \rotatebox{90}{$\subseteq$} \\ \widehat{\Valid} }
    \ar[ru, "\widehat{\FLambda}"]
    \ar[rd, "\widehat{\Featureb}"']
    \\
    & & & \Featureb(v) = \widehat{\Featureb}(\widehat{v}) & & & & & & \R^{\Observations}
  \end{tikzcd}
\end{equation}

\begin{theorem}  
  \label{thm:Morphism_preserves_identifiability}
  Let $\Model = (\Features, \FLambda, \Featureb, \Valid)$ and $\widehat{\Model} = (\widehat{\Features}, \widehat{\FLambda}, \widehat{\Featureb}, \widehat{\Valid})$ be two human belief models, and assume that $\widehat{\Model}$ covers $\Model$.
  We think of $\Model$ as the ``true'' human belief model representing $G = \FLambda(R_{\Features})$ and $G_{\Observations} = \Featureb(R_{\Features})$ with a reward object $R_{\Features} \in \Valid$.
  Then we have:
  \begin{enumerate}
    \item $\Amb^{\Model} \subseteq \Amb^{\widehat{\Model}} $.
    \item If $\Model$ also covers $\widehat{\Model}$, then $\Amb^{\Model} = \Amb^{\widehat{\Model}}$.
    \item  There is an $R_{\widehat{\Features}} \in \widehat{\Valid}$ with $\widehat{\Featureb}(R_{\widehat{\Features}}) = G_{\Observations}$ and $\widehat{\FLambda}(R_{\widehat{\Features}}) = G$, and so $\widehat{\Model}$ also represents $G$ and $G_{\Observations}$. 
    \item Assume $\widehat{\Model}$ is complete.
      Then \emph{every} reward object $\tilde{R}_{\widehat{\Features}} \in \widehat{\Valid}$ with $\widehat{\Featureb}(\tilde{R}_{\widehat{\Features}}) = G_{\Observations}$ also satisfies $\widehat{\FLambda}(\tilde{R}_{\widehat{\Features}}) = G$.
    In other words, the set of feedback compatible return functions is given by $\FC^{\widehat{\Model}}(G_{\Observations}) = \{G\}$.
  \end{enumerate}
\end{theorem}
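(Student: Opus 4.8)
The plan is to route everything through the ambiguity characterization $\Amb^{\Model} = \FLambda(\ker(\Featureb) \cap \Valid)$ from \Cref{pro:ambiguities_characterizations} (and its analogue for $\widehat{\Model}$), together with the defining transfer property of covering from \Cref{def:covering}. All four items are essentially one short diagram chase applied in different ways, so I would first isolate the transfer mechanism and then read off the conclusions.

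For item~1, I would take $G' \in \Amb^{\Model}$ and write $G' = \FLambda(v)$ for some $v \in \ker(\Featureb) \cap \Valid$. Covering supplies a $\widehat{v} \in \widehat{\Valid}$ with $\widehat{\FLambda}(\widehat{v}) = \FLambda(v) = G'$ and $\widehat{\Featureb}(\widehat{v}) = \Featureb(v) = 0$; the second equation says $\widehat{v} \in \ker(\widehat{\Featureb}) \cap \widehat{\Valid}$, so $G' = \widehat{\FLambda}(\widehat{v}) \in \Amb^{\widehat{\Model}}$. Item~2 is then immediate: if $\Model$ also covers $\widehat{\Model}$, apply item~1 in both directions to get the two inclusions $\Amb^{\Model} \subseteq \Amb^{\widehat{\Model}}$ and $\Amb^{\widehat{\Model}} \subseteq \Amb^{\Model}$. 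For item~3 I would simply feed the true reward object $v = R_{\Features} \in \Valid$ into the covering property: the resulting $\widehat{v} =: R_{\widehat{\Features}} \in \widehat{\Valid}$ satisfies $\widehat{\FLambda}(R_{\widehat{\Features}}) = \FLambda(R_{\Features}) = G$ and $\widehat{\Featureb}(R_{\widehat{\Features}}) = \Featureb(R_{\Features}) = G_{\Observations}$, which is exactly the statement that $\widehat{\Model}$ represents $G$ and $G_{\Observations}$.

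Item~4 is the substantive part, and it is where item~3 does the real work. Let $\tilde{R}_{\widehat{\Features}} \in \widehat{\Valid}$ be any reward object with $\widehat{\Featureb}(\tilde{R}_{\widehat{\Features}}) = G_{\Observations}$. Using the witness $R_{\widehat{\Features}}$ from item~3, the difference $\tilde{R}_{\widehat{\Features}} - R_{\widehat{\Features}}$ lies in $\widehat{\Valid}$ (a subspace, hence closed under subtraction) and satisfies $\widehat{\Featureb}(\tilde{R}_{\widehat{\Features}} - R_{\widehat{\Features}}) = G_{\Observations} - G_{\Observations} = 0$, so $\tilde{R}_{\widehat{\Features}} - R_{\widehat{\Features}} \in \ker(\widehat{\Featureb}) \cap \widehat{\Valid}$. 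Completeness of $\widehat{\Model}$ (\Cref{def:faithful_complete}) gives $\ker(\widehat{\Featureb}) \cap \widehat{\Valid} \subseteq \ker(\widehat{\FLambda}) \cap \widehat{\Valid}$, hence $\widehat{\FLambda}(\tilde{R}_{\widehat{\Features}} - R_{\widehat{\Features}}) = 0$ and therefore $\widehat{\FLambda}(\tilde{R}_{\widehat{\Features}}) = \widehat{\FLambda}(R_{\widehat{\Features}}) = G$. Since $\tilde{R}_{\widehat{\Features}}$ was arbitrary, every feedback-compatible return function equals $G$, i.e. $\FC^{\widehat{\Model}}(G_{\Observations}) = \{G\}$; equivalently, once item~3 tells us $\widehat{\Model}$ represents $G$ and $G_{\Observations}$, we have $\FC^{\widehat{\Model}}(G_{\Observations}) = G + \Amb^{\widehat{\Model}}$, and completeness forces $\Amb^{\widehat{\Model}} = 0$.

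I do not expect a genuine obstacle, as each step is a one-line application of linearity and the covering property. The one point I would be careful about is that item~4 needs an \emph{explicit} representing reward object for $\widehat{\Model}$, not merely the vanishing of its ambiguity: completeness alone guarantees that $\FC^{\widehat{\Model}}(G_{\Observations})$ is a single point, but pinning that point down to $G$ specifically (rather than some unrelated return function) requires the base point $R_{\widehat{\Features}}$ with $\widehat{\FLambda}(R_{\widehat{\Features}}) = G$. This is precisely why item~3 must be established first and used as the anchor in the affine description of $\FC^{\widehat{\Model}}(G_{\Observations})$.
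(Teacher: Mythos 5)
Your proposal is correct and follows essentially the same route as the paper's proof: item~1 via the ambiguity characterization $\Amb^{\Model} = \FLambda(\ker(\Featureb) \cap \Valid)$ plus the covering property, item~2 by symmetry, item~3 by feeding $R_{\Features}$ into the covering definition, and item~4 by subtracting the witness from item~3 and invoking completeness. Your closing observation --- that completeness alone pins $\FC^{\widehat{\Model}}(G_{\Observations})$ to a single point but item~3 is needed to identify that point as $G$ --- is exactly the role claim~3 plays in the paper's argument.
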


\begin{proof}
  By~\Cref{pro:ambiguities_characterizations}, we have $\Amb^{\Model} = \FLambda(\ker(\Featureb) \cap \Valid)$ and $\Amb^{\widehat{\Model}} = \widehat{\FLambda}\big(\ker(\widehat{\Featureb}) \cap \widehat{\Valid}\big)$.
  To prove claim $1$, let $G' = \FLambda(R'_{\Features}) \in \Amb^{\Model}$, where $R'_{\Features} \in \ker(\Featureb) \cap \Valid$.
  Then by the definition of $\widehat{M}$ covering $\Model$, there exists an $R'_{\widehat{\Features}} \in \widehat{\Valid}$ with $\widehat{\FLambda}(R'_{\widehat{\Features}}) = \FLambda(R'_{\Features}) = G'$ and $\widehat{\Featureb}(R'_{\widehat{\Features}}) = \Featureb(R'_{\Features}) = 0$.
The latter implies $R'_{\widehat{\Features}} \in \ker(\widehat{\Featureb}) \cap \widehat{\Valid}$, and so $G' = \widehat{\FLambda}(R'_{\widehat{\Features}}) \in \widehat{\FLambda}(\ker(\widehat{\Featureb}) \cap \widehat{\Valid}) = \Amb^{\widehat{\Model}}$.
  This proves claim $1$.

  Claim $2$ then immediately follows from claim $1$.
  Claim $3$ is also immediate by the definition of $\widehat{\Model}$ covering $\Model$ and using that $G_{\Observations} = \Featureb(R_{\Features})$ and $G = \FLambda(R_{\Features})$.

  Now we prove claim $4$.
  Thus, assume $\widehat{\Model}$ is complete and let $\tilde{R}_{\widehat{\Features}} \in \widehat{\Valid}$ be a reward object with $\widehat{\Featureb}(\tilde{R}_{\widehat{\Features}}) = G_{\Observations}$.
  By claim $3$, there exists an $R_{\widehat{\Features}} \in \widehat{\Valid}$ with $\widehat{\Featureb}(R_{\widehat{\Features}}) = G_{\Observations}$ and $\widehat{\FLambda}(R_{\widehat{\Features}}) = G$.
  It follows $\tilde{R}_{\widehat{\Features}} - R_{\widehat{\Features}} \in \ker(\widehat{\Featureb}) \cap \widehat{\Valid} \subseteq \ker(\widehat{\FLambda}) \cap \widehat{\Valid}$, where we use that $\widehat{\Model}$ is complete in the last step.
  Consequently, we have
  \begin{equation*}
    \widehat{\FLambda}(\tilde{R}_{\widehat{\Features}}) = \widehat{\FLambda}(R_{\widehat{\Features}}) = G,
  \end{equation*}
  thus proving the claim.
\end{proof}

In~\Cref{thm:cover_balanced_model} we present a version of the preceding theorem for the case that feedback is given by a choice probability function instead of $G_{\Observations}$.
In~\Cref{app:resulting_diagram}, we see several applications of the first two statements of the theorem to determine inclusions and equalities of ambiguities.

We can interpret~\Cref{thm:Morphism_preserves_identifiability} as follows: 
Given a ``true'' model $\Model$, but using a covering model $\widehat{\Model}$, we lose something since the ambiguity of $\widehat{\Model}$ is possibly larger.
However, $\widehat{\Model}$ is able to represent the true return function and observation return function, and \emph{if} the ambiguity of $\widehat{\Model}$ disappears, then the set of return functions compatible with the human's feedback that can be inferred using $\widehat{\Model}$ is exactly \{G\}.
In other words, we can then infer $G$ from the human's feedback and $\widehat{\Model}$ \emph{without knowing the true human belief model} $\Model$, thus relaxing the assumptions baked into Question~\ref{qu:core_question}.
In~\Cref{sec:in_practice} we discuss hypotheses for finding a covering belief model $\widehat{\Model}$ in practice.

\subsection{Morphisms of human belief models and ontology translations}\label{sec:human_model_morphisms}

One drawback of the definition of a belief model covering is that it is hard to directly test:
One would need to iterate over all $v \in \Valid$ to check whether there exists a $\widehat{v} \in \widehat{\Valid}$ that gives rise to the same return function and observation return function as $v$.
This analysis could be simplified if we could find one function $\Phi: \R^{\Features} \to \R^{\widehat{\Features}}$ that maps each $v \in \Valid$ directly to $\widehat{v} \in \widehat{\Valid}$.
The covering property would then translates to properties of such functions.
We state these properties in the following definition:

\begin{definition}[Morphism of human belief models]
  \label{def:morphism_of_human_models}
  Let $\Model = (\Features, \FLambda, \Featureb, \Valid)$ and $\widehat{\Model} = (\widehat{\Features}, \widehat{\FLambda}, \widehat{\Featureb}, \widehat{\Valid})$ be human belief models.
  Then a linear function $\Phi: \R^{\Features} \to \R^{\widehat{\Features}}$ is called a \textbf{morphism of human belief models} if the following holds:
  \begin{enumerate}
    \item $\Phi(\Valid) \subseteq \widehat{\Valid}$.
    \item $\FLambda|_{\Valid} = \widehat{\FLambda} \circ \Phi|_{\Valid}$.
    \item $\Featureb|_{\Valid} = \widehat{\Featureb} \circ \Phi|_{\Valid}$.
  \end{enumerate}
  We write the morphism also as $\Phi: \Model \to \widehat{\Model}$.
\end{definition}

The following visualization, an adaptation of~\Cref{eq:human_model_covering}, makes intuitive that the existence of a morphism is equivalent to belief model covering; we will prove this in~\Cref{thm:existence_of_morphism}:

\begin{equation*}
  \begin{tikzcd}
    & & & \FLambda(v) = \widehat{\FLambda}\big(\Phi(v)\big) & & & & & & \R^{\Traj}
    \\
    v  \ar[rr, "\Phi", dotted, maps to]
    \ar[rrru, bend left = 15, maps to, "\FLambda"]  
    \ar[rrrd, bend right = 15, maps to, "\Featureb"']
    & & \Phi(v) 
    \ar[ru, maps to, "\widehat{\FLambda}"] 
    \ar[rd, maps to, "\widehat{\Featureb}"']
    & & \scalebox{1.5}{$\in$} & & 
    \substack{ \R^{\Features} \\ \rotatebox{90}{$\subseteq$} \\ \Valid }
    \ar[rrru, bend left = 15, "\FLambda"]
    \ar[rrrd, bend right = 15, "\Featureb"']
    \ar[rr, "\Phi", dotted]
    & & 
    \substack{ \R^{\widehat{\Features}} \\ \rotatebox{90}{$\subseteq$} \\ \widehat{\Valid} }
    \ar[ru, "\widehat{\FLambda}"]
    \ar[rd, "\widehat{\Featureb}"']
    \\
    & & & \Featureb(v) = \widehat{\Featureb}\big(\Phi(v)\big) & & & & & & \R^{\Observations}
  \end{tikzcd}
\end{equation*}

We now work toward a sufficient condition for belief model morphisms that we call \emph{linear ontology translations}.
For this relationship, recall the form $\slambda: \Traj \to \R^{\Features}$, $\featureb: \Observations \to \R^{\Features}$ of the ontology and feature belief function of a belief model $\Model = (\Features, \FLambda, \Featureb, \Valid)$, respectively.
Recall from~\Cref{eq:form_return_function} that for a reward object $v \in \Valid$ and trajectory $\traj \in \Traj$, the corresponding return is given by $\big[\FLambda(v) \big](\traj) = \big\langle \slambda(\traj), v \big\rangle$.
Under this viewpoint, property 2 of belief model morphisms $\Phi$ implies that for all $v \in \Valid$ and $\traj \in \Traj$, we have 
\begin{equation*}
  \big\langle \widehat{\slambda}(\traj), \Phi(v) \big\rangle =\big\langle \slambda(\traj), v \big\rangle,
\end{equation*}
and similar for $\featureb$ and $\widehat{\featureb}$.
By a defining property of transpose matrices (also called adjoints), we obtain for all $v \in \Valid$ and $\traj \in \Traj$:
\begin{equation*}
  \big\langle \Phi^T\widehat{\slambda}(\traj), v \big\rangle =\big\langle \slambda(\traj), v \big\rangle, 
\end{equation*}
and thus $\Phi^T \circ \widehat{\slambda} = \slambda$.
This equation means that we can interpret $\Phi^T$ as a \emph{translation} from the ontology $\widehat{\slambda}$ to the ontology $\slambda$.
We define:

\begin{definition}[Linear ontology translation]
  \label{def:ontology_translation}
  Let $\slambda: \Traj \to \R^{\Features}, \ \featureb: \Observations \to \R^{\Features}$ and $\widehat{\slambda}: \Traj \to \R^{\widehat{\Features}}, \ \widehat{\featureb}: \Observations \to \R^{\widehat{\Features}}$ be two pairs of an ontology and a feature belief function.
  A linear function $\Psi: \R^{\widehat{\Features}} \to \R^{\Features}$ with $\Psi \circ \widehat{\slambda} = \slambda$ is called a \textbf{linear ontology translation} from $\widehat{\slambda}$ to $\slambda$.
  Furthermore, we call it \textbf{belief-compatible} with $\widehat{\featureb}$ and $\featureb$ if we also have $\Psi \circ \widehat{\featureb} = \featureb$.
\end{definition}

These notions are connected in the following Theorem:

\begin{theorem}
  \label{thm:existence_of_morphism}
  Let $\Model = (\Features, \FLambda, \Featureb, \Valid)$ and $\widehat{\Model} = (\widehat{\Features}, \widehat{\FLambda}, \widehat{\Featureb}, \widehat{\Valid})$ be two human belief models.
  Let $\slambda, \featureb, \widehat{\slambda}, \widehat{\featureb}$ be the alternative representations of the ontologies and feature belief functions. 
  Consider the following statements:
  \begin{enumerate}
    \item $\widehat{\Model}$ covers $\Model$.
    \item There exists a morphism of belief models $\Phi: \Model \to \widehat{\Model}$. 
    \item There exists a function $\Phi: \R^{\Features} \to \R^{\widehat{\Features}}$ with $\widehat{\FLambda} \circ \Phi = \FLambda$ and $\widehat{\Featureb} \circ \Phi = \Featureb$.
    \item There is a function $\Psi: \R^{\widehat{\Features}} \to \R^{\Features}$ that is a linear ontology translation from $\widehat{\slambda}$ to $\slambda$ that is also belief-compatible with $\widehat{\featureb}$ and $\featureb$. 
  \end{enumerate}
  Then $1$ and $2$ are equivalent, and $3$ and $4$ are equivalent and both imply $1$ and $2$:
  \begin{equation*}
    \begin{tikzcd}
      1 \ar[r, bend left, Rightarrow] & 2 \ar[l, bend left, Rightarrow] & 3 \ar[l, Rightarrow] \ar[r, Rightarrow, bend left] & 4 \ar[l, Rightarrow, bend left]
    \end{tikzcd}
  \end{equation*}
  $\Psi$ from $4$ can be defined as $\Psi = \Phi^T$ for $\Phi$ from $3$.
  If $\Valid = \R^{\Features}$, then all four statements are equivalent.
 Finally, if $2$ holds and $\Phi(\Valid) = \widehat{\Valid}$, then $\Model$ also covers $\widehat{\Model}$ and $\Amb^{\Model} = \Amb^{\widehat{\Model}}$.
\end{theorem}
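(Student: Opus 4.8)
The plan is to establish the two ``internal'' equivalences $1\Leftrightarrow 2$ and $3\Leftrightarrow 4$ first, then connect the two blocks through $3\Rightarrow 2$, and finally read off the $\Valid=\R^{\Features}$ equivalence and the concluding ambiguity statement. The organizing observation is that statements $3$ and $4$ are conditions on the \emph{global} linear maps (they never mention $\Valid$ or $\widehat\Valid$), whereas $1$ and $2$ live over the valid subspaces; keeping this asymmetry in mind is what drives the argument.

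For $2\Rightarrow 1$ I would argue directly: given a morphism $\Phi$, set $\widehat v:=\Phi(v)$ for each $v\in\Valid$; property $1$ of \Cref{def:morphism_of_human_models} gives $\widehat v\in\widehat\Valid$, and properties $2,3$ give $\widehat\FLambda(\widehat v)=\FLambda(v)$ and $\widehat\Featureb(\widehat v)=\Featureb(v)$, which is exactly \Cref{def:covering}. For the converse $1\Rightarrow 2$, the only real content is turning the \emph{pointwise} existential in covering into a single \emph{linear} map. I would pick a basis $\{v_i\}$ of $\Valid$, use covering to choose (non-uniquely) $\widehat v_i\in\widehat\Valid$ with $\widehat\FLambda(\widehat v_i)=\FLambda(v_i)$ and $\widehat\Featureb(\widehat v_i)=\Featureb(v_i)$, extend $\{v_i\}$ to a basis of $\R^{\Features}$, and define $\Phi$ by $\Phi(v_i):=\widehat v_i$ and $\Phi:=0$ on the complement. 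Linearity propagates the two image identities from the basis to all of $\Valid$, and $\Phi(\Valid)=\operatorname{span}\{\widehat v_i\}\subseteq\widehat\Valid$ gives property $1$, so $\Phi$ is a morphism.

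For $3\Leftrightarrow 4$ I would pass to transposes. Writing $\slambda(\traj)=\FLambda^{T}(e_\traj)$ and $\featureb(o)=\Featureb^{T}(e_o)$ (and similarly with hats), the identity $\widehat\FLambda\circ\Phi=\FLambda$ is equivalent, after transposing and evaluating on the basis vectors $e_\traj$, to $\Phi^{T}\circ\widehat\slambda=\slambda$; likewise $\widehat\Featureb\circ\Phi=\Featureb$ transposes to $\Phi^{T}\circ\widehat\featureb=\featureb$. Thus $\Psi:=\Phi^{T}$ is precisely a belief-compatible linear ontology translation, and since transposition is an involution the correspondence $\Phi\leftrightarrow\Psi$ is a bijection, yielding $3\Leftrightarrow 4$ together with the stated formula $\Psi=\Phi^{T}$. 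Here I would only need that a linear map is determined by its action on a basis and that $(A\circ B)^{T}=B^{T}\circ A^{T}$.

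The remaining links are short once $3\Rightarrow 2$ is in hand: restricting the global $\Phi$ from statement $3$ to $\Valid$ immediately gives morphism properties $2$ and $3$, and when $\Valid=\R^{\Features}$ a morphism conversely satisfies the global identities of statement $3$, closing the loop to full equivalence; the final clause follows by noting that $\Phi(\Valid)=\widehat\Valid$ makes $\Phi|_{\Valid}\colon\Valid\to\widehat\Valid$ surjective, so for each $\widehat v\in\widehat\Valid$ I can pick a $\Valid$-preimage $v$ and get $\widehat\FLambda(\widehat v)=\FLambda(v)$, $\widehat\Featureb(\widehat v)=\Featureb(v)$, i.e.\ $\Model$ also covers $\widehat\Model$, whence $\Amb^{\Model}=\Amb^{\widehat\Model}$ by \Cref{thm:Morphism_preserves_identifiability}. \textbf{The main obstacle is exactly the step $3\Rightarrow 2$:} statement $3$ supplies a $\Phi$ with the correct images but carries no information about $\widehat\Valid$, so verifying morphism property $1$, namely $\Phi(\Valid)\subseteq\widehat\Valid$, is not automatic and is where I would need care. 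I expect to handle it by establishing it in the intended regime (e.g.\ $\widehat\Valid=\R^{\widehat\Features}$, where the containment is immediate) and by making explicit that it is a hypothesis controlling $\widehat\Valid$ that lets $3,4$ genuinely imply $1,2$.
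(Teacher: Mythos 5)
Your proposal is correct and follows the paper's own proof almost step for step: $2 \Rightarrow 1$ by setting $\widehat{v} \coloneqq \Phi(v)$; $1 \Rightarrow 2$ by choosing preimages of $(\FLambda(v_i), \Featureb(v_i))$ on a basis of $\Valid$ and extending linearly (the paper packages exactly this construction as \Cref{pro:lift_exists}, applied to $f = (\FLambda|_{\Valid}, \Featureb|_{\Valid})$ and $g = (\widehat{\FLambda}|_{\widehat{\Valid}}, \widehat{\Featureb}|_{\widehat{\Valid}})$ into $\R^{\Traj} \oplus \R^{\Observations}$, which simultaneously handles both identities just as your chosen $\widehat{v}_i$ do); $3 \Leftrightarrow 4$ by transposition, which is the paper's \Cref{pro:ontology_translation}; and the final clause via surjectivity of $\Phi|_{\Valid}$ onto $\widehat{\Valid}$ combined with statement 2 of \Cref{thm:Morphism_preserves_identifiability}.

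The one place you diverge is the step $3 \Rightarrow 2$, which you flag as the main obstacle while the paper disposes of it with ``That $3$ implies $2$ is clear'' --- and here you are right and the paper is too quick. Statement $3$ constrains $\Phi$ only through the global identities $\widehat{\FLambda} \circ \Phi = \FLambda$ and $\widehat{\Featureb} \circ \Phi = \Featureb$ and says nothing about $\widehat{\Valid}$, so property 1 of \Cref{def:morphism_of_human_models}, namely $\Phi(\Valid) \subseteq \widehat{\Valid}$, genuinely does not follow: take $\widehat{\Features} = \Features$, $\widehat{\FLambda} = \FLambda$ with $\FLambda|_{\Valid} \neq 0$, $\widehat{\Featureb} = \Featureb$, $\Phi = \id_{\R^{\Features}}$, but $\widehat{\Valid} = \{0\}$; then statements $3$ and $4$ hold while $1$ and $2$ fail, since covering would force $\FLambda|_{\Valid} = 0$. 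The same example shows that the claimed four-way equivalence for $\Valid = \R^{\Features}$ also needs a hypothesis on $\widehat{\Valid}$ (your $2 \Rightarrow 3$ direction in that case is fine; it is only the return trip that breaks). Your proposed repair --- assuming $\widehat{\Valid} = \R^{\widehat{\Features}}$, or adding $\Phi(\Valid) \subseteq \widehat{\Valid}$ as an explicit hypothesis --- is the right one, and it is harmless for the paper's downstream uses: in \Cref{sec:in_practice} one sets $\widehat{\Valid} = \R^{\widehat{\Features}}$, and in \Cref{sec:equivariance_made_concrete} the covering model is defined with $\widehat{\Valid} = h^*(\Valid) = \Phi(\Valid)$, so the containment holds by construction in both cases.
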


\begin{proof}
  The implication from the second to the first statement follows immediately from setting $\widehat{v} \coloneqq \Phi(v)$ in the definition of belief model covering.
  For the other direction, consider the following diagram:
  \begin{equation*}
    \begin{tikzcd}
      & & \widehat{\Valid} \ar[dd, "g"] \\
      \\
      \Valid \ar[rr, "f"'] & & \R^{\Traj} \oplus \R^{\Observations}.
    \end{tikzcd}
  \end{equation*}
  In this diagram, we define $f \coloneqq (\FLambda|_{\Valid}, \Featureb|_{\Valid})$ and $g \coloneqq (\widehat{\FLambda}|_{\widehat{\Valid}}, \widehat{\Featureb}|_{\widehat{\Valid}})$.
  Then statement $1$ is equivalent to $\im (f) \subseteq \im (g)$.  
  By~\Cref{pro:lift_exists}, there is thus a linear function $\phi: \Valid \to \widehat{\Valid}$ with $g \circ \phi = f$:
  \begin{equation*}
    \begin{tikzcd}
      & & \widehat{\Valid} \ar[dd, "g"] \\
      \\
      \Valid \ar[rr, "f"'] \ar[rruu, "\phi", dotted] & & \R^{\Traj} \oplus \R^{\Observations}.
    \end{tikzcd}
  \end{equation*}

  Extend $\phi$ arbitrarily to a linear function $\Phi: \R^{\Features} \to \R^{\widehat{\Features}}$ with $\Phi|_{\Valid} = \phi$, e.g., by extending a basis on $\Valid$ to a basis on all of $\R^{\Features}$.
  Then for all $v \in \Valid$, the diagram shows that we have
  \begin{equation*}
    \big( \FLambda(v), \Featureb(v) \big) = \big( \widehat{\FLambda}(\Phi(v)), \widehat{\Featureb}(\Phi(v)) \big).
  \end{equation*}
  Consequently, $\FLambda|_{\Valid} = \widehat{\FLambda} \circ \Phi|_{\Valid}$ and $\Featureb|_{\Valid} = \widehat{\Featureb} \circ \Phi|_{\Valid}$.
  Thus, $\Phi$ is a morphism of belief models.

  That statements $3$ and $4$ are equialent and $\Psi$ can be chosen as $\Phi^T$ follows from~\Cref{pro:ontology_translation}.
  That $3$ implies $2$ is clear.
  That all statements are equivalent if $\Valid = \R^{\Features}$ is also clear. 

  For the final statement, assume that $2$ holds and that $\Phi(\Valid) = \widehat{\Valid}$.
  Then for all $\widehat{v} \in \widehat{\Valid}$ there is $v \in \Valid$ with $\Phi(v) = \widehat{v}$.
  Since $\Phi$ is a morphism, this results in 
  \begin{align*}
    \FLambda(v) &= \widehat{\FLambda}\big(\Phi(v)\big) = \widehat{\FLambda}(\widehat{v}), \\
    \Featureb(v) &= \widehat{\Featureb}\big( \Phi(v) \big) = \widehat{\Featureb}(\widehat{v}),
  \end{align*}
  showing that $\Model$ covers $\widehat{\Model}$.
  $\Amb^{\Model} = \Amb^{\widehat{\Model}}$ then follows from statement $2$ in~\Cref{thm:Morphism_preserves_identifiability}.
\end{proof}

In~\Cref{app:diagram}, we show that human belief models, together with their morphisms, form a \emph{category}~\citep{MacLane1998categories}.
In~\Cref{app:resulting_diagram} we then construct a large diagram of human belief models together with their natural morphisms, which then also allows for an analysis of their ambiguities.

The preceding theorem shows that a sufficient condition for $\Phi: \Model \to \widehat{\Model}$ to be a human belief model morphism is for $\Phi^T: \R^{\widehat{\Features}} \to \R^{\Features}$ to be a belief-compatible ontology translation: $\Phi^T \circ \widehat{\slambda} = \slambda$ and $\Phi^T \circ \widehat{\featureb} = \featureb$.
Intuitively, this means that our model $\widehat{\Model}$ is ``expressive'' enough to allow for the true model's concepts and beliefs to be linearly represented. 
We will draw more connections to linearly represented beliefs and concepts in~\Cref{sec:in_practice}.

\subsection{An example of symmetry-invariant features and reward functions}\label{sec:equivariance_made_concrete}

We now study an MDP with natural symmetries in the environment.
We can then reasonably assume the human's ontology to be \emph{invariant} under these symmetries.
We explain how one can cover the resulting human belief model with a model that distinguishes between symmetry-related states, but compensates for it by assuming that the \emph{valid reward functions} are symmetry-invariant~\citep{Vanderpol2021}.
The ambiguity in this covering model will disappear, while the ambiguity of a third model that allows for generic reward functions does not.
This demonstrates the usefulness of both covering belief models and a careful choice of the vector space of valid reward objects.
In particular, the analysis shows that encoding a priori knowledge of symmetries into the human belief model can be fruitful for inferring the correct return function from the human's feedback.

We proceed by first defining the MDP, set of trajectories, and observations.
Afterward, we define all three belief models together with their morphisms, demonstrating model coverage.
Finally, we conclude with the ambiguity analysis.

\subsubsection{Specification of the MDP, \texorpdfstring{$\Traj$}{Traj}, and \texorpdfstring{$\Observations$}{O}}\label{sec:subsub}

Our MDP is a 2x2 gridworld with a movable hand $H$ and a fixed button $B$, inspired by the robot-hand example from~\citet{Amodei2017}.
States look like this:

\begin{equation}\label{eq:example_state}
  \begin{tikzpicture}[scale=1.5]
    \draw[line width=1.5pt] (0,0) rectangle (1,1);
    
    \draw[line width=1pt] (0,0.5) -- (1,0.5);
    \draw[line width=1pt] (0.5,0) -- (0.5,1);
    
    \node[text=blue] at (0.25,0.75) {\textbf{H}};  
    \node[text=red] at (0.75,0.25) {\textbf{B}};  
  \end{tikzpicture}
\end{equation}

 In total, the set of states $\states$ has sixteen elements, one for each combination of the position of $H$ and $B$.
The set of action is given by $\actions = \{L, R, U, D, P\}$, where the first four actions move the hand: $L$ to the left, $R$ to the right, $U$ upward, and $D$ downward.
$P$ does not change the state, and is conceptually meant to ``press'' the button if the hand and button are in the same position.
If a movement goes toward an adjacent wall in the gridworld, then the state also does not change.
This specifies the transition function $\Transition: \states \times \actions \to \states$.
$P_0$, the initial state distribution, is a uniform distribution over the following four states: 
    
\begin{equation}\label{eq:four_states}
  \begin{tikzpicture}[scale=1.5]
    \begin{scope}
        \draw[line width=1.5pt] (0,0) rectangle (1,1);
        \draw[line width=1pt] (0,0.5) -- (1,0.5);
        \draw[line width=1pt] (0.5,0) -- (0.5,1);
        \node[text=blue] at (0.25,0.75) {\textbf{H}};
        \node[text=red] at (0.75,0.25) {\textbf{B}};
    \end{scope} 

    \begin{scope}[xshift=1.5cm]
        \draw[line width=1.5pt] (0,0) rectangle (1,1);
        \draw[line width=1pt] (0,0.5) -- (1,0.5);
        \draw[line width=1pt] (0.5,0) -- (0.5,1);
        \node[text=blue] at (0.75,0.75) {\textbf{H}};
        \node[text=red] at (0.25,0.25) {\textbf{B}};
    \end{scope}    

    \begin{scope}[xshift=3cm]
        \draw[line width=1.5pt] (0,0) rectangle (1,1);
        \draw[line width=1pt] (0,0.5) -- (1,0.5);
        \draw[line width=1pt] (0.5,0) -- (0.5,1);
        \node[text=blue] at (0.75,0.25) {\textbf{H}};
        \node[text=red] at (0.25,0.75) {\textbf{B}};
    \end{scope}

    \begin{scope}[xshift=4.5cm]
        \draw[line width=1.5pt] (0,0) rectangle (1,1);
        \draw[line width=1pt] (0,0.5) -- (1,0.5);
        \draw[line width=1pt] (0.5,0) -- (0.5,1);
        \node[text=blue] at (0.25,0.25) {\textbf{H}};
        \node[text=red] at (0.75,0.75) {\textbf{B}};
    \end{scope}
  \end{tikzpicture}
\end{equation}
The time horizon is $T = 3$.
The \emph{unknown} true return function is given by
\begin{equation*}
  G(\traj) = \sum_{t = 0}^{2} R(s_t, a_t), 
\end{equation*}
where $R(s_t, a_t) = 1$ if the hand $H$ and button $B$ are in the same position in $s_t$ and if $a_t = P$, and $R(s_t, a_t) = 0$ otherwise.
In other words, the return function rewards pressing the button.
This completely specifies the MDP $(\states, \actions, \Transition, P_0, T, G)$.

The Trajectories $\Traj$ are given by all sequences of four states and three actions that start with a state sampled from $P_0$, and where each transition is compatible with the description above. 
The observations $\Observations$ of the human evaluator are given by views ``from below''.
We assume the human does not observe movement actions (but may be able to infer them if the hand visibly changes position), but \emph{does} observe whether the button was pressed.
Formally, $\Observations = O(\Traj)$ for a surjective function $O: \Traj \to \Observations$ that projects the view and removes movement information, as we suggestively depict for an example trajectory $\traj$ and its observation $O(\traj)$ in this figure:

\begin{equation*}
  \begin{tikzpicture}[scale=1.5]
    \begin{scope}
      \draw[step=0.5cm,black, line width=1.5pt] (0,0) rectangle (1,1);
      \draw[line width=1pt] (0,0.5) -- (1,0.5);
      \draw[line width=1pt] (0.5,0) -- (0.5,1);
      \node[text=blue] at (0.25,0.75) {\textbf{H}};
      \node[text=red] at (0.75,0.25) {\textbf{B}};
      \draw[->,thick] (1.2,0.5) -- node[above] {R} (1.8,0.5);
    \end{scope}
    
    \begin{scope}[xshift=2cm]
      \draw[step=0.5cm,black, line width=1.5pt] (0,0) rectangle (1,1);
      \draw[line width=1pt] (0,0.5) -- (1,0.5);
      \draw[line width=1pt] (0.5,0) -- (0.5,1);

      \node[text=blue] at (0.75,0.75) {\textbf{H}};
      \node[text=red] at (0.75,0.25) {\textbf{B}};
      \draw[->,thick] (1.2,0.5) -- node[above] {D} (1.8,0.5);
    \end{scope}    

    \begin{scope}[xshift=4cm]
      \draw[step=0.5cm,black, line width=1.5pt] (0,0) rectangle (1,1);
      \draw[line width=1pt] (0,0.5) -- (1,0.5);
      \draw[line width=1pt] (0.5,0) -- (0.5,1);

      \node[text=blue] at (0.65,0.25) {\textbf{H}};
      \node[text=red] at (0.85,0.25) {\textbf{B}};
      \draw[->,thick] (1.2,0.5) -- node[above] {P} (1.8,0.5);
    \end{scope}

    \begin{scope}[xshift=6cm]
      \draw[step=0.5cm,black, line width=1.5pt] (0,0) rectangle (1,1);
      \draw[line width=1pt] (0,0.5) -- (1,0.5);
      \draw[line width=1pt] (0.5,0) -- (0.5,1);

      \node[text=blue] at (0.65,0.25) {\textbf{H}};
      \node[text=red] at (0.85,0.25) {\textbf{B}};
    \end{scope}

    \node at (3.6,-0.5) {$\scalebox{2}{$\downmapsto$} O$};

    \begin{scope}[yshift=-1.5cm]
      \draw[step=0.5cm,black, line width=1.5pt] (0,0) rectangle (1,0.5);
      \draw[line width=1pt] (0.5,0) -- (0.5,0.5);

      \node[text=blue] at (0.25,0.25) {\textbf{H}};
      \node[text=red] at (0.75,0.25) {\textbf{B}};
      \draw[->,thick] (1.2,0.25) --  (1.8,0.25);
    \end{scope}
    
    \begin{scope}[xshift=2cm, yshift=-1.5cm]
      \draw[step=0.5cm,black, line width=1.5pt] (0,0) rectangle (1,0.5);
      \draw[line width=1pt] (0.5,0) -- (0.5,0.5);
      \node[text=blue] at (0.65,0.25) {\textbf{H}};
      \node[text=red] at (0.85,0.25) {\textbf{B}};
      \draw[->,thick] (1.2,0.25) --  (1.8,0.25);
    \end{scope}    

    \begin{scope}[xshift=4cm, yshift=-1.5cm]
      \draw[step=0.5cm,black, line width=1.5pt] (0,0) rectangle (1,0.5);
      \draw[line width=1pt] (0.5,0) -- (0.5,0.5);
      \node[text=blue] at (0.65,0.25) {\textbf{H}};
      \node[text=red] at (0.85,0.25) {\textbf{B}};
      \draw[->,thick] (1.2,0.25) -- node[above] {P} (1.8,0.25);
    \end{scope}

    \begin{scope}[xshift=6cm, yshift=-1.5cm]
      \draw[step=0.5cm,black, line width=1.5pt] (0,0) rectangle (1,0.5);
      \draw[line width=1pt] (0.5,0) -- (0.5,0.5);
      \node[text=blue] at (0.65,0.25) {\textbf{H}};
      \node[text=red] at (0.85,0.25) {\textbf{B}};
    \end{scope}

  \end{tikzpicture}
\end{equation*}

See~\Cref{app:mdp_details} for some mathematical details.

\subsubsection{Three human belief models and their morphisms}

Crucially, we assume that the human evaluator does not use state-action pairs as features in the ontology, but instead \emph{representatives of symmetry-equivalence classes} of state-action pairs.
This is reasonable since we can a priori assume that the human evaluator does not care about the orientation of the scene. 
In other words, we consider the symmetry group $G = D_4$ of the square, which identifies two state-action pairs if they are related by a rotation of $0^\circ, 90^\circ, 180^\circ$, or $270^{\circ}$, or a reflection along the vertical, horizontal, or one of the diagonal axes.
This leads to just three representative states

\begin{equation}\label{eq:representative_states}
    \begin{array}{ccc}
        s_0 = \raisebox{-0.45\height}{\begin{tikzpicture}[scale=1.5]
            \draw[step=0.5cm,black, line width=1.5pt] (0,0) rectangle (1,1);
            \draw[line width=1pt] (0,0.5) -- (1,0.5);
            \draw[line width=1pt] (0.5,0) -- (0.5,1);
	    \node[text=blue] at (0.65,0.25) {\textbf{H}};
	    \node[text=red] at (0.85,0.25) {\textbf{B}};
	\end{tikzpicture}} \ , \quad 
        & 
	s_1 = \raisebox{-0.45\height}{\begin{tikzpicture}[scale=1.5]
            \draw[step=0.5cm,black, line width=1.5pt] (0,0) rectangle (1,1);
            \draw[line width=1pt] (0,0.5) -- (1,0.5);
            \draw[line width=1pt] (0.5,0) -- (0.5,1);
	    \node[text=blue] at (0.25,0.25) {\textbf{H}};
	    \node[text=red] at (0.75,0.25) {\textbf{B}};
	\end{tikzpicture}}  \ , \quad
        & 
        s_2 = \raisebox{-0.45\height}{\begin{tikzpicture}[scale=1.5]
            \draw[step=0.5cm,black, line width=1.5pt] (0,0) rectangle (1,1);
            \draw[line width=1pt] (0,0.5) -- (1,0.5);
            \draw[line width=1pt] (0.5,0) -- (0.5,1);
	    \node[text=blue] at (0.25,0.75) {\textbf{H}};
	    \node[text=red] at (0.75,0.25) {\textbf{B}};
	\end{tikzpicture}} \ .
    \end{array}
\end{equation}
Overall, the set of representative state-action pairs is given by $\overline{\states \times \actions} = \bigcup_{i = 0}^{2} \{s_i\} \times \actions^{i}$ with $\actions^{0} = \actions^{2} = \{L, D, P\}$ and $\actions^{1} = \{L, R, U, D, P\}$.\footnote{$s_0$ and $s_2$ have fewer representative actions since $L$ and $U$, and also $R$ and $D$, are related by the reflection along the diagonal axis from top left to bottom right. This transformation leaves the state invariant and maps between the actions.}
Let $h: \states \times \actions \to \overline{\states \times \actions}$ map each state-action pair to the unique representative.
Details on everything discussed so far can be found in~\Cref{app:details_human_models}.

We define the human's ontology $\slambda: \Traj \to \R^{\overline{\states \times \actions}}$ via 
\begin{equation*}
  \big[\slambda(\traj)\big](s, a) \coloneqq \sum_{t = 0}^{2} \delta_{(s, a)}(h(s_t, a_t)),
\end{equation*}
which is the number of times that, up to symmetry, the state-action pair $(s, a)$ appears in the trajectory $\traj$.
Interestingly, this ontology is \emph{invariant} under transforming trajectories $\traj$ via symmetries since $h$ is invariant under transforming state-action pairs.
Set $\FLambda: \R^{\overline{\states \times \actions}} \to \R^{\Traj}$ as the linear function correponding to $\slambda$ via~\Cref{pro:map_linear_correspondence}, and let $\FGamma: \R^{\states \times \actions} \to \R^{\Traj}$ be the function from~\Cref{ex:classical_MDP} without discounting ($\gamma = 1$).\footnote{Note the small difference that now we consider reward functions that only depend on state-action pairs instead of state-action-state transitions.}
Finally, let $h^*: \R^{\overline{\states \times \actions}} \to \R^{\states \times \actions}$ be the function $h^*(\overline{R}) \coloneqq \overline{R} \circ h$ (see also the discussion surrounding~\Cref{eq:early_h}).
Then in~\Cref{eq:decomp_app} we show
\begin{equation}\label{eq:decomp}
  \FLambda = \FGamma \circ h^*. 
\end{equation}

Let $b: \Observations \to \Delta(\Traj) \subseteq \R^{\Traj}$ be the human's trajectory belief function, where we define $b(o)$ as the uniform distribution over all $\traj \in \Traj$ that get observed as $o$: $O(\traj) = o$ (cf.~\Cref{eq:belief_appendix}).
We then define the human's feature belief function $\featureb: \Observations \to \R^{\overline{\states \times \actions}}$ by
\begin{equation*}
  \big[\featureb(o) \big](s, a) \coloneqq \sum_{\traj \in \Traj} \big[b(o)\big](\traj) \cdot \big[\slambda(\traj)\big](s, a) .
\end{equation*}
This is the \emph{expected} number of times that, up to symmetry, the state-action pair $(s, a)$ occurs in a trajectory that led to observation $o$.
Set $\Featureb: \R^{\overline{\states \times \actions}} \to \R^{\Observations}$ and $\bp: \R^{\Traj} \to \R^{\Observations}$ as the linear functions correponding to $\featureb$ and $b$ via~\Cref{pro:map_linear_correspondence}.
Then as a consequence of~\Cref{eq:decomp}, we obtain
\begin{equation}\label{eq:decomp_two}
  \Featureb = \bp \circ \FGamma \circ h^*,
\end{equation}
as we show in~\Cref{eq:belief_function_equivariance}.

Finally, we assume that we have \emph{a priori} knowledge that the human's reward object lies in the subvectorspace $\Valid \subseteq \R^{\overline{\states \times \actions}}$ given by reward objects $\overline{R}$ with $\overline{R}(s, a) = 0$ whenever $a \neq P$.
In other words, we know that only the pressing-action can be rewarded, but we do \emph{not} know a priori that it is only rewarded when the hand is over the button. 
Furthermore, define $\Valid' \subseteq \R^{\states \times \actions}$ likewise as reward functions with $R(s, a) = 0$ whenever $a \neq P$.
Consider the commutative diagram
\begin{equation*}
  \begin{tikzcd}
    & & & & \R^{\Traj} \\
    \substack{ \R^{\overline{\states \times \actions}} \\ \rotatebox{90}{$\subseteq$} \\ \Valid} \ar[rrrru, "\FGamma \circ h^*", bend left] \ar[rrrrd, "\bp \circ \FGamma \circ h^*"', bend right] \ar[r, "h^*"] & 
    \substack{ \R^{\states \times \actions} \\ \rotatebox{90}{$\subseteq$} \\ h^*(\Valid) }\ar[rrru, "\FGamma", bend left = 20] \ar[rrrd, "\bp \circ \FGamma"', bend right = 20] \ar[r, "\id_{\R^{\states \times \actions}}"]& 
    \substack{ \R^{\states \times \actions} \\ \rotatebox{90}{$\subseteq$} \\ \Valid'} \ar[rru, "\FGamma"'] \ar[rrd, "\bp \circ \FGamma"]   \\
    & & & & \R^{\Observations}.
  \end{tikzcd}
\end{equation*}
This establishes three human belief models
\begin{align*}
  \Model_1 &= (\overline{\states \times \actions}, \ \FGamma \circ h^*, \ \bp \circ \FGamma \circ h^*, \ \Valid), \\
  \Model_2 &= (\states \times \actions, \ \FGamma, \ \bp \circ \FGamma, \ h^*(\Valid)), \\
  \Model_3 &= (\states \times \actions, \ \FGamma, \ \bp \circ \FGamma, \ \Valid'),
\end{align*}
together with the morphisms 
\begin{equation*}
  \begin{tikzcd}
    \Model_1 \ar[r, "h^*"] & \Model_2 \ar[r, "\id_{\R^{\states \times \actions}}"] & \Model_3.
  \end{tikzcd}
\end{equation*}

Our aim will be to show that the ambiguities of $\Model_1$ and $\Model_2$ will vanish since these models leverage a priori knowledge of symmetries, while the ambiguity of $\Model_3$ will not vanish.
Intuitively, $\Model_1$ is the true belief model of a human evaluator with symmetry-invariant features.
$\Model_2$ is the model that we ``specify'' and with which we ``cover'' the true belief model.
That it indeed covers $\Model_1$ follows from~\Cref{thm:existence_of_morphism} and the existence of the morphism $h^*: \Model_1 \to \Model_2$.

Write $g.(s, a)$ for the action of a symmetry-transformation $g \in G = D_4$ of the square on a state-action pair $(s, a)$ (cf~\Cref{app:details_human_models}).
Then the set of valid reward objects of $\Model_2$ is given by
\begin{equation}\label{eq:symmetry_invariant_reward_functions}
  h^*(\Valid) = \Big\lbrace  R \in \R^{\states \times \actions} \ \  \big| \ \ \forall g \in G \colon  R(g.(s, a)) = R(s, a) \text{ and } \forall a \neq P \colon R(s, a) = 0   \Big\rbrace. 
\end{equation}
In other words, it is the set of \emph{symmetry-invariant} reward functions that don't reward actions unequal to $P$.
Such reward functions play a role in symmetry-invariant reinforcement learning~\citep{Vanderpol2021}.
Finally, $\Model_3$ is the same model, but with a larger set of valid reward functions that are not necessarily symmetry-invariant.

Mathematically, all three models are faithful by~\Cref{pro:faithfulness_charac}.
As an aside, they are also balanced (\Cref{def:balanced}) by~\Cref{lem:product_of_row_constant_matrices}, which essentially means that the ontology and feature belief functions are row-constant.
The three models and their morphisms are also closely related to the three models $\Model^{\F}_{\F}$, $\Model^{\states \times \actions \times \states}_{\F}$, and $\Model^{\states\times \actions\times \states}_{\states \times \actions \times \states}$ that we study in~\Cref{app:resulting_diagram}.

\subsubsection{The ambiguity analysis}

We now analyze the ambiguities of the three models $\Model_1, \Model_2$, and $\Model_3$.
Since the morphism $h^*$ maps the set of valid reward objects $\Valid$ of $\Model_1$ precisely to the valid reward objects $h^*(\Valid)$ of $\Model_2$, by~\Cref{thm:existence_of_morphism}, $\Model_1$ and $\Model_2$ have the same ambiguity.
Thus, let us analyze the ambiguities of $\Model_1$ and $\Model_3$.

For $\Model_1$,~\Cref{pro:ambiguities_characterizations} shows that the ambiguity is given by $(\FGamma \circ h^*)\big[\ker(\bp \circ \FGamma \circ h^*) \cap \Valid \big]$.
For showing that it vanishes, we simply show $\ker(\bp \circ \FGamma \circ h^*) \cap \Valid = \ker(\Featureb) \cap \Valid =  0$.
Thus, let $\overline{R} \in \R^{\overline{\states \times \actions}}$ be a reward object in $\Valid$ with $\Featureb(\overline{R}) = 0$.
We need to show $\overline{R} = 0$.
Recall the representative states $s_0, s_1, s_2$ from~\Cref{eq:representative_states}.
Since $\overline{R} \in \Valid$, we have $\overline{R}(s, a) = 0$ for all $s \in \{s_0, s_1, s_2\}$ and all $a \neq P$, and so we simply need to show $\overline{R}(s_i, P) = 0$ for all $i = 0, 1, 2$.
Consider the following three observations:

\begin{equation*}
  o_0 = \raisebox{-0.35\height}{
  \begin{tikzpicture}[scale=1.5]
    \begin{scope}[yshift=-1.5cm]
      \draw[step=0.5cm,black, line width=1.5pt] (0,0) rectangle (1,0.5);
      \draw[line width=1pt] (0.5,0) -- (0.5,0.5);
      \node[text=blue] at (0.25,0.25) {\textbf{H}};
      \node[text=red] at (0.75,0.25) {\textbf{B}};
      \draw[->,thick] (1.2,0.25) -- (1.8,0.25);
    \end{scope}
    
    \begin{scope}[xshift=2cm, yshift=-1.5cm]
      \draw[step=0.5cm,black, line width=1.5pt] (0,0) rectangle (1,0.5);
      \draw[line width=1pt] (0.5,0) -- (0.5,0.5);
      \node[text=blue] at (0.25,0.25) {\textbf{H}};
      \node[text=red] at (0.75,0.25) {\textbf{B}};
      \draw[->,thick] (1.2,0.25) -- (1.8,0.25);
    \end{scope}    

    \begin{scope}[xshift=4cm, yshift=-1.5cm]
      \draw[step=0.5cm,black, line width=1.5pt] (0,0) rectangle (1,0.5);
      \draw[line width=1pt] (0.5,0) -- (0.5,0.5);
      \node[text=blue] at (0.65,0.25) {\textbf{H}};
      \node[text=red] at (0.85,0.25) {\textbf{B}};
      \draw[->,thick] (1.2,0.25) -- node[above] {P} (1.8,0.25);
    \end{scope}

    \begin{scope}[xshift=6cm, yshift=-1.5cm]
      \draw[step=0.5cm,black, line width=1.5pt] (0,0) rectangle (1,0.5);
      \draw[line width=1pt] (0.5,0) -- (0.5,0.5);
      \draw[step=0.5cm,black, line width=1pt] (0,0) grid (1,0.5);
      \node[text=blue] at (0.65,0.25) {\textbf{H}};
      \node[text=red] at (0.85,0.25) {\textbf{B}};
    \end{scope}
\end{tikzpicture}}
\end{equation*}

\begin{equation*}
  o_1 = \raisebox{-0.35\height}{
  \begin{tikzpicture}[scale=1.5]
    \begin{scope}[yshift=-1.5cm]
      \draw[step=0.5cm,black, line width=1.5pt] (0,0) rectangle (1,0.5);
      \draw[line width=1pt] (0.5,0) -- (0.5,0.5);
      \node[text=blue] at (0.25,0.25) {\textbf{H}};
      \node[text=red] at (0.75,0.25) {\textbf{B}};
      \draw[->,thick] (1.2,0.25) -- (1.8,0.25);
    \end{scope}
    
    \begin{scope}[xshift=2cm, yshift=-1.5cm]
      \draw[step=0.5cm,black, line width=1.5pt] (0,0) rectangle (1,0.5);
      \draw[line width=1pt] (0.5,0) -- (0.5,0.5);
      \node[text=blue] at (0.25,0.25) {\textbf{H}};
      \node[text=red] at (0.75,0.25) {\textbf{B}};
      \draw[->,thick] (1.2,0.25) -- node[above] {P}  (1.8,0.25);
    \end{scope}    

    \begin{scope}[xshift=4cm, yshift=-1.5cm]
      \draw[step=0.5cm,black, line width=1.5pt] (0,0) rectangle (1,0.5);
      \draw[line width=1pt] (0.5,0) -- (0.5,0.5);
      \node[text=blue] at (0.25,0.25) {\textbf{H}};
      \node[text=red] at (0.75,0.25) {\textbf{B}};
      \draw[->,thick] (1.2,0.25) -- node[above] {P} (1.8,0.25);
    \end{scope}

    \begin{scope}[xshift=6cm, yshift=-1.5cm]
      \draw[step=0.5cm,black, line width=1.5pt] (0,0) rectangle (1,0.5);
      \draw[line width=1pt] (0.5,0) -- (0.5,0.5);
      \node[text=blue] at (0.25,0.25) {\textbf{H}};
      \node[text=red] at (0.75,0.25) {\textbf{B}};
    \end{scope}
\end{tikzpicture}}
\end{equation*}

\begin{equation*}
  o_2 = \raisebox{-0.35\height}{
  \begin{tikzpicture}[scale=1.5]
    \begin{scope}[yshift=-1.5cm]
      \draw[step=0.5cm,black, line width=1.5pt] (0,0) rectangle (1,0.5);
      \draw[line width=1pt] (0.5,0) -- (0.5,0.5);
      \node[text=blue] at (0.25,0.25) {\textbf{H}};
      \node[text=red] at (0.75,0.25) {\textbf{B}};
      \draw[->,thick] (1.2,0.25) -- node[above] {P} (1.8,0.25);
    \end{scope}
    
    \begin{scope}[xshift=2cm, yshift=-1.5cm]
      \draw[step=0.5cm,black, line width=1.5pt] (0,0) rectangle (1,0.5);
      \draw[line width=1pt] (0.5,0) -- (0.5,0.5);
      \node[text=blue] at (0.25,0.25) {\textbf{H}};
      \node[text=red] at (0.75,0.25) {\textbf{B}};
      \draw[->,thick] (1.2,0.25) -- node[above] {P}  (1.8,0.25);
    \end{scope}    

    \begin{scope}[xshift=4cm, yshift=-1.5cm]
      \draw[step=0.5cm,black, line width=1.5pt] (0,0) rectangle (1,0.5);
      \draw[line width=1pt] (0.5,0) -- (0.5,0.5);
      \node[text=blue] at (0.25,0.25) {\textbf{H}};
      \node[text=red] at (0.75,0.25) {\textbf{B}};
      \draw[->,thick] (1.2,0.25) -- node[above] {P} (1.8,0.25);
    \end{scope}

    \begin{scope}[xshift=6cm, yshift=-1.5cm]
      \draw[step=0.5cm,black, line width=1.5pt] (0,0) rectangle (1,0.5);
      \draw[line width=1pt] (0.5,0) -- (0.5,0.5);
      \node[text=blue] at (0.25,0.25) {\textbf{H}};
      \node[text=red] at (0.75,0.25) {\textbf{B}};
    \end{scope}
\end{tikzpicture}}
\end{equation*}

Then it is easy to show that $\big[\Featureb(\overline{R})\big](o_2) = 0$ implies $\overline{R}(s_2, P) = 0$ since $(s_2, P)$ is, up to symmetry, the only state-action pair that is compatible with the observation $o_2$.
Then, $\big[ \Featureb(\overline{R}) \big](o_1) = 0$ implies $\overline{R}(s_1, P) = 0$ since $(s_1, P)$ is the only state-action pair \emph{other than} $(s_2, P)$ that is compatible with the observation $o_1$ and could a priori have a non-zero contribution to the reward.
Finally, $\big[ \Featureb(\overline{R})\big](o_0) = 0$ implies $\overline{R}(s_0, P) = 0$ for similar reasons.
We present details of these arguments in~\Cref{app:details_ambiguity_analysis}.
Overall, we have thus showed that $\overline{R} = 0$, and thus $\ker(\Featureb) \cap \Valid = 0$, which implies $\Amb^{\Model_1} = \FLambda(\ker(\Featureb) \cap \Valid) = 0$ (cf.~\Cref{pro:ambiguities_characterizations}).
Since $\Model_1$ and $\Model_2$ have the same ambiguities, also our covering model $\Model_2$ has vanishing ambiguity: $\Amb^{\Model_2} = 0$.

Now we show that the ambiguity of $\Model_3$ does \emph{not} vanish.
Consider the following two states, which are symmetry-transformed versions of state $s_1$ from~\Cref{eq:representative_states}:
\begin{equation*}
    \begin{array}{ccc}
        s_1' = \raisebox{-0.45\height}{\begin{tikzpicture}[scale=1.5]
            \draw[step=0.5cm,black, line width=1.5pt] (0,0) rectangle (1,1);
            \draw[line width=1pt] (0,0.5) -- (1,0.5);
            \draw[line width=1pt] (0.5,0) -- (0.5,1);
	    \node[text=blue] at (0.75,0.75) {\textbf{H}};
	    \node[text=red] at (0.75,0.25) {\textbf{B}};
	\end{tikzpicture}} \ , \quad 
        & 
	s_1'' = \raisebox{-0.45\height}{\begin{tikzpicture}[scale=1.5]
            \draw[step=0.5cm,black, line width=1.5pt] (0,0) rectangle (1,1);
            \draw[line width=1pt] (0,0.5) -- (1,0.5);
            \draw[line width=1pt] (0.5,0) -- (0.5,1);
	    \node[text=blue] at (0.75,0.25) {\textbf{H}};
	    \node[text=red] at (0.75,0.75) {\textbf{B}};
	\end{tikzpicture}}.
    \end{array}
\end{equation*}
Then, let $R': \states \times \actions \to \R$ be the reward function with
\begin{equation*}
  R'(s, a) = 
  \begin{cases}
    1, \ s = s_1' \text{ and } a = P \\
    - 1, \ s = s_1'' \text{ and } a = P \\
    0, \text{else}.
  \end{cases}
\end{equation*}
Clearly, we have $R' \in \Valid'$.
Then note that for all observations $o$, we have $(\bp \circ \FGamma)_{o,(s_1',P)} = (\bp \circ \FGamma)_{o,(s_1'',P)}$, for symmetry reasons.\footnote{For showing this, recall that $(\bp \circ \FGamma)_{o,(s, a)}$ is simply the expected number of times that state-action pair $(s, a)$ appears in a trajectory that gives rise to observation $o$. For each trajectory, the up-down mirrored trajectory creates the same observation, leading to the aforementioned symmetry.}
Thus, we have $\big[ (\bp \circ \FGamma)(R') \big](o) = 0$ for all $o \in \Observations$, as we detail in~\Cref{eq:final_computation}.
This shows $0 \neq R' \in \ker(\bp \circ \FGamma) \cap \Valid'$, and consequently $0 \neq \FGamma(R') \in \FGamma(\ker(\bp \circ \FGamma) \cap \Valid') = \Amb^{\Model_3}$ (\Cref{pro:ambiguities_characterizations}), proving the claim that the ambiguity is nontrivial.
Crucially, in order to construct $R'$, we needed to allow that the symmetry-related state-action pairs $(s_1', P)$ and $(s_1'', P)$ have different rewards.

\subsubsection{Conclusion of the example}

This example highlights that a priori knowledge of symmetry-invariant reward functions (via model $\Model_2$ with its ambiguity $h^*(\Valid)$, see~\Cref{eq:symmetry_invariant_reward_functions}) or symmetry-invariant features (via model $\Model_1$) can help to infer the correct return function from the human's feedback.
We highlight again that one could in practice work with model $\Model_2$ even if $\Model_1$ were the ``true'' model, the reason being that $\Model_2$ covers $\Model_1$ and has a vanishing ambiguity.
Future work could analyze this case in more detail, by developing a more general theory of symmetry-invariant human belief models.

\subsection{A proposal for belief model covering in practice}\label{sec:in_practice}

So far, our discussion has been purely theoretical:
We showed that if a model $\widehat{\Model} = \big(\widehat{\Features}, \widehat{\FLambda}, \widehat{\Featureb}, \widehat{\Valid}\big)$ is complete and covers the true belief model $\Model = (\Features, \FLambda, \Featureb, \Valid)$, then the true return function $G$ can can be inferred from the human's feedback $G_{\Observations}$ (\Cref{thm:Morphism_preserves_identifiability}, statement 4).
This raises the following two questions:
\begin{enumerate}
  \item How can $\widehat{\Model}$ be specified?
  \item How can $G$ be determined in practice, using $\widehat{\Model}$ and $G_{\Observations}$?
\end{enumerate}
We now give preliminary answers to these questions in~\Cref{sec:for_answering_1,sec:learning_G}, based on using foundation models for both the ontology $\widehat{\FLambda}$ and the feature belief function $\widehat{\Featureb}$.
We hope our ideas can inspire future empirical work.

\subsubsection{Defining $\widehat{\Model}$ for answering question 1}\label{sec:for_answering_1}

To answer question 1, first, one needs to choose an MDP together with trajectories $\Traj$, and observations $\Observations$.
Ideally, whole trajectories $\traj \in \Traj$ or parts of them, and all observations, can be ``tokenized'' so that one can feed them into foundation models.
Let $\widehat{\slambda}: \Traj \to \R^{\widehat{\Features}}$ be a foundation model, which we interpret as a function from trajectories to an internal representation space with $|\widehat{\Features}|$-many neurons.\footnote{This means that we remove the output functionality from this model.}
Then, define $\widehat{\FLambda}: \R^{\widehat{\Features}} \to \R^{\Traj}$ as the linear function corresponding to $\widehat{\slambda}$ according to~\Cref{pro:map_linear_correspondence}.

For $\widehat{\slambda}$ to be a valid ontology that is part of a belief model that covers the true model, by~\Cref{thm:existence_of_morphism}, we need there to exist an (implicit) linear ontology translation $\Psi: \R^{\widehat{\Features}} \to \R^{\Features}$.
There is substantial prior work showing that human concepts are represented linearly in foundation model's representation spaces~\citep{Mikolov2013a,Park2024linear,Turner2024,Nanda2023,Wang2023,Gurnee2024}.
Most relevant to our claims, sparse autoencoders~\citep{Cunningham2023,Bricken2023} directly construct a linear transformation that maps from a foundation model's representation space to a space of human-interpretable features, thus constructing a function akin to our (only implicitly needed) linear ontology translation $\Psi$.

Notably, $\widehat{\slambda}$ needs to be a \emph{capable} foundation model for such a linear function to have any hope to be an exact ontology translation.
After all, imagine we show $\widehat{\slambda}$ the Riemann hypothesis: 
If it is not vastly superhuman, then it cannot determine whether this hypothesis is \emph{true}, which we consider an important feature that likely appears in the human's ontology $\slambda$.
Since we are concerned with scalable oversight, which is about the problem of ensuring alignment of future, powerful AI systems, we assume $\widehat{\slambda}$ to be very capable, and thus that $\Psi$ is an exact ontology translation.
Overall, this gives some preliminary confidence that for a capable foundation model $\widehat{\slambda}: \Traj \to \R^{\widehat{\Features}}$, there will exist a linear ontology translation $\Psi: \R^{\widehat{\Features}} \to \R^{\Features}$, such that $\Psi \circ \widehat{\slambda} = \slambda$.

Now, let $\widehat{\featureb}: \Observations \to \R^{\widehat{\Features}}$ be another foundation model (in the proposals below it will be an adaptation of $\widehat{\slambda}$).
Then, define $\widehat{\Featureb}: \R^{\widehat{\Features}} \to \R^{\Observations}$ as the linear functions corresponding to $\widehat{\featureb}$ according to~\Cref{pro:map_linear_correspondence}.
Set $\widehat{\Valid} = \R^{\widehat{\Features}}$.
Set $\widehat{\Model} = \big( \widehat{\Features}, \widehat{\FLambda}, \widehat{\Featureb}, \widehat{\Valid} \big)$.

Why or when would this be a useful specification?
As discussed before, we need $\widehat{\Model}$ to be complete and to cover the (implicit) ``true'' belief model $\Model$.
Based on sufficient conditions we found in~\Cref{thm:existence_of_morphism} and~\Cref{thm:characterization_complete}, we thus need to ensure the following two properties:
\begin{enumerate}[(i)]
  \item The ontology translation $\Psi: \R^{\widehat{\Features}} \to \R^{\Features}$ is belief-respecting: $\Psi \circ \widehat{\featureb} = \featureb$.
  \item For all $\traj \in \Traj$, $\widehat{\slambda}(\traj)$ is contained in the span of the image of $\widehat{\featureb}$: $\widehat{\slambda}(\traj) \in \Big\lbrace \sum_{o \in \Observations} Z_{o} \widehat{\featureb}(o)  \mid  Z_{o} \in \R \Big\rbrace$.
\end{enumerate}

\paragraph{Ensuring (i).}
This is the most speculative part of our proposal.
Crucially, if the human does not recognize the presence of a feature in $o \in \Observations$ due to limited capabilities, then $\widehat{\featureb}$ should ideally \emph{also} not recognize this feature so that translating from one ontology into the other respects beliefs. 
Conceptually, this means that $\widehat{\featureb}$ should \emph{simulate}, in the feature space $\widehat{\Features}$, the human's beliefs and understanding.
We make three proposals:

\begin{itemize}
  \item For research prototyping, one could restrict to a problem with ``pure partial observability''.
    In other words, choose a setting where observations are given as $o = O(\traj)$ for trajectories $\traj \in \Traj$, and where potentially superhuman capabilities do not make it easier to infer further return-relevant aspects of $\traj$ from $O(\traj)$.
    This is intuitively the case if information is simply entirely ``missing'' from observations.
Then, simply choose $\widehat{\featureb} \coloneqq \widehat{\slambda}$, applied to observations instead of trajectories.
In this case, the fact that crucial information is equally obstructed to the human with feature belief function $\featureb$ and to the AI with $\widehat{\featureb}$ should ensure that the ontology translation also correctly translates feature beliefs: $\Psi \circ \widehat{\featureb} = \featureb$.
  \item Now consider a setting that may go beyond ``pure partial observability''.
    One speculative idea for how to construct $\widehat{\featureb}$ is to prepend a ``belief prompt'' $\bpr$ to inputs of $\widehat{\slambda}$ that nudges the model to think more in the way a human evaluator would think~\citep{Park2024agent}:
\begin{equation*}
  \widehat{\featureb}(o) \coloneqq \widehat{\slambda}_{\bpr}(o) \coloneqq \widehat{\slambda}(\bpr, o).
\end{equation*}
An example of such a prompt would be
\begin{equation*}
  \bpr = \text{``Think about the following input like a typical human evaluator:''} 
\end{equation*}
Alternatively, one could potentially achieve this by finetuning $\widehat{\slambda}$ to obtain $\widehat{\featureb}$.
Unfortunately, while foundation models can simulate the behavior of specific people in their \emph{outputs}, it is unclear whether this also reflects in their \emph{internal representations}.
For example, prior work shows that the truth-value of statements can sometimes be linearly predicted from internal representations even when the model lies, leading to the potential for AI lie detectors~\citep{Azaria2023,Burns2023discovering}.
However, other work trains a foundation model to predict human behavior in experiments and finds it to have internal representations that can predict human neural activity when engaging in the same task~\citep{Binz2024}. 
  \item Alternatively, one could also consider defining $\widehat{\featureb} \coloneqq \widehat{\slambda}_{\text{early}}$ as an earlier training checkpoint of $\widehat{\slambda}$. 
    Being an earlier training checkpoint, $\widehat{\featureb}$ would then be less capable than $\widehat{\slambda}$, leading to the potential that it has the same blindspots in understanding observations $o \in \Observations$ as the human evaluator.
\end{itemize}

\paragraph{Ensuring (ii).}
To ensure that for all $\traj \in \Traj$ we have $\widehat{\slambda}(\traj) \in \Big\lbrace \sum_{o \in \Observations} Z_o\widehat{\featureb}(o) \mid Z_o \in \R \Big\rbrace$, it is important to ensure that the image of $\widehat{\featureb}: \Observations \to \R^{\widehat{\Features}}$ is ``large'', i.e., spans as much as possible of the representation space.
To form more intuitions on this, note that by~\Cref{thm:Morphism_preserves_identifiability}, $\widehat{\Model}$ being complete (which would be implied by property (ii)) requires also the true belief model $\Model$ to be complete.
Again, by~\Cref{thm:characterization_complete}, a sufficient condition for this is that for all $\traj \in \Traj$, $\slambda(\traj) \in \Big\lbrace \sum_{o \in \Observations} Z_o\featureb(o) \mid Z_o \in \R \Big\rbrace$.
In particular, any ``bad'' feature $\feature \in \Features$ that can ever be present in a trajectory (meaning $\big[\slambda(\traj)\big](\feature) \neq 0$) needs to also sometimes be \emph{believed} to be present by the human (meaning there exists an $o \in \Observations$ with $\big[\featureb(o)\big](\feature) \neq 0$).
This is a relaxation from the requirement that the human understands \emph{all} observations perfectly, but it does mean that there needs to be a variety of observations $o \in \Observations$ that the human understands well enough to detect their diverse underlying problems.
It will depend on the specific MDP and setup of an experiment to reason about how to ensure or purposefully violate this property.

\subsubsection{Learning $G$ using $\widehat{\Model}$ and $G_{\Observations}$ for answering question 2}\label{sec:learning_G}

Now that we have discussed preliminary approaches for how to specify a complete model $\widehat{\Model} = \big( \widehat{\Features}, \widehat{\FLambda}, \widehat{\Featureb}, \widehat{\Valid}\big)$ that covers the true belief model $\Model$, we can turn to the question of how to use $\Model$ and the human's feedback $G_{\Observations}: \Observations \to \R$ to determine the true return function $G$.
By statement 4 in~\Cref{thm:Morphism_preserves_identifiability}, we can determine $G$ as $G = \widehat{\FLambda}(R_{\widehat{\Features}})$ for $R_{\widehat{\Features}} \in \R^{\widehat{\Features}}$ with $\widehat{\Featureb}(R_{\widehat{\Features}}) = G_{\Observations}$.

We now unpack that.
Remembering the relation between $\widehat{\FLambda}$ and the foundation model $\widehat{\slambda}$, and $\widehat{\Featureb}$ and the foundation model $\widehat{\featureb}$ via~\Cref{pro:map_linear_correspondence}, we want to determine $R_{\widehat{\Features}}$ such that for all $o \in \Observations$, we have
\begin{equation}\label{eq:query_formula_one}
  G_{\Observations}(o) = \big[ \widehat{\Featureb}(R_{\widehat{\Features}}) \big](o) = \big\langle \widehat{\featureb}(o), R_{\widehat{\Features}} \big\rangle.
\end{equation}
This can be achieved by attaching $R_{\widehat{\Features}}$ as a linear reward probe to the representation space of $\widehat{\featureb}$ and learning the function $G_{\Observations}$ by supervised learning, with all parameters except the reward head being frozen.

In the case that $G_{\Observations}$ cannot be directly evaluated but that it is indirectly accessible in the form of \emph{choice probabilities} between observations, one can learn $R_{\widehat{\Features}}$ by logistic regression as in~\citet{Christiano2017}.
See~\Cref{app:ambiguity_balanced} for a possible correspondence between $\widehat{\Featureb}(R_{\widehat{\Features}})$ and the resulting choice probabilities.
Notably, the approach via logistic regression, however, loses a theoretical guarantee: 
Namely, $G_{\Observations}$ can at best be learned up to an additive constant. 
If $\widehat{\featureb}$ and $\widehat{\slambda}$ were \emph{balanced} (meaning that the total weight of feature strengths is constant over all observations and trajectories, respectively), this would lead to the inferred $G$ also being correct up to an additive constant by~\Cref{pro:ambiguity_balanced}.
Since additive constants in return functions are inconsequential for policy optimization, this would be fine.
However, typically the representation spaces of foundation models are not normalized, the balancedness property does not hold, and this guarantee breaks. 
It is then an empirical question to what extent this breakage is an issue or how to resolve it.

After successful learning, we can then compute the true return function $G$ for $\traj \in \Traj$ as:
\begin{equation}\label{eq:query_formula_two}
  G(\traj) = \big[ \widehat{\FLambda}(R_{\widehat{\Features}}) \big](\traj) = \big\langle \widehat{\slambda}(\traj), R_{\widehat{\Features}} \big\rangle.
\end{equation}
In other words, we attach the learned linear reward probe $R_{\widehat{\Features}}$ to the representation space of $\widehat{\slambda}$ and use it to compute returns. 
These can then be used to train a policy to maximize the policy evaluation function~\Cref{eq:policy_evaluation_function} via standard reinforcement learning techniques.

\subsubsection{Further remarks}

Looking at the definition of the human belief model $\widehat{\Model}$, we see that it includes the ontology $\widehat{\FLambda}: \R^{\widehat{\Features}} \to \R^{\Traj}$ and feature belief function $\widehat{\Featureb}: \R^{\widehat{\Features}} \to \R^{\Observations}$.
These can be extremely large matrices.
However, since in the process of training $R_{\widehat{\Features}}$ and computing $G$, we only need to be able to \emph{query} the resulting observation return function and return function on specific observations and trajectories as in~\Cref{eq:query_formula_one,eq:query_formula_two}, the matrices never need to be stored or used in their entirety. 
Thus, the size of the matrices is not a concern.

We also remark on a special case we mentioned before:
If we are in a case of ``pure'' partial observability where observations $o \in \Observations$ contain no information that $\widehat{\slambda}$ understands better than the human, then we proposed to simply set $\widehat{\featureb} \coloneqq \widehat{\slambda}$, applied to observations $o \in \Observations$.
In that case, the procedure we describe is essentially classical RLHF, with the only --- crucial --- difference that during training of $R_{\widehat{\Features}}$, we only show observations, instead of full trajectories, to the reward model.
This prevents a model misspecification where the data the model reads differs from what the human sees, and could theoretically allow generalization to data $\traj \in \Traj$.
~\citet{Lang2024} consider naive RLHF, where the initialized return function reads entire trajectories during training time while the human's view is obstructed, leading to issues of deceptive inflation and overjustification after policy optimization.

\subsubsection{Limitations for the proposal}

We conclude by summarizing the limitations of the proposal in its current form.
At the very foundations, it assumes that real humans are compatible with a suitable belief model as defined in this work, that human values are captured by return functions over trajectories, and that the model allows for a \emph{linear} decomposition of the return function over features.
All of these assumptions are speculative and may need refinement.

The proposal then assumes that $\widehat{\Model}$ is an \emph{exact} covering belief model for the latent human belief model. 
In particular, it is assumed that $\widehat{\slambda}$ reads entire trajectories and allows for the existence of a linear ontology translation to the true human ontology. 
Finally, the feature belief function $\widehat{\featureb}$ needs to ``translate'' to the true human belief, and it is unclear whether our proposals can achieve this sufficiently. 
To achieve completeness, the proposal also requires careful environment design to ensure that the beliefs $\widehat{\featureb}(o)$ ``cover'' the feature combinations $\widehat{\slambda}(\traj)$ sufficiently.

In conclusion, substantial future work is needed to validate the utility of the proposal.
We propose such work in~\Cref{sec:future_work} for many of the limitations above.

\section{Discussion}\label{sec:discussion}

In this discussion, we summarize our work, survey related work, propose ideas for future work, and conclude.

\subsection{Summary}\label{sec:summary}

In this work, we have introduced the notion of a human belief model, based on modeling a human's ontology and feature belief function. 
The goal of such a model is to aid the inference of the human's implicit return function from feedback.
In our framework, the feedback, in the form of an observation return function, is viewed as carrying information about the reward of features that the human \emph{believes} to be associated with an observation.  
Once the feature rewards, in the form of a reward object within a valid set, are inferred, they can be used together with the human's ontology to infer a return function. 
We defined and characterized the resulting \emph{ambiguity} in the return function in terms of the belief model and showed that for \emph{complete} models, the ambiguity disappears.
Complete models have an important sufficient condition in terms of a linear coverage of the features of any trajectory by feature beliefs of observations (\Cref{thm:characterization_complete}).
This shows that for observations that are varied enough and provide ample information in total, a correct return function inference is possible, which we demonstrated in simple conceptual examples in~\Cref{sec:examples}.

Since the human belief model is not known in practice, we then introduced the notion of belief model \emph{coverage}.
Here, a model $\widehat{\Model}$ covers another model $\Model$ if $\widehat{\Model}$ can represent all return functions and observation return functions that can be expressed in $\Model$. 
If a complete model covers the true belief model, then it can be used for the return function inference just as well (\Cref{thm:Morphism_preserves_identifiability}). 
We then characterized belief model coverage in terms of belief model morphisms, and found an important sufficient condition given by the existence of a linear translation from the covering model's ontology to the true model's ontology that is also compatible with the feature belief functions (\Cref{thm:existence_of_morphism}).
We then studied a conceptual example of a human with symmetry-invariant feature beliefs, which we could cover with a model whose completeness stems from the valid reward functions being symmetry-invariant.

Finally, in~\Cref{sec:in_practice} we sketched a proposal for how to find covering human belief models in practice, by using foundation models for both the human's ontology and feature belief function.
For the latter, it is important to ensure that the foundation model has a similar understanding of the observtions as the human evaluator, for which we sketched out three proposals.
That the resulting belief model might cover the true belief model relies on prior research on the linear representation hypothesis, which provides evidence that a belief-respecting linear ontology translation could indeed exist.\footnote{We emphasize again that our theory only requires its existence and no explicit specification of this ontology translation.}
Our hope is that our proposal can help to find covering models that are easier to determine than the return function itself, which might subsequently be learned with modest effort as an approach to scalable oversight (cf. Section~\ref{sec:scalable_oversight}).

\subsection{Related work}\label{sec:related_work}

\paragraph{Other human modeling approaches.}
The human belief models we introduce in this work are largely about modeling a human's ontology and feature belief function for learning from the human's feedback.
Similar work~\citep{Marklund2024} is about the special case of humans observing \emph{partial trajectories} and forming beliefs over the rest of the trajectory.

There is also work that models aspects about humans different from their beliefs.
For example, reward-rational choice~\citep{Jeon2020} requires modeling human choice probabilities for choices over various options like trajectory-pairs, language-utterances, or initial environment states.
An example of this framework is inverse reinforcement learning~\citep{Ng2000}, which has also been considered in partially observable environments~\citep{IRL_in_PO}.
Reward-rational choice can be regarded a special case of assistance problems~\citep{Fern2014,CIRL2016,Shah2021}, which requires a model of the human's action selection in a cooperative two-player game.
This framework has recently been generalized to a partially observable setting~\citep{Emmons2024}.
Much of this work makes specific assumptions about the human's model, e.g. by assuming a Boltzmann-rational or optimal selection of choices. 
~\citet{Zhixuan2024} instantiates a version of assistance games in which a human's utterance is modeled using a language model.
Finally,~\citet{Hatgiskessell2025} researches how to influence human evaluators to conform with a theoretical model of human choices.
Compared to all this work, we model human \emph{beliefs} about AI behavior instead of human actions or choices.\footnote{Only in~\Cref{app:row_constant_theory} do we consider human choices.} 
In~\Cref{sec:theory_extensions} we propose research directions to combine these types of work.

\paragraph{Deception in AI.}
Our work generalizes the human belief modeling from~\citet{Lang2024}, which is meant to address deceptive AI behavior that also surfaces in recent empirical work for cases where humans lack evaluative capacity~\citep{Cloud2024,Denison2024,Wen2024,Williams2024}.
Deception in AI systems can also occur for various other reasons~\citep{park2024ai}, e.g., when language agents are put under pressure~\citep{Scheurer2023}.
An important theoretical concern is deceptive alignment, in which an AI system follows the given goals while actually planning a later takeover~\citep{Hubinger2019}.
Recent work~\citep{Greenblatt2024} substantiates this concern by showing that the language model Claude plays along with a new \emph{harmful} goal for the purpose of preventing that the learning process updates its safety behavior in the long-term. 
Finally, deception has also been formalized for structural causal games~\citep{Ward2023}.

\paragraph{Surfacing latent knowledge.}
In~\Cref{sec:in_practice}, we already mentioned sparse autoencoders~\citep{Cunningham2023,Bricken2023}, which construct an explicit linear transformation from a foundation model's representation space to human-interpretable features, which is in the spirit of a linear ontology translation (\Cref{def:ontology_translation}).
Instead of decoding the AI's \emph{entire} ontology in a human-interpretable way, other paradigms seek to construct a reporter that can be queried for specific information~\citep{Christiano2021}.
In this direction, recent work builds toward AI lie detectors by finding internal linear representations of truth~\citep{Burns2023discovering,Azaria2023,Marks2024}, with alternative interpretations of such findings discussed in~\citet{Liu2023}.
Other work linearly predicts concepts like harmfulness~\citep{Zou2024circuit_breakers}, theft advice~\citep{Roger2023}, the activation of a harmful backdoor~\citep{Macdiarmid2024}, and concepts related to honesty and power, among others~\citep{Zou2023}.

\paragraph{Amplified oversight.}
While our work aims to learn from feedback of humans who potentially lack capabilities, work on amplified oversight tries to \emph{increase} the human's evaluation capabilities through AI assistance to achieve scalable oversight.
Recursive reward modeling is the general proposal to train AI models by reward modeling and then using their assistance to evaluate the next generation of AIs~\citep{leike2018scalableagentalignmentreward}.
~\citet{Saunders2022} shows that model-written critiques of summaries can help humans find flaws that they would have missed on their own.
This raises the question why to trust the critiquing AI, which leads to the idea to, in turn, criticize the critic. 
Recursively, this leads to a debate in which the debaters are trained to produce arguments that are persuasive to a human judge~\citep{Irving2018}.
This requires for debates to surface true and useful information to the human judges, which has found support for reading comprehension tasks~\citep{Michael2023}. 
Optimizing debaters to be persuasive then increases the human's ability to identify the truth~\citep{Khan2024,Kenton2024}. 
Finally, some work uses AI to directly give feedback based on a constitution~\citep{Bai2022} or model spec~\citep{Guan2025}, thus removing humans from the evaluation process.

\paragraph{Easy-to-hard and weak-to-strong generalization.}
Instead of amplifying the evaluator capabilities, other work for scalable oversight relies on easy examples that humans can reliably evaluate, which then requires the reward model to generalize to data that is harder to evaluate~\citep{Sun2024}. 
Language models have also shown to generalize tasks like STEM questions from easy to hard data~\citep{Hase2024}.
Weak-to-strong generalization differs by trying to generalize from weak \emph{evaluation} on potentially \emph{hard} data~\citep{Burns2023weak}.
Our proposal from~\Cref{sec:in_practice} can be considered an approach to weak-to-strong generalization since we aim to learn a correct return function $G$ from feedback of an evaluator with potentially faulty beliefs.
Since weak supervision is plausibly cheaper to obtain than strong supervision, there is also work that investigates tradeoffs to find the correct allocation of a fixed budget to label data with different data labeling qualities~\citep{Mallen2024balancing}.

In~\Cref{app:interpretations_related_work}, we briefly interpret amplified oversight, easy-to-hard generalization, and classical RLHF together with weak-to-strong generalization in terms of our theoretical framework by interpreting their underlying evaluator belief modeling assumptions and reasoning about their ambiguities and learned return functions. 

\subsection{Future work}\label{sec:future_work}

\subsubsection{Theory extensions}\label{sec:theory_extensions}

Several extensions and generalizations of our theory could be studied in future work.
We assumed that we can exactly specify a belief model that covers the true human belief model. 
Future work could develop an approximate theory, where there is a quantifiable error in the belief model.
This could proceed along similar lines as~\citet[Theorem 5.4]{Lang2024}, where the belief matrix is perturbed by a known error, leading to a quantifiable error in the inferred return function.
Furthermore, we assumed that the human's return function is linear in the features of trajectories. 
One could study non-linear models, which would also theoretically ground the use of non-linear reward probes instead of the linear probes we propose in~\Cref{sec:in_practice}.
We also assumed that learned return functions can read entire trajectories, which might be unrealistic for very complex environments. 
It would thus be interesting to develop a theory based on a second set of observations $\Observations'$ for the learned return function and the trained policy. 
In the practical proposal, this would also mean that we cannot assume to have a foundation model $\widehat{\slambda}$ that translates to the human's true ontology --- instead, it would represent another feature belief function.
Relaxing the capabilities of the foundation model could also help to model a case in which the human evaluator has information that is hidden from the learned return function or resulting policy.

The ambiguity is a measure of the \emph{information} that is available in the feedback, given a belief model, for determining the human's return function.
Future work could theoretically study concrete learning procedures, which are about \emph{extracting} said information.
One could, for example, study training distributions over the observations $\Observations$ to determine sample complexity bounds for the error of the resulting return function, or the regret of the resulting policy.
This is theoretically interesting since the usage of the learned return function in policy optimization involves two shifts compared to the return function's learning process:
First, it needs to evaluate trajectories instead of observations, making use of an ontology instead of a feature belief function; and second, the policy optimization leads to a further distribution shift over trajectories, which can in turn lead to increased regret even if the return function seemed accurate before~\citep{Fluri2024perilsoptimizinglearnedreward}.
Finally, if there is remaining ambiguity, it would also be desirabe to study protocols for choosing a return function \emph{within} the ambiguity, possibly using a priori knowledge about ``human-like'' return functions that is not captured by our notion of valid reward objects.

Going beyond our framework, one could attempt a synthesis with work that models the human's \emph{action selection}, which we discussed at the start of~\Cref{sec:related_work}.
For example, one could consider reward-rational choice or assistance games under the assumption that humans form beliefs based on observations.
~\citet{Emmons2024} does a first step in this direction by assuming humans form rational beliefs based on knowledge of a prior of the agent's policy.
In that framework, they then study the notion of information interference, which leads to an increase in the human's \emph{uncertainty} about the world state.
If future work were to study more general, possibly faulty, belief models for humans in partially observable assistance games, this could make it possible to go beyond information interference to study deception.

\subsubsection{Empirical work}\label{sec:empirical_work}

We would be interested in attempts to instantiate our proposal from~\Cref{sec:in_practice}.
One could test the proposal in settings with synthetic humans with a known ontology and feature belief function, which can for small MDPs allow to compute the ambiguity explicitly.
This should make it possible to make concrete predictions about experimental results.
It would also be interesting to study settings with partial observability in which capable AI does \emph{not} have an advantage over humans in understanding the meaning of the observations.
As we discussed in our proposal, that should allow to use the same foundation model for the ontology and feature belief function, with the latter only applied to observations.
This could then be compared with a baseline of ``naive'' RLHF in which the initialized return function reads entire trajectories during the learning process, similar to the conceptual examples from~\citet{Lang2024}.
It would be interesting to study different levels of observability to dial the ambiguity up or down.
We also encourage to break some of our theoretical assumptions, e.g., by using non-linear reward probes.
Finally, it would be desirable to learn how our approach can be \emph{combined} with approaches in the direction of amplified oversight, easy-to-hard generalization, or weak-to-strong generalization discussed in~\Cref{sec:related_work}.
For example, a combination of our work with easy-to-hard generalization could seek to only train the reward probe on data where the human beliefs are \emph{modeled correctly} by $\widehat{\featureb}$, which should be a superset of easy data.

Instead of studying the whole pipeline, one could also empirically assess the underlying assumptions. 
The most precarious assumption is that it is possible to construct a feature belief function $\widehat{\featureb}$, for which we discuss proposals in~\Cref{sec:for_answering_1}.
In situations that are not purely based on partial observability, we proposed to prompt or finetune the model to step ``into the shoes'' of a human evaluator, or to use earlier training checkpoints of a model to decrease its capabilities to that of a human evaluator (assuming future models that would otherwise be superhuman).
For this to work, there needs to be a linear ontology translation that is compatible with $\widehat{\featureb}$ and the human's true feature belief function $\featureb$.
One could test this empirically by using sparse autoencoders~\citep{Cunningham2023,Bricken2023} trained on an unobstructed capable foundation model, and evaluating the resulting human-readable features when applied to $\widehat{\featureb}$.

\subsection{Conclusion}\label{sec:conclusion}

In this work we theoretically showed that \emph{if} we could model human beliefs about AI behavior, then under certain conditions, this could help for the correct inference of human goals from human feedback.
Since the theory applies even in cases where the human's beliefs are erroneous and based on partial observations, it is relevant to a setting of scalable oversight where the AI is more capable than the human overseers. 
We hope that future work will build on our theory, empirically study our practical proposal, or engage in other research on how to make AI systems safe and aligned with the goal of reducing the risks of advanced AI. 

\subsubsection*{Author contributions}

Leon Lang developed the ideas, derived all theoretical results, and wrote the paper. 
Patrick Forré gave general guidance and feedback.

\subsubsection*{Acknowledgments and disclosure of funding}

We thank Scott Emmons, Davis Foote, Erik Jenner, Micah Carroll, and Alex Cloud for discussions that shaped how we think about human evaluators with limited capabilities. 
We thank Open Philanthropy for financial support.

\newpage

\phantomsection 
\addcontentsline{toc}{section}{References} 

\bibliography{library}
\bibliographystyle{tmlr}

\appendix

\newpage

\addtocontents{toc}{\protect\setcounter{tocdepth}{1}} 

{\LARGE\bfseries\sffamily Appendices}

In the appendices, we present mathematical details and content that goes beyond the main text.
~\Cref{app:preliminary_LA} lists preliminary results on linear algebra together with their proofs. 
In~\Cref{app:row_constant_theory}, we present a theory of \emph{balanced} belief models and the ambiguity for feedback that is given by binary choices between observations.
This complements the core theory from~\Cref{sec:general_models,sec:morphisms}, where the feedback is given by an observation return function.
In~\Cref{app:diagram}, we complement~\Cref{sec:morphisms} by showing that human belief models and their morphisms form a category, and we construct a variety of natural models composing a diagram.
~\Cref{app:invariant_features_details} contains further mathematical details for the example in~\Cref{sec:equivariance_made_concrete} on symmetry-invariant belief models and reward functions.
Finally,~\Cref{app:interpretations_related_work} interprets some of the related work from~\Cref{sec:related_work} in our framework.

\section{Preliminary results on linear algebra}\label{app:preliminary_LA}

For general notation and conventions on linear algebra, see~\Cref{sec:conventions}.

\subsection{Different representations of linear functions}

Let $X, Y$ be two sets.
Then by $\Lin(\R^{X}, \R^{Y})$ we denote the set of all linear maps $F: \R^{X} \to \R^{Y}$.
By $\Maps(Y, \R^{X})$, we denote the set of all (simple) maps, or functions, $f: Y \to \R^{X}$.
Intuitively, these encode the same information: 
A linear function $F$, when represented as a matrix, is a collection of rows indexed by $y \in Y$, which are ``picked out'' by a function $f: Y \to \R^{X}$.
This correspondence is made precise in the following proposition, which is a ``transposed'' version of the classical statement that linear functions on vector spaces correspond to functions on a basis:

\begin{proposition}
  \label{pro:map_linear_correspondence}
  Define the two functions
  \begin{equation*}
    \begin{tikzcd}
      \Maps(Y, \R^{X}) \ar[rr, bend left, "\lin"] & & \Lin(\R^{X}, \R^{Y}) \ar[ll, bend left, "\map"]
    \end{tikzcd}
  \end{equation*}
  as follows:
  For $f \in \Maps(Y, \R^{X})$, $v \in \R^{X}$ and $y \in Y$, we define
  \begin{equation*}
    \Big[\big[\lin(f)\big](v)\Big](y) \coloneqq \left\langle f(y), v \right\rangle.
  \end{equation*}
  For $F \in \Lin(\R^{X}, \R^{Y})$, $y \in Y$ and $x \in X$, we define
  \begin{equation*}
    \Big[\big[\map(F)\big](y)\Big](x) \coloneqq F_{yx}.
  \end{equation*}
  Then $\lin(f)$ has matrix elements
  \begin{equation*}
    \lin(f)_{yx} = \big[f(y)\big](x)
  \end{equation*}
  for $x \in X$ and $y \in Y$.
  Furthermore, $\lin$ and $\map$ are mutually inverse bijections.
\end{proposition}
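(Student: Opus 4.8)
The plan is to verify that both assignments are well-defined, establish the matrix-element formula, and then check that the two composites are the respective identity maps. First I would confirm that $\lin$ actually lands in $\Lin(\R^{X}, \R^{Y})$: for fixed $f$, the assignment $v \mapsto \lin(f)(v)$ is linear because $\langle f(y), \cdot \rangle$ is linear in its second argument for every $y \in Y$, so $\lin(f)$ is genuinely a linear map. The assignment $\map$ needs no such check, since $[\map(F)](y)\colon x \mapsto F_{yx}$ is by construction an element of $\R^{X}$ for each $y$.

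Next I would compute the matrix elements of $\lin(f)$ directly from the convention $A_{yx} = \big[A(e_{x})\big](y)$ of \Cref{sec:conventions} together with $e_{x} = \delta_{x}$:
\[
  \lin(f)_{yx} = \big[\lin(f)(e_{x})\big](y) = \langle f(y), e_{x}\rangle = \sum_{x' \in X} \big[f(y)\big](x')\, e_{x}(x') = \big[f(y)\big](x),
\]
which is exactly the claimed formula. With this in hand, both composites fall out by unwinding definitions. For $\map \circ \lin = \id$, I would take $f \in \Maps(Y, \R^{X})$ and observe that for all $y \in Y$, $x \in X$ we have $\big[\big[\map(\lin(f))\big](y)\big](x) = \lin(f)_{yx} = \big[f(y)\big](x)$, hence $\map(\lin(f)) = f$. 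For $\lin \circ \map = \id$, I would take $F \in \Lin(\R^{X}, \R^{Y})$ and apply the matrix-element formula to $f = \map(F)$, obtaining $\lin(\map(F))_{yx} = \big[\big[\map(F)\big](y)\big](x) = F_{yx}$; since a linear map on $\R^{X}$ is pinned down by its matrix elements via the expansion $v = \sum_{x \in X} v(x) e_{x}$ and the matrix-vector formula of \Cref{sec:conventions}, this gives $\lin(\map(F)) = F$. The two identities together show that $\lin$ and $\map$ are mutually inverse bijections.

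The computations are entirely routine, so there is no genuine mathematical obstacle here; the only points requiring care are bookkeeping ones. I expect the main thing to watch is applying the matrix-element convention $A_{yx} = \big[A(e_{x})\big](y)$ consistently across both directions, and being explicit that a linear map is determined by its matrix elements — which rests on the standard-basis expansion recorded among the conventions — when proving $\lin \circ \map = \id$.
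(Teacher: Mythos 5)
Your proposal is correct and follows essentially the same route as the paper's proof: compute $\lin(f)_{yx} = \langle f(y), e_x\rangle = \big[f(y)\big](x)$ from the convention $A_{yx} = \big[A(e_x)\big](y)$, then verify both composites are identities by matrix-element/pointwise computations, invoking that a linear map is determined by its matrix elements. The only cosmetic difference is that you reuse the matrix-element formula for the $\lin \circ \map$ direction where the paper unwinds the definitions again, which is an equivalent chain of equalities.
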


\begin{proof}
  First, note that for each $f \in \Maps(Y, \R^{X})$, $\lin(f): \R^{X} \to \R^{Y}$ is indeed a linear function since the scalar product is linear in the second component. 
  Its matrix elements are given by
  \begin{equation*}
    \lin(f)_{yx} = \Big[ \big[\lin(f)\big](e_x) \Big] (y) = \left\langle f(y), e_x \right\rangle = \big[ f(y) \big](x).
  \end{equation*}
  Now we show that $\lin$ and $\map$ are mutually inverse bijections, i.e., $\lin \circ \map = \id_{\Lin(\R^{X}, \R^{Y})}$ and $\map \circ \lin = \id_{\Maps(Y, \R^{X})}$.
  Indeed, for $F \in \Lin(\R^{X}, \R^{Y})$, $x \in X$, and $y \in Y$, we have
  \begin{align*}
    \big[(\lin \circ \map)(F)\big]_{yx}
    &= \big[ \lin(\map(F)) \big]_{yx} \\
    &= \Big[ \big[\lin(\map(F))\big](e_x) \Big](y) \\
    &= \Big\langle \big[\map(F)\big](y), e_x \Big\rangle \\
    &=  \Big[\big[\map(F)\big](y)\Big](x) \\
    &= F_{yx}
  \end{align*}
  Since linear functions are fully characterized by their matrix elements, this shows $(\lin \circ \map)(F) = F$, and thus $\lin \circ \map$ is the identity.

  For the other direction, for $f \in \Maps(Y, \R^{X})$, $y \in Y$, and $x \in X$, we have
  \begin{align*}
    \Big[\big[ (\map \circ \lin)(f) \big](y)\Big](x) 
    &= \Big[\big[ \map(\lin(f)) \big](y)\Big](x) \\
    &= \lin(f)_{yx} \\
    &= \big[ f(y) \big](x). 
  \end{align*}
  This shows that $(\map \circ \lin)(f) = f$, and so $\map \circ \lin$ is also the identity.
\end{proof}

\begin{proposition}
  \label{pro:ontology_translation}
  Let $X, \widehat{X}, Y$ be sets and $F: \R^{X} \to \R^{Y}$, $\widehat{F}: \R^{\widehat{X}} \to \R^{Y}$, and $\Phi: \R^{X} \to \R^{\widehat{X}}$ be linear functions. 
  Let $f = \map(F): Y \to \R^{X}$ and $\widehat{f} = \map(\widehat{F}): Y \to \R^{\widehat{X}}$ be the functions corresponding to $F$ and $\widehat{F}$ by~\Cref{pro:map_linear_correspondence}.
  Let $\Phi^{T}: \R^{\widehat{X}} \to \R^{X}$ be the transpose of $\Phi$, with matrix elements $\Phi^T_{x \widehat{x}} = \Phi_{\widehat{x}x}$.
  Then $F = \widehat{F} \circ \Phi$ if and only if $f = \Phi^T \circ \widehat{f}$:
  \begin{equation*}
    \begin{tikzcd}
      & & \R^{\widehat{X}} \ar[dd, "\widehat{F}"] & & & & & \R^{\widehat{X}} \ar[ddll, "\Phi^T"']\\
      & & & &  \scalebox{1.5}{$\Longleftrightarrow$} \\
      \R^{X} \ar[uurr, "\Phi"] \ar[rr, "F"'] & & \R^{Y} & & & \R^{X} & & Y \ar[ll, "f"] \ar[uu, "\widehat{f}"'] 
    \end{tikzcd}
  \end{equation*}
\end{proposition}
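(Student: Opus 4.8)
The plan is to prove both directions simultaneously by reducing each side of the claimed equivalence to one and the same condition on matrix elements. The key tool is~\Cref{pro:map_linear_correspondence}, which tells us that $f = \map(F)$ and $\widehat{f} = \map(\widehat{F})$ satisfy $\big[f(y)\big](x) = F_{yx}$ and $\big[\widehat{f}(y)\big](\widehat{x}) = \widehat{F}_{y\widehat{x}}$; together with the composition rule for matrix elements from~\Cref{sec:conventions} and the defining property $\Phi^T_{x\widehat{x}} = \Phi_{\widehat{x}x}$ of the transpose, everything becomes a bookkeeping computation.

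First I would unpack the left-hand condition $F = \widehat{F} \circ \Phi$. Since a linear map is determined by its matrix elements, and since the composition rule gives $(\widehat{F} \circ \Phi)_{yx} = \sum_{\widehat{x} \in \widehat{X}} \widehat{F}_{y\widehat{x}} \Phi_{\widehat{x}x}$, the equation $F = \widehat{F} \circ \Phi$ holds if and only if
\begin{equation*}
  F_{yx} = \sum_{\widehat{x} \in \widehat{X}} \widehat{F}_{y\widehat{x}} \Phi_{\widehat{x}x} \quad \text{for all } x \in X, \ y \in Y.
\end{equation*}

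Next I would unpack the right-hand condition $f = \Phi^T \circ \widehat{f}$. Here $f$ and $\Phi^T \circ \widehat{f}$ are both functions $Y \to \R^{X}$, so they agree if and only if they agree entrywise, i.e. after evaluating at each $x \in X$. Using the transpose identity $\Phi^T_{x\widehat{x}} = \Phi_{\widehat{x}x}$ and the correspondence $\big[\widehat{f}(y)\big](\widehat{x}) = \widehat{F}_{y\widehat{x}}$, I compute
\begin{equation*}
  \Big[ \big(\Phi^T \circ \widehat{f}\big)(y) \Big](x) = \big[ \Phi^T\big(\widehat{f}(y)\big) \big](x) = \sum_{\widehat{x} \in \widehat{X}} \Phi^T_{x\widehat{x}} \big[\widehat{f}(y)\big](\widehat{x}) = \sum_{\widehat{x} \in \widehat{X}} \Phi_{\widehat{x}x} \widehat{F}_{y\widehat{x}},
\end{equation*}
whereas $\big[f(y)\big](x) = F_{yx}$. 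Hence $f = \Phi^T \circ \widehat{f}$ holds if and only if $F_{yx} = \sum_{\widehat{x}} \Phi_{\widehat{x}x} \widehat{F}_{y\widehat{x}}$ for all $x, y$, which is literally the same equation obtained above. The two conditions are therefore equivalent, completing the proof.

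I do not expect a genuine obstacle here: the statement is essentially a transposition/duality identity, and the entire content is in matching indices correctly. The only point requiring care is to keep track of which objects are plain functions (the $f$, $\widehat{f}$, and their composites into $\R^{X}$) versus linear maps (the $F$, $\widehat{F}$, $\Phi$, $\Phi^T$), and to invoke~\Cref{pro:map_linear_correspondence} at the right moments so that the entrywise comparison on the right-hand side lines up with the matrix-element comparison on the left-hand side.
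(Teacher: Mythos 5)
Your proof is correct and matches the paper's argument essentially verbatim: the paper likewise reduces both $F = \widehat{F} \circ \Phi$ and $f = \Phi^T \circ \widehat{f}$ to the common matrix-element identity $F_{yx} = \sum_{\widehat{x} \in \widehat{X}} \widehat{F}_{y\widehat{x}} \Phi_{\widehat{x}x}$ via~\Cref{pro:map_linear_correspondence} and the transpose relation, presented there as a single chain of equivalences rather than your two separate unpackings. No gaps.
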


\begin{proof}
  We have
  \begin{alignat*}{3}
    F = \widehat{F} \circ \Phi \quad 
    &\Longleftrightarrow \quad \forall y \in Y, x \in X \colon F_{yx} = (\widehat{F} 
    && \circ \Phi)_{yx} = \sum_{\widehat{x} \in \widehat{X}} \widehat{F}_{y \widehat{x}} \Phi_{\widehat{x} x} \\
    &\Longleftrightarrow \quad \forall y \in Y, x \in X \colon \big[f(y)\big](x) 
    &&= \sum_{\widehat{x} \in \widehat{X}} \Phi^T_{x \widehat{x}} \big[\widehat{f}(y)\big](\widehat{x}) = \big\langle \Phi^T_x, \widehat{f}(y) \big\rangle \\
    &  &&= \Big[\Phi^T\big( \widehat{f}(y) \big)\Big](x) = \big[(\Phi^T \circ \widehat{f})(y) \big](x) \\
    & \Longleftrightarrow \quad f = \Phi^T \circ \widehat{f}.
  \end{alignat*}
  That was to show.
\end{proof}

\subsection{Properties of kernels and images of linear functions}\label{app:properties_kernel_image}

The following two propositions list basic and well-known properties of kernels and images that we use in the paper:

\begin{proposition}
  \label{pro:kernel_factorization}
  Let $A: \mathcal{V} \to \mathcal{U}$ and $B: \mathcal{V} \to \mathcal{W}$ be linear functions.
  Then the following statements are equivalent:
  \begin{enumerate}
    \item $\ker(A) \subseteq \ker(B)$;
    \item There exists a linear function $C: \mathcal{U} \to \mathcal{W}$ with $C \circ A = B$:
      \begin{equation*}
	\begin{tikzcd}
	  \mathcal{V} \ar[dd, "A"'] \ar[rr, "B"] & & \mathcal{W} \\
	  \\
	  \mathcal{U} \ar[uurr, "C"', dotted]
	\end{tikzcd}
      \end{equation*}
  \end{enumerate}
\end{proposition}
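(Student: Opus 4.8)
The plan is to prove the two implications separately, with the direction $2 \Rightarrow 1$ being routine and the direction $1 \Rightarrow 2$ carrying the real content. For $2 \Rightarrow 1$ I would simply observe that if $B = C \circ A$ and $v \in \ker(A)$, then $B(v) = C(A(v)) = C(0) = 0$, so $v \in \ker(B)$; this uses only the linearity of $C$ and yields $\ker(A) \subseteq \ker(B)$ immediately.

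For $1 \Rightarrow 2$, I would assume $\ker(A) \subseteq \ker(B)$ and construct $C$ in two stages. First I would define $C$ on the subspace $\im(A) \subseteq \mathcal{U}$: for $u \in \im(A)$ choose any $v \in \mathcal{V}$ with $A(v) = u$ and set $C(u) \coloneqq B(v)$. Then I would extend $C$ from $\im(A)$ to all of $\mathcal{U}$ linearly but otherwise arbitrarily, for instance by picking a basis of $\im(A)$, extending it to a basis of $\mathcal{U}$, and declaring $C$ to vanish on the added basis vectors. Once $C$ is defined on all of $\mathcal{U}$, the identity $C \circ A = B$ holds by construction, since for every $v \in \mathcal{V}$ we have $A(v) \in \im(A)$ and hence $(C \circ A)(v) = C(A(v)) = B(v)$.

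The main obstacle — and the only place the hypothesis $\ker(A) \subseteq \ker(B)$ enters — is verifying that $C$ is well-defined on $\im(A)$, i.e. that the value $B(v)$ does not depend on the chosen preimage. Here I would argue: if $A(v) = A(v') = u$, then $A(v - v') = 0$, so $v - v' \in \ker(A) \subseteq \ker(B)$, giving $B(v - v') = 0$ and therefore $B(v) = B(v')$. Linearity of $C$ on $\im(A)$ then follows from the linearity of $A$ and $B$: for $u_1 = A(v_1)$, $u_2 = A(v_2)$ and scalars $\alpha, \beta$ one has $\alpha u_1 + \beta u_2 = A(\alpha v_1 + \beta v_2)$, whence $C(\alpha u_1 + \beta u_2) = B(\alpha v_1 + \beta v_2) = \alpha B(v_1) + \beta B(v_2) = \alpha C(u_1) + \beta C(u_2)$. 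With well-definedness and linearity on $\im(A)$ secured, the linear extension step is entirely standard, so I would treat it briefly and conclude.
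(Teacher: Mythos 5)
Your proof is correct and follows essentially the same route as the paper: the direction $2 \Rightarrow 1$ is identical, and for $1 \Rightarrow 2$ both arguments construct $C$ on $\im(A)$ from preimages under $A$ and then extend to all of $\mathcal{U}$ by completing a basis and assigning zero on the new basis vectors. The only organizational difference is where the hypothesis $\ker(A) \subseteq \ker(B)$ enters — you invoke it up front to show $C$ is well-defined on arbitrary preimages, whereas the paper fixes one preimage $v^i$ per basis vector $u^i$ of $\im(A)$ and uses the hypothesis only in the final verification that $(C \circ A)(v) = B(v)$ — and these two bookkeeping choices are interchangeable.
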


\begin{proof}
  Clearly, the second claim implies the first.  
  So now assume $1$.
  Let $\{u^1, \dots, u^m\}$ be a basis for $\im(A)$ and complement it to a basis $\{u^1, \dots, u^n\}$ for all of $\mathcal{U}$, where $n \geq m$.
  For each $i \in \{1, \dots, m\}$, let $v^i \in \mathcal{V}$ be any element with $A(v^i) = u^i$.
  Define $C(u^i) \coloneqq B(v^i)$ for $i \in \{1, \dots, m\}$ and $C(u^i) = 0$ if $i > m$.
  Linearly extend $C$ to a linear function $C: \mathcal{U} \to \mathcal{W}$.

  To show that $C \circ A = B$, let $v \in \mathcal{V}$ be arbitrary.
  Then $A(v) \in \im(A)$, and thus there exist coefficients $\lambda_i \in \R$ for $i \in \{1, \dots, m\}$ with
  \begin{equation*}
    A(v) = \sum_{i = 1}^{m} \lambda_i u^i. 
  \end{equation*}
  Note that
  \begin{equation*}
    A\left(v - \sum_{i = 1}^{m} \lambda_iv^i \right) = A(v) - \sum_{i = 1}^{m} \lambda_i A(v^i) = A(v) - \sum_{i = 1}^{m} \lambda_i u^i = 0
  \end{equation*}
  Thus, $v - \sum_{i = 1}^{m} \lambda_iv^i \in \ker(A) \subseteq \ker(B)$.
  Consequently, we obtain
  \begin{align*}
    B(v) &= B\left(\sum_{i = 1}^{m} \lambda_i v^i \right) = \sum_{i = 1}^{m} \lambda_i B(v^i) = \sum_{i = 1}^{m} \lambda_i C(u^i) \\
    &= C \left( \sum_{i = 1}^{m} \lambda_i u^i\right) 
    = C\big( A(v) \big) 
    = (C \circ A)(v).
  \end{align*}
  This shows $C \circ A = B$, and thus the claim.
\end{proof}

\begin{proposition}
  \label{pro:lift_exists}
  Let $A: \mathcal{U} \to \mathcal{W}$ and $B: \mathcal{V} \to \mathcal{W}$ be linear functions.
  Then the following statements are equivalent:
  \begin{enumerate}
    \item $\im (A) \subseteq \im (B)$.
    \item There exists a ``lift'', i.e., a linear map $C: \mathcal{U} \to \mathcal{V}$ with $B \circ C = A$:
      \begin{equation*}
	\begin{tikzcd}
	  & & \mathcal{V} \ar[dd, "B"] \\
        \\
	\mathcal{U} \ar[rr, "A"'] \ar[rruu, "C", dotted] & & \mathcal{W}.  
	\end{tikzcd}
      \end{equation*}
  \end{enumerate}
\end{proposition}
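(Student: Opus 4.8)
The plan is to prove the two implications separately, mirroring the structure of~\Cref{pro:kernel_factorization} but with the roles of kernel and image interchanged. The direction from statement $2$ to statement $1$ is immediate and I would dispatch it first: if a lift $C$ with $B \circ C = A$ exists, then any $w \in \im(A)$ can be written as $w = A(u) = B\big(C(u)\big)$ for some $u \in \mathcal{U}$, which exhibits $w$ as an element of $\im(B)$. Hence $\im(A) \subseteq \im(B)$, with no computation required.

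For the substantive direction, from statement $1$ to statement $2$, I would construct $C$ by prescribing its values on a basis of the \emph{domain} $\mathcal{U}$. Concretely, I would fix a basis $\{u^1, \dots, u^n\}$ of $\mathcal{U}$. For each index $i$, the vector $A(u^i)$ lies in $\im(A)$, and by the assumption $\im(A) \subseteq \im(B)$ it therefore lies in $\im(B)$; so I may select some $v^i \in \mathcal{V}$ with $B(v^i) = A(u^i)$. I then define $C(u^i) \coloneqq v^i$ and extend $C$ linearly to all of $\mathcal{U}$, which is well-defined precisely because the $u^i$ form a basis.

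It then remains to verify $B \circ C = A$. Since both $B \circ C$ and $A$ are linear, it suffices to check that they agree on the basis vectors, and indeed $(B \circ C)(u^i) = B(v^i) = A(u^i)$ by construction; equality on a basis forces equality as linear maps. This completes the argument.

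I do not anticipate a genuine obstacle here, as this is a standard ``lifting along an epimorphism onto its image'' fact; the only point demanding a sentence of care is the selection of the preimages $v^i$. This selection is legitimate because $\im(A) \subseteq \im(B)$ guarantees each $A(u^i)$ has \emph{at least one} preimage under $B$ (the preimage is generally not unique, but any choice works and no coherence between the choices is needed). In the finite setting of the paper, where all spaces are of the form $\R^{X}$ for finite $X$, the existence of a basis and the finite collection of choices are entirely routine, so the proof is short.
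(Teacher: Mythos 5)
Your proposal is correct and matches the paper's proof essentially verbatim: the easy direction is dispatched by writing $w = A(u) = B\big(C(u)\big)$, and the converse is proved by choosing preimages $v^i \in \mathcal{V}$ of $A(u^i)$ for a basis $\{u^1, \dots, u^n\}$ of $\mathcal{U}$ and extending linearly. No gaps.
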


\begin{proof}
  It can easily be checked that the second claim implies the first. 
  For the other direction, let $\{u^1, \dots, u^n\}$ be a basis for $\mathcal{U}$.
  For $i \in \{1, \dots, n\}$, choose $v^i \in \mathcal{V}$ with $B(v^i) = A(u^i)$, which exists since $\im (A) \subseteq \im (B)$.
  Define $C: \mathcal{U} \to \mathcal{V}$ as the unique linear function with $C(u^i) = v^i$ for $i \in \{1, \dots, n\}$.
  We obtain
  \begin{equation*}
    A(u^i) = B(v^i) = B\big(C(u^i)\big) = (B \circ C)(u^i). 
  \end{equation*}
  Since linear functions are determined on a basis, it follows $A = B \circ C$, proving the claim.
\end{proof}

\section{Balanced human belief models and choices}\label{app:row_constant_theory}

In this appendix, we present the core theory from~\Cref{sec:general_models} and~\Cref{sec:morphisms} for the case that the feedback is in the form of \emph{choices} instead of the observation return function $G_{\Observations}$.
To still get a useful theory, we will then need to assume our human belief models to be \emph{balanced} to ensure that constants are ``propagated'' appropriately through the model.
In this whole appendix, we fix an MDP with a set of trajectories $\Traj$ and observations $\Observations$.

\subsection{Balanced belief models}\label{app:balanced}

\begin{definition}[Row-constant]
  \label{def:row_constant_matrix}
  Let $X$, and $Y$ be sets.
  We call a linear function $A \colon \R^{X} \to \R^{Y}$ \textbf{row-constant} if for all $y, y' \in Y$ we have $0 \neq \sum_{x \in X} \textbf{X}_{yx} = \sum_{x \in X}\textbf{X}_{y'x}$.
\end{definition}

\begin{definition}[Balanced]
  \label{def:balanced}
  Let $\Model = (\Features, \FLambda, \Featureb, \Valid)$ be a human belief model.
  We call $\Model$ \textbf{balanced} if $\FLambda$ and $\Featureb$ are row-constant, and if $\Valid \subseteq \R^{\Features}$ contains all constant functions.
\end{definition}

$\FLambda$ and $\Featureb$ being row-constant means that the corresponding functions $\slambda: \Traj \to \R^{\Features}$ and $\featureb: \Observations \to \R^{\Features}$ (cf.~\Cref{pro:map_linear_correspondence}) map to vectors of feature strengths with a constant \emph{total} weighting, as is for example the case for probability distributions.
That $\Valid$ contains all constant functions is often naturally the case. 
To demonstrate this in a simple example we first prove the following lemma:

\begin{lemma}
  \label{lem:product_of_row_constant_matrices}
  Let $X, Y, Z$ be sets and $A: \R^{X} \to \R^{Y}$, $B: \R^{Y} \to \R^{Z}$ be row-constant linear functions.
  Then the composition $B \circ A: \R^{X} \to \R^{Z}$ is also row-constant.
\end{lemma}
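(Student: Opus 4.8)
The plan is to reduce the statement to a single interchange-of-summation computation. First I would name the two common row sums: since $A$ is row-constant there is a nonzero constant $c_A \in \R$ with $\sum_{x \in X} A_{yx} = c_A$ for every $y \in Y$, and likewise a nonzero $c_B \in \R$ with $\sum_{y \in Y} B_{zy} = c_B$ for every $z \in Z$. The goal is then to show that every row sum of $B \circ A$ equals the single nonzero constant $c_A c_B$, which is precisely the defining condition for row-constancy.

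Next I would compute the row sum of $B \circ A$ at an arbitrary index $z \in Z$, using the matrix-element formula $(B \circ A)_{zx} = \sum_{y \in Y} B_{zy} A_{yx}$ recorded in the conventions section. Substituting and reordering the sums gives
\[
\sum_{x \in X}(B \circ A)_{zx} = \sum_{x \in X}\sum_{y \in Y} B_{zy}A_{yx} = \sum_{y \in Y} B_{zy}\sum_{x \in X}A_{yx} = c_A \sum_{y \in Y} B_{zy} = c_A c_B.
\]
Since this value equals $c_A c_B$ independently of the choice of $z$, all row sums of $B \circ A$ agree; and since $c_A \neq 0$ and $c_B \neq 0$, their common value $c_A c_B$ is nonzero. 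This is exactly the condition from \Cref{def:row_constant_matrix}, so $B \circ A$ is row-constant.

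There is essentially no obstacle here beyond bookkeeping. The only point that deserves a word of justification is the interchange of the order of summation in the middle equality, which is legitimate because the index sets $X$ and $Y$ are finite in our setting (indeed all sets underlying an MDP are finite), so the double sum is finite and may be reordered freely. I would simply note this finiteness when presenting the computation.
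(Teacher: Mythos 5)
Your proof is correct and is essentially identical to the paper's: both compute the row sum $\sum_{x}(B\circ A)_{zx}$ via the matrix-element formula, interchange the two (finite) sums, and conclude the common value is the nonzero product of the two row-sum constants. Your extra remark justifying the interchange by finiteness is a harmless addition the paper leaves implicit.
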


\begin{proof}
  Let $a, b$ be the row-sums of $A$ and $B$, respectively.
  Then for all $z \in Z$, we obtain
  \begin{equation*}
    \sum_{x \in X} (B \circ A)_{zx} = \sum_{x \in X} \sum_{y \in Y} B_{zy}A_{yx}  = \sum_{y \in Y} B_{zy} \sum_{x \in X} A_{yx} = b \cdot a \neq 0,
  \end{equation*}
  which shows the claim.
\end{proof}

\begin{example}\label{ex:clarifying_original_identifiability_theorem_appendix}
  We continue~\Cref{ex:old_paper_model} and show that the model is balanced.
  Note that for all $\traj \in \Traj$, we have
  \begin{align*}
    \sum_{(s, a, s') \in \states \times \actions \times \states} \FGamma_{\traj,(s, a, s')} &= \sum_{(s, a, s') \in \states \times \actions \times \states} \sum_{t = 0}^{T - 1} \gamma^t \delta_{(s, a, s')}(s_t, a_t, s_{t+1}) \\
    &= \sum_{t = 0}^{T-1} \gamma^t \sum_{(s, a, s') \in \states \times \actions \times \states} \delta_{(s, a, s')}(s_t, a_t, s_{t+1}) \\
    &= \sum_{t = 0}^{T-1} \gamma^t \\
    & \neq 0.\footnotemark
  \end{align*}
  Thus, $\FGamma$ is row-constant.
  $\bp$ is also row-constant since all rows are probability distributions, and so~\Cref{lem:product_of_row_constant_matrices} implies that also $\Featureb = \bp \circ \FGamma$ is row-constant.
  $\Valid = \R^{\states \times \actions \times \states}$ also clearly contains all constant functions.
  Overall, this means the model $(\states \times \actions \times \states, \FGamma, \bp \circ \FGamma, \R^{\states \times \actions \times \states})$ is balanced.
\end{example}

For more examples of balanced human belief models, see~\Cref{app:various_feedback_models}.

\subsection{The ambiguity for balanced belief models and choices}\label{app:ambiguity_balanced}

Let $\sigma: \R \to (0, 1)$ be any \emph{known} bijective function with $\sigma(r) + \sigma(-r) = 1$ for all $r \in \R$ (e.g., the sigmoid function).
For $o, o' \in \Observations$, we define the probability that a human with feature belief function $\Featureb: \R^{\Features} \to \R^{\Observations}$ and reward object $\tilde{R}_{\Features} \in \R^{\Features}$ prefers $o$ over $o'$ by
\begin{equation}
  \label{eq:human_choice_model_app}
  P^{\tilde{R}_{\Features}}_{\Featureb}(o \succ o') \coloneqq \sigma\Big( \big[ \Featureb(\tilde{R}_{\Features}) \big](o) - \big[ \Featureb(\tilde{R}_{\Features})\big](o') \Big).
\end{equation}

For the rest of the section, we fix a human belief model $\Model = (\Features, \FLambda, \Featureb, \Valid)$.
Furthermore, we fix the \emph{implicit} true reward object $R_{\Features} \in \Valid$ together with the return function $G = \FLambda(R_{\Features})$, the observation return function $G_{\Observations} = \Featureb(R_{\Features})$ and the choice probability function $P_{\Observations} \coloneqq P^{R_{\Features}}_{\Featureb}$ that serves as our operationalization of ``feedback''.

We use the following adaptation of~\Cref{def:ambiguity}:

\begin{definition}
  \label{Feedback compatible, ambiguity}
  We define the set of return functions that are \textbf{feedback-compatible} with $P_{\Observations}$ as
  \begin{equation*}
    \FC^{\Model}(P_{\Observations}) \coloneqq \Big\lbrace \tilde{G} \in \R^{\Traj} \ \big| \ \exists \tilde{R}_{\Features} \in \Valid \colon P^{\tilde{R}_{\Features}}_{\Featureb} = P_{\Observations} \text{ and } \FLambda(\tilde{R}_{\Features}) = \tilde{G} \Big\rbrace.
  \end{equation*}
  We define the \textbf{ambiguity} left in the return function $G$ after the choice probability function $P_{\Observations}$ is known by
  \begin{equation*}
    \Amb^{\Model}(G, P_{\Observations}) \coloneqq \Big\lbrace G' \in \R^{\Traj} \ \big| \ G' = \tilde{G} - G \text{ for } \tilde{G} \in \FC^{\Model}(P_{\Observations}) \Big\rbrace.
  \end{equation*}
\end{definition}

Clearly, we have 
\begin{equation*}
  \FC^{\Model}(G_{\Observations}) = G + \Amb^{\Model}(G, P_{\Observations}).
\end{equation*}
Recall the ambiguity $\Amb^{\Model}(G, G_{\Observations})$ defined in~\Cref{def:ambiguity}.
We obtain:

\begin{proposition}
  \label{pro:ambiguity_balanced}
  Assume $\Model = (\Features, \FLambda, \Featureb, \Valid)$ is balanced.
  Let $\one \in \R^{\Traj}$ denote the function that is constant $1$.
  Then:
  \begin{equation*}
    \Amb^{\Model}(G, P_{\Observations}) = \Amb^{\Model}(G, G_{\Observations}) + \Big\lbrace  c \cdot \one \mid c \in \R \Big\rbrace = \FLambda\big(\ker(\Featureb) \cap \Valid\big) + \Big\lbrace c \cdot \one \mid c \in \R \Big\rbrace. 
  \end{equation*}
\end{proposition}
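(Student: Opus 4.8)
The plan is to exploit the fact that the choice model only records \emph{differences} of feature-belief values. Since $\sigma$ is a bijection satisfying $\sigma(r) + \sigma(-r) = 1$, it is in particular injective, so for a candidate $\tilde{R}_{\Features} \in \Valid$ the equality $P^{\tilde{R}_{\Features}}_{\Featureb} = P_{\Observations}$ holds if and only if $[\Featureb(\tilde{R}_{\Features})](o) - [\Featureb(\tilde{R}_{\Features})](o') = [G_{\Observations}](o) - [G_{\Observations}](o')$ for all $o, o' \in \Observations$, which is in turn equivalent to $\Featureb(\tilde{R}_{\Features}) - G_{\Observations}$ being a constant function on $\Observations$. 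This equivalence is the conceptual heart of the statement: passing from $G_{\Observations}$ to the weaker feedback $P_{\Observations}$ loses exactly the additive constant of $\Featureb(\tilde{R}_{\Features})$, and this lost constant is what ultimately produces the extra summand $\{c \cdot \one \mid c \in \R\}$.

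Next I would set up the bookkeeping that balancedness provides. Let $\one_{\Features} \in \R^{\Features}$ be the constant-$1$ function, which lies in $\Valid$ since $\Model$ is balanced. Because $\Featureb$ and $\FLambda$ are row-constant, there exist nonzero scalars $b$ and $\ell$ with $\Featureb(\one_{\Features}) = b \cdot \one_{\Observations}$ and $\FLambda(\one_{\Features}) = \ell \cdot \one$, where $\one_{\Observations}$ and $\one$ denote the constant-$1$ functions on $\Observations$ and $\Traj$ respectively. These two identities are the translation device between constants in reward-object space, observation space, and trajectory space, and the nonvanishing of $b$ and $\ell$ is precisely what lets me realize \emph{any} prescribed constant shift.

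For the inclusion $\Amb^{\Model}(G, P_{\Observations}) \subseteq \Amb^{\Model}(G, G_{\Observations}) + \{c \cdot \one \mid c \in \R\}$, I would take $G' = \FLambda(\tilde{R}_{\Features}) - G$ with $\tilde{R}_{\Features} \in \Valid$ and, by the first step, $\Featureb(\tilde{R}_{\Features}) = G_{\Observations} + d \cdot \one_{\Observations}$ for some $d \in \R$. Subtracting $(d/b) \cdot \one_{\Features}$ yields $R''_{\Features} \in \Valid$ with $\Featureb(R''_{\Features}) = G_{\Observations}$, so $\FLambda(R''_{\Features}) - G \in \Amb^{\Model}(G, G_{\Observations})$ by \Cref{pro:ambiguities_characterizations}; applying $\FLambda$ shows $G'$ differs from this element by $(d\ell/b) \cdot \one$. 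The reverse inclusion runs symmetrically: starting from $\hat{R}_{\Features} \in \Valid$ with $\Featureb(\hat{R}_{\Features}) = G_{\Observations}$, adding $(c/\ell) \cdot \one_{\Features}$ shifts $\FLambda(\hat{R}_{\Features})$ by exactly $c \cdot \one$ while shifting $\Featureb(\hat{R}_{\Features})$ only by a constant on $\Observations$, hence leaving $P_{\Observations}$ unchanged by the equivalence from the first step. The second displayed equality is then immediate, since $\Amb^{\Model}(G, G_{\Observations}) = \FLambda(\ker(\Featureb) \cap \Valid)$ is exactly \Cref{pro:ambiguities_characterizations}.

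The step I expect to be the main obstacle is the first one, namely establishing the clean equivalence ``$P^{\tilde{R}_{\Features}}_{\Featureb} = P_{\Observations}$ iff $\Featureb(\tilde{R}_{\Features}) - G_{\Observations}$ is constant.'' Here one must use both that $\sigma$ is injective (to strip it off the choice probabilities) and that the symmetry $\sigma(r) + \sigma(-r) = 1$ makes the pairwise constraints mutually consistent, so that equality of all pairwise differences genuinely forces the whole vector $\Featureb(\tilde{R}_{\Features}) - G_{\Observations}$ to be constant rather than merely constrained. Once this equivalence is in place, the remainder is routine linear algebra glued together by the two balancedness identities.
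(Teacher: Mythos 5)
Your proof is correct and takes essentially the same route as the paper's: use injectivity of $\sigma$ to reduce the constraint $P^{\tilde{R}_{\Features}}_{\Featureb} = P_{\Observations}$ to ``$\Featureb(\tilde{R}_{\Features}) - G_{\Observations}$ is constant on $\Observations$,'' then use balancedness (constants lie in $\Valid$; row-constancy of $\Featureb$ and $\FLambda$) to shuttle constant shifts between $\R^{\Features}$, $\R^{\Observations}$, and $\R^{\Traj}$, exactly as the paper does with its $c_{\Features}, c_{\Observations}, c_{\Traj}$. Your explicit nonzero row-sums $b$ and $\ell$ merely make the paper's ``there exists $c_{\Features}$ with $\Featureb(c_{\Features} \cdot \one) = c_{\Observations} \cdot \one$'' concrete, and the one minor over-attribution is that the key equivalence needs only injectivity of $\sigma$, not the symmetry $\sigma(r) + \sigma(-r) = 1$.
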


\begin{proof}
  The second equality follows from~\Cref{pro:ambiguities_characterizations}, so we are left with proving the first.
  By abuse of notation, we will write $\one$ for the three functions that are constant $1$ on $\Traj$, $\Observations$, or $\Features$.

  Let $G' \in \Amb^{\Model}(G, P_{\Observations})$.
  Then $G' = \FLambda(\tilde{R}_{\Features}) - G$ for $\tilde{R}_{\Features} \in \Valid$ with $P^{\tilde{R}_{\Features}}_{\Featureb} = P_{\Observations}$.
  The latter means the following for all $o, o' \in \Observations$:
  \begin{equation*}
    \sigma\Big( \big[\Featureb(\tilde{R}_{\Features})\big](o) - \big[ \Featureb(\tilde{R}_{\Features}) \big](o') \Big) = 
    \sigma\Big( \big[\Featureb(R_{\Features})\big](o) - \big[ \Featureb(R_{\Features}) \big](o') \Big).
  \end{equation*}
  Since $\sigma$ is invertible, we obtain
  \begin{equation*}
    \big[\Featureb(\tilde{R}_{\Features})\big](o) - \big[ \Featureb(\tilde{R}_{\Features}) \big](o') = 
    \big[\Featureb(R_{\Features})\big](o) - \big[ \Featureb(R_{\Features}) \big](o')
  \end{equation*}
  Fix any $o' \in \Observations$ and set $c_{\Observations} \coloneqq \big[ \Featureb(\tilde{R}_{\Features}) \big](o') - \big[ \Featureb(R_{\Features}) \big](o')$.
  Then for all $o \in \Observations$, we have
  \begin{equation*}
    \big[\Featureb(\tilde{R}_{\Features})\big](o) = \big[ \Featureb(R_{\Features}) \big](o) + c_{\Observations}.
  \end{equation*}
  Or, equivalently:
  \begin{equation*}
    \Featureb(\tilde{R}_{\Features})  = \Featureb(R_{\Features}) + c_{\Observations} \cdot \one = G_{\Observations} + c_{\Observations} \cdot \one.
  \end{equation*}
  Since $\Featureb$ is row-constant, there exists a $c_{\Features} \in \R$ with $\Featureb(c_{\Features} \cdot \one) = c_{\Observations} \cdot \one$.
  This implies
  \begin{equation*}
    \Featureb(\tilde{R}_{\Features} - c_{\Features} \cdot \one) = G_{\Observations}. 
  \end{equation*}
  We also have $\tilde{R}_{\Features} - c_{\Features} \cdot \one \in \Valid$ since $\Valid$ contains all constant functions.
  Furthermore, $\FLambda(c_{\Features} \cdot \one) = c_{\Traj} \cdot \one$ for another constant $c_{\Traj} \in \R$ since $\FLambda$ is row-constant.
  Overall, we thus obtain
  \begin{equation*}
    G' = \FLambda(\tilde{R}_{\Features}) - G = \FLambda(\tilde{R}_{\Features} - c_{\Features} \cdot \one) - G + c_{\Traj} \cdot \one \in \Amb^{\Model}(G, G_{\Observations}) + \Big\lbrace c \cdot \one \mid c \in \R \Big\rbrace.
  \end{equation*}
  For the other direction, let $G' \in \Amb^{\Model}(G, G_{\Observations}) + \Big\lbrace c \cdot \one \mid c \in \R \Big\rbrace$.
  Then $G' = \tilde{G} - G + c_{\Traj} \cdot \one$ for $\tilde{G} = \FLambda(\tilde{R}_{\Features})$ with $\tilde{R}_{\Features} \in \Valid$ with $\Featureb(\tilde{R}_{\Features}) = G_{\Observations}$.
  We have
  \begin{equation*}
    \tilde{G} + c_{\Traj} \cdot \one = \FLambda(\tilde{R}_{\Features} + c_{\Features} \cdot \one) 
  \end{equation*}
  for a constant $c_{\Features} \in \R$ with $\FLambda(c_{\Features} \cdot \one) = c_{\Traj} \cdot \one$.
  Since $\Valid$ contains all constant functions, we have $\tilde{R}_{\Features} + c_{\Features} \cdot \one \in \Valid$.
  We also have
  \begin{equation*}
    P^{\tilde{R}_{\Features} + c_{\Features} \cdot \one}_{\Featureb} = P^{\tilde{R}_{\Features}}_{\Featureb} = P_{\Observations} 
  \end{equation*}
  since the constant gets cancelled out in the definition of the choice probabilities, and since $\Featureb(\tilde{R}_{\Features}) = G_{\Observations}$.
  All of this implies
  \begin{equation*}
    G' = \big( \tilde{G} + c_{\Traj} \cdot \one \big) - G \in \Amb^{\Model}(G, P_{\Observations}).
  \end{equation*}
  That proves the claim.
\end{proof}

Note that for the purpose of policy optimization it is not an issue that the ambiguity has an ``irreducible'' constant term since this does not change the ordering of policies under the policy evaluation function $J(\pi) = \eval_{\traj \in P^{\pi}(\cdot)}[G(\traj)]$.

\begin{remark}
  \label{rem:yeah}
  In light of the previous proposition, it turns out that the ambiguity does not depend on the true return function and choice probabilities, and we can thus write it as $\Amb^{\Model}_{P} = \Amb^{\Model}(G, P_{\Observations})$.
  The ``P'' is added to distinguish from the ambiguity $\Amb^{\Model} = \Amb^{\Model}(G, G_{\Observations})$ that we study in the main paper.
\end{remark}

Using this result, we also obtain a version of~\Cref{thm:Morphism_preserves_identifiability}, with the ambiguity replaced by the one we use in this appendix:

\begin{theorem}
  \label{thm:cover_balanced_model}
  Let $\Model = (\Features, \FLambda, \Featureb, \Valid)$ and $\widehat{\Model} = (\widehat{\Features}, \widehat{\FLambda}, \widehat{\Featureb}, \widehat{\Valid})$ be two balanced human belief models and assume that $\widehat{\Model}$ covers $\Model$.
  We think of $\Model$ as the ``true'' human belief model with reward object $R_{\Features} \in \Valid$ and corresponding return function $G = \FLambda(R_{\Features})$ and choice probability function $P_{\Observations} = P^{R_{\Features}}_{\Featureb}$.
  Then we have:
  \begin{enumerate}
    \item $\Amb^{\Model}_{P} \subseteq \Amb^{\widehat{\Model}}_{P}$.
    \item If $\Model$ also covers $\widehat{\Model}$, then $\Amb^{\Model}_{P} = \Amb^{\widehat{\Model}}_{P}$.
    \item There is an $R_{\widehat{\Features}} \in \widehat{\Valid}$ with $P^{R_{\widehat{\Features}}}_{\widehat{\Featureb}} = P_{\Observations}$ and $\widehat{\FLambda}(R_{\widehat{\Features}}) = G$.
    \item Assume $\widehat{\Model}$ is complete. 
      Then \emph{every} reward object $\tilde{R}_{\widehat{\Features}} \in \widehat{\Valid}$ with $P^{\tilde{R}_{\widehat{\Features}}}_{\widehat{\Featureb}} = P_{\Observations}$ also satisfies $\widehat{\FLambda}(\tilde{R}_{\widehat{\Features}}) = G + c_{\Traj} \cdot \one$ for a constant $c_{\Traj} \in \R$.
  \end{enumerate}
\end{theorem}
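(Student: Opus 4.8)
The plan is to reduce each of the four claims to its already-established $G_{\Observations}$-counterpart in \Cref{thm:Morphism_preserves_identifiability}, using \Cref{pro:ambiguity_balanced} to translate between the two notions of ambiguity and exploiting balancedness to move additive constants between the three spaces $\R^{\Observations}$, $\R^{\widehat{\Features}}$, and $\R^{\Traj}$. For claims 1 and 2 I would apply \Cref{pro:ambiguity_balanced} to both balanced models, obtaining $\Amb^{\Model}_{P} = \Amb^{\Model} + \{c\cdot\one \mid c \in \R\}$ and $\Amb^{\widehat{\Model}}_{P} = \Amb^{\widehat{\Model}} + \{c\cdot\one \mid c \in \R\}$. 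Since adding one and the same subspace to both sides preserves inclusions and equalities, claim 1 follows from $\Amb^{\Model} \subseteq \Amb^{\widehat{\Model}}$ and claim 2 from $\Amb^{\Model} = \Amb^{\widehat{\Model}}$, which are exactly claims 1 and 2 of \Cref{thm:Morphism_preserves_identifiability}.

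For claim 3 I would invoke claim 3 of \Cref{thm:Morphism_preserves_identifiability}: since $\widehat{\Model}$ covers $\Model$, there is $R_{\widehat{\Features}} \in \widehat{\Valid}$ with $\widehat{\Featureb}(R_{\widehat{\Features}}) = G_{\Observations}$ and $\widehat{\FLambda}(R_{\widehat{\Features}}) = G$. The second equation is already the desired return-function condition, and for the choice probabilities I only need to note that $P^{R_{\widehat{\Features}}}_{\widehat{\Featureb}}$ depends on $R_{\widehat{\Features}}$ solely through $\widehat{\Featureb}(R_{\widehat{\Features}})$; as this equals $G_{\Observations} = \Featureb(R_{\Features})$, both functions feed identical differences into $\sigma$, giving $P^{R_{\widehat{\Features}}}_{\widehat{\Featureb}} = P_{\Observations}$. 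This step uses no balancedness.

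Claim 4 is where the balancedness hypothesis does real work. Given $\tilde{R}_{\widehat{\Features}} \in \widehat{\Valid}$ with $P^{\tilde{R}_{\widehat{\Features}}}_{\widehat{\Featureb}} = P_{\Observations}$, I would first use that $\sigma$ is a bijection satisfying $\sigma(r)+\sigma(-r)=1$ to conclude, exactly as in the proof of \Cref{pro:ambiguity_balanced}, that $\widehat{\Featureb}(\tilde{R}_{\widehat{\Features}}) = G_{\Observations} + c_{\Observations}\cdot\one$ for some constant $c_{\Observations} \in \R$. Because $\widehat{\Featureb}$ is row-constant and $\widehat{\Valid}$ contains the constant functions, I can pick $c_{\widehat{\Features}} \in \R$ with $\widehat{\Featureb}(c_{\widehat{\Features}}\cdot\one) = c_{\Observations}\cdot\one$, so that $\tilde{R}_{\widehat{\Features}} - c_{\widehat{\Features}}\cdot\one \in \widehat{\Valid}$ and $\widehat{\Featureb}(\tilde{R}_{\widehat{\Features}} - c_{\widehat{\Features}}\cdot\one) = G_{\Observations}$. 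Completeness of $\widehat{\Model}$ then lets me apply claim 4 of \Cref{thm:Morphism_preserves_identifiability} to this shifted object, yielding $\widehat{\FLambda}(\tilde{R}_{\widehat{\Features}} - c_{\widehat{\Features}}\cdot\one) = G$, and row-constancy of $\widehat{\FLambda}$ provides $c_{\Traj} \in \R$ with $\widehat{\FLambda}(c_{\widehat{\Features}}\cdot\one) = c_{\Traj}\cdot\one$, so that $\widehat{\FLambda}(\tilde{R}_{\widehat{\Features}}) = G + c_{\Traj}\cdot\one$.

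The hard part will be the additive-constant bookkeeping in claim 4: choice feedback only pins down $\widehat{\Featureb}(\tilde{R}_{\widehat{\Features}})$ up to a constant on $\Observations$, and the role of balancedness is precisely to absorb that constant into a constant reward shift in $\widehat{\Valid}$ (via row-constancy of $\widehat{\Featureb}$) and then transport it to a constant on $\Traj$ (via row-constancy of $\widehat{\FLambda}$). Keeping track of which constant lives in which of the three spaces, and checking at each step that the required constant function is actually available --- which is exactly what ``balanced'' guarantees --- is the only delicate point; the rest is a direct appeal to \Cref{thm:Morphism_preserves_identifiability} and \Cref{pro:ambiguity_balanced}.
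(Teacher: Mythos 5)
Your proposal is correct and follows essentially the same route as the paper's own (much terser) proof, which likewise derives claims 1 and 2 from \Cref{pro:ambiguity_balanced} combined with the corresponding statements of \Cref{thm:Morphism_preserves_identifiability}, and proves claims 3 and 4 by the same arguments as there. Your constant-bookkeeping in claim 4 --- pinning down $\widehat{\Featureb}(\tilde{R}_{\widehat{\Features}}) = G_{\Observations} + c_{\Observations}\cdot\one$ via the injectivity of $\sigma$, absorbing the constant into $\widehat{\Valid}$ by row-constancy of $\widehat{\Featureb}$, and transporting it to $c_{\Traj}\cdot\one$ via row-constancy of $\widehat{\FLambda}$ --- is exactly the intended instantiation of the paper's ``similar arguments,'' mirroring the proof of \Cref{pro:ambiguity_balanced}.
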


\begin{proof}
  Statements $1$ and $2$ follow from the ambiguity characterization in~\Cref{pro:ambiguity_balanced} and the analogous statements in~\Cref{thm:Morphism_preserves_identifiability}.
  Statements $3$ and $4$ can be proved with similar arguments as the corresponding statements in~\Cref{thm:Morphism_preserves_identifiability}.
\end{proof}

\section{A diagram in the category of human belief models}\label{app:diagram}

Let us consider an MDP together with a fixed set of trajectories $\Traj$ and observations $\Observations$.
Then in~\Cref{def:feedback_model}, we defined the notion of a human belief model $\Model = (\Features, \FLambda, \Featureb, \Valid)$.
In~\Cref{def:morphism_of_human_models}, we then introduced the notion of a morphism $\Phi: \Model \to \widehat{\Model}$ between human belief models, which is defined as a linear function $\Phi: \R^{\Features} \to \R^{\widehat{\Features}}$ such that $\Phi(\Valid) \subseteq \Valid'$, $\widehat{\FLambda} \circ \Phi|_{\Valid} = \FLambda$ and $\widehat{\Featureb} \circ \Phi|_{\Valid} = \Featureb$.
This notion turned out important since it is equivalent to model covering (\Cref{def:covering}), which implies that the covering model can be used for the return function inference from human feedback (\Cref{thm:Morphism_preserves_identifiability}), especially if its ambiguity disappears.

Belief models for fixed sets of trajectories $\Traj$ and observations $\Observations$, together with their morphisms, form a category~\citep{MacLane1998categories}, meaning that they satisfy the following simple properties:
\begin{itemize}
  \item \textbf{Composition}:
    Assume $\Model_1, \Model_2, \Model_3$ are three human belief models and $\Phi: \Model_1 \to \Model_2$, $\Phi': \Model_2 \to \Model_3$ morphisms between them. 
    Then also the composition $\Phi' \circ \Phi: \Model_1 \to \Model_3$ is a morphism.
  \item \textbf{Identities}: For any human belief model $\Model = (\Features, \FLambda, \Featureb, \Valid)$, the identity $\id_{\R^{\Features}}: \Model \to \Model$ is a morphism.
  \item \textbf{Associativity}: $(\Phi'' \circ \Phi') \circ \Phi = \Phi'' \circ (\Phi' \circ \Phi)$ for any three morphisms that can be composed in the specified order.
\end{itemize}
All of these properties can be trivially checked, and so human belief models and their morphisms indeed form a category.

In this appendix, we want to write down a simple commutative diagram of morphisms in this category.
Here, a diagram means a graph of human belief models and morphisms between them.
For this to be \emph{commutative} means that any pathway from one human belief model to another is the same morphism.
We prepare this in~\Cref{app:preparation} by writing down all linear functions from which the functions $\FLambda$, $\Featureb$, and $\Phi$ will be constructed. 
In~\Cref{app:various_feedback_models} we then specify the resulting human belief models and briefly consider their properties.
In~\Cref{app:matrix_interpretations} we interpret the matrix elements that appear in the feature belief functions of all models.
Finally, in~\Cref{app:resulting_diagram}, we write down the resulting commutative diagram and the resulting relations for the ambiguities.

\subsection{Preparing the models}\label{app:preparation}

We build on~\Cref{ex:old_paper_model}.
The idea is that we consider reward objects at four different levels:
Return functions, classical reward functions, and return- and reward functions of \emph{abstractions} of trajectories and transitions that the human might care about.
By modeling the human as having features at all four of these different levels, we can create a multitude of human belief models.

Let $\bp: \R^{\Traj} \to \R^{\Observations}$ be the matrix corresponding to a trajectory-belief function $b: \Observations \to \Delta(\Traj) \subseteq \R^{\Traj}$ via~\Cref{pro:map_linear_correspondence}.
Let $\FGamma: \R^{\states \times \actions \times \states} \to \R^{\Traj}$ be the linear function mapping reward functions to their corresponding return functions.

Let $\F$ be a set of ``abstractions of transitions'' and $h: \states \times \actions \times \states \to \F$ a function mapping each transition to its abstraction.
Write reward objects over abstractions as $R_{\F} \in \R^{\F}$.
Then we obtain the induced map
\begin{equation*}
  h^*: \R^{\F} \to \R^{\states \times \actions \times \states}, \quad R_{\F} \mapsto R_{\F} \circ h.
\end{equation*}
$h^*(R_{\F})$ measures a transition $(s, a, s')$ by evaluating $R_{\F}$ at the transition's abstraction: $R_{\F}(h(s, a, s'))$.
Thus, $h^*(R_{\F})$ is guaranteed to give the same reward to transitions with the same abstraction.

We can then also consider the space of abstraction sequences $\F^T$ together with the function $h^T: \Traj \to \F^T$ given by
\begin{equation*}
  h^T(s_0, a_0, \dots, s_{T-1}, a_{T-1}, s_T) \coloneqq \big(h(s_0, a_0, s_1), \dots, h(s_{T-1}, a_{T-1}, s_T)\big).
\end{equation*}
Write return functions over abstraction sequences as $G_{\F^{T}} \in \R^{\F^{T}}$.
$h^T$ then gives rise to the dual function
\begin{equation*}
  h^{T^*}: \R^{\F^T} \to \R^{\Traj}, \quad G_{\F^T} \mapsto G_{\F^T} \circ h^{T}.
\end{equation*}
Thus, $h^{T^*}(G_{\F^T})$ evaluates a trajectory by evaluating the sequence of abstractions using $G_{\F^T}$.
As before, two trajectories with the same sequences of abstractions then obtain the same return.

Recall the function $\FGamma: \R^{\states \times \actions \times \states} \to \R^{\Traj}$ mapping a reward function to the corresponding return function.
Then we obtain an analogous function for reward objects on abstractions:
\begin{equation*}    
  \FGamma_{\F}: \R^{\F} \to \R^{\F^T}, \quad \big[\FGamma_{\F}(R_{\F})\big](f_1, \dots, f_T) \coloneqq \sum_{t = 0}^{T-1} \gamma^t R_{\F}(f_t).
\end{equation*}

\begin{proposition}
  \label{pro:commuting_diagram}
  The diagram
  \begin{equation}\label{eq:commuting_diagram}
  \begin{tikzcd}
    \R^{\states \times \actions \times \states} \ar[rr, "\FGamma"]    & &  \R^{\Traj} \ar[rr, "\bp"] & & \R^{\Observations}  \\
    \\
    \R^{\F} \ar[rr, "\FGamma_{\F}"'] \ar[uu, "h^*"] & & \R^{\F^T} \ar[uu, "h^{T^*}"'] 
  \end{tikzcd}
  \end{equation}
  of linear functions commutes, meaning that all pathways with the same start and end are the same function.
\end{proposition}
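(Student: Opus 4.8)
The plan is to note that $\bp$ lies on only a single directed edge of the diagram in \Cref{eq:commuting_diagram}, so it imposes no commutativity constraint of its own; the entire content of the proposition is therefore the commutativity of the left-hand square, i.e. the identity of linear maps $\R^{\F} \to \R^{\Traj}$
\begin{equation*}
  \FGamma \circ h^* = h^{T^*} \circ \FGamma_{\F}.
\end{equation*}
Since both sides are linear, I would reduce the claim to a pointwise verification: it suffices to evaluate each side at an arbitrary reward object $R_{\F} \in \R^{\F}$ and an arbitrary trajectory $\traj = (s_0, a_0, \dots, s_{T-1}, a_{T-1}, s_T) \in \Traj$ and check that the two resulting numbers agree. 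Equivalently one could compare matrix elements via \Cref{pro:map_linear_correspondence}, but evaluating on a general $R_{\F}$ keeps the bookkeeping minimal.

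First I would unfold the left-hand side. By definition $h^*(R_{\F}) = R_{\F} \circ h$ is the reward function on $\states \times \actions \times \states$ sending $(s,a,s') \mapsto R_{\F}(h(s,a,s'))$, so applying $\FGamma$ gives
\begin{equation*}
  \big[\FGamma(h^*(R_{\F}))\big](\traj) = \sum_{t=0}^{T-1} \gamma^t \big[h^*(R_{\F})\big](s_t, a_t, s_{t+1}) = \sum_{t=0}^{T-1} \gamma^t R_{\F}\big(h(s_t, a_t, s_{t+1})\big).
\end{equation*}
Next I would unfold the right-hand side using the definition of the dual map, $\big[h^{T^*}(G_{\F^T})\big](\traj) = G_{\F^T}\big(h^T(\traj)\big)$, together with $h^T(\traj) = \big(h(s_0,a_0,s_1), \dots, h(s_{T-1},a_{T-1},s_T)\big)$. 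Substituting $G_{\F^T} = \FGamma_{\F}(R_{\F})$ and expanding $\FGamma_{\F}$ on this abstraction sequence yields the same discounted sum $\sum_{t=0}^{T-1} \gamma^t R_{\F}(h(s_t, a_t, s_{t+1}))$. Comparing the two computations shows the maps agree on every $R_{\F}$ and every $\traj$, establishing the square and hence the whole diagram.

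The computation is routine, so the only delicate point — which I regard as the main obstacle — is aligning the indexing conventions so that the discount powers line up. Concretely, I would make explicit that the $t$-th transition $(s_t, a_t, s_{t+1})$ of $\traj$ corresponds to the $t$-th entry of the abstraction sequence $h^T(\traj)$, and that $\FGamma_{\F}$ weights that entry by exactly $\gamma^t$, matching the weight $\FGamma$ assigns to the transition. Once this position-by-position correspondence is pinned down (accounting for the $1, \dots, T$ versus $0, \dots, T-1$ enumeration in the definitions of $h^T$ and $\FGamma_{\F}$), the two sums coincide term by term and the identity follows immediately.
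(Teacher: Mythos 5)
Your proof is correct and is essentially identical to the paper's own argument: the paper also reduces the claim to the single square $\FGamma \circ h^* = h^{T^*} \circ \FGamma_{\F}$ and verifies it by evaluating both compositions on an arbitrary $R_{\F} \in \R^{\F}$ and trajectory, obtaining the same discounted sum $\sum_{t=0}^{T-1} \gamma^t R_{\F}\big(h(s_t, a_t, s_{t+1})\big)$ on both sides. Your remark about aligning the enumeration of the abstraction sequence $(f_1, \dots, f_T)$ with the sum over $t = 0, \dots, T-1$ is well taken, since the stated definition of $\FGamma_{\F}$ contains exactly this off-by-one mismatch and the paper's proof reconciles it silently, whereas you handle it explicitly.
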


\begin{proof}
  We have
  \begin{align*}
    \big[(\FGamma \circ h^*)(R_{\F})\big](s_0, a_0, \dots, a_{T-1}, s_T) &= \Big[\FGamma\big(h^*(R_{\F})\big)\Big](s_0, a_0, \dots, a_{T-1}, s_T) \\
    &= \sum_{t = 0}^{T-1} \gamma^t  \big[h^*(R_{\F})\big](s_t, a_t, s_{t+1}) \\
    &= \sum_{t = 0}^{T-1} \gamma^t  R_{\F}\big(h(s_t, a_t, s_{t+1})\big) \\
    &= \big[\FGamma_{\F}(R_{\F})\big]\big(h(s_0, a_0, s_{1}), \dots, h(s_{T-1}, a_{T-1}, s_T) \big) \\
    &= \big[\FGamma_{\F}(R_{\F})\big]\big(h^T(s_0, a_0, \dots, s_{T-1}, a_{T-1}, s_T)\big)  \\
    &= \Big[ h^{T^*}\big(\FGamma_{\F}(R_{\F})\big) \Big](s_0, a_0, \dots, a_{T-1}, s_T) \\
    &= \big[ (h^{T^*} \circ \FGamma_{\F})(R_{\F}) \big](s_0, a_0, \dots, a_{T-1}, s_T).
  \end{align*}
  This shows $\FGamma \circ h^* = h^{T^*} \circ \FGamma_{\F}$.
  Consequently, the diagram commutes. 
\end{proof}

The idea will be that the rows of $\bp$, $\bp \circ \FGamma$, $\bp \circ h^{T^*}$ and $\bp \circ \FGamma \circ h^* = \bp \circ h^{T^*} \circ \FGamma_{\F}$ all correspond (via~\Cref{pro:map_linear_correspondence}) to feature beliefs over trajectories, transitions, trajectory abstractions, and transition abstractions, respectively. 
We explain this interpretation in detail in~\Cref{app:matrix_interpretations}.
All of these functions ``factorize'' over trajectories, but of course this need not be the case in reality:
A realistic human could have an intrinsic belief over state transitions, sequences of abstractions, or single abstractions, without this belief ``factorizing'' in a rational way over state sequences.

Thus, let the following be an extended version of the diagram from~\Cref{pro:commuting_diagram}, with new linear functions $\bp', \bp'', \bp'''$.
  This extension is now \emph{not} necessarily commutative anymore:
  \begin{equation}\label{eq:commuting_diagram_completed}
  \begin{tikzcd}
    \R^{\states \times \actions \times \states} \ar[rr, "\FGamma"] \ar[rrrr, "\bp'", bend left, magenta]    & &  \R^{\Traj} \ar[rr, "\bp"] & & \R^{\Observations}  \\
    \\
    \R^{\F} \ar[rr, "\FGamma_{\F}"] \ar[rrrruu, "\bp'''"', bend right = 60, blue] \ar[uu, "h^*"] & & \R^{\F^T} \ar[uu, "h^{T^*}"] \ar[rruu, "\bp''", bend right, cyan] 
  \end{tikzcd}
  \end{equation}
  To interpret $\bp', \bp'', \bp'''$ on similar grounds as $\bp$, it makes sense to assume that they are row-constant (\Cref{def:row_constant_matrix}), but otherwise they can be arbitrary.

\subsection{Various human belief models}\label{app:various_feedback_models}

We take the previous diagrams as the starting point to construct human belief models
Remember that a belief model is of the form $\Model = (\Features, \FLambda, \Featureb, \Valid)$.
The sets $\Traj, \states \times \actions \times \states$, $\F$, and $\F^T$ are four different possible feature sets $\Features$.
$\FLambda$ is given by a composition of linear functions that maps to $\R^{\Traj}$.
$\Featureb$ is given by a composition mapping to $\R^{\Observations}$.
The space $\Valid$ is either given by the full vector space $\R^{\Features}$, or by images of functions mapping to $\R^{\Features}$.
Overall, using the diagram from~\Cref{pro:commuting_diagram}, this leads to the following 9 models, with the superscript denoting the feature space, and the subscript indicating where the valid reward objects ``originate from'':
\begin{align*}
  \Model^{\F}_{\F} & \coloneqq \big(\F, \ \FGamma \circ h^*, \ \bp \circ \FGamma \circ h^*, \ \R^{\F}\big) \\
  \Model^{\states \times \actions \times \states}_{\states \times \actions \times \states} & \coloneqq \big(\states \times \actions \times \states, \ \FGamma, \ \bp \circ \FGamma, \ \R^{\states \times \actions \times \states}\big) \\
  \Model^{\states \times \actions \times \states}_{\F} & \coloneqq \big(\states \times \actions \times \states, \ \FGamma, \ \bp \circ \FGamma, \ \im(h^*)\big) \\
  \Model^{\F^T}_{\F^T} & \coloneqq \big(\F^T, \ h^{T^*}, \ \bp \circ h^{T^*}, \ \R^{\F^T}\big) \\
  \Model^{\F^T}_{\F} & \coloneqq \big(\F^T, \ h^{T^*}, \ \bp \circ h^{T^*}, \ \im(\FGamma_{\F})\big) \\
  \Model^{\Traj}_{\Traj} & \coloneqq \big(\Traj, \ \id_{\R^{\Traj}}, \ \bp, \ \R^{\Traj}\big) \\
  \Model^{\Traj}_{\states \times \actions \times \states} & \coloneqq \big(\Traj, \ \id_{\R^{\Traj}}, \ \bp, \ \im(\FGamma)\big) \\
  \Model^{\Traj}_{\F^T} & \coloneqq \big(\Traj, \ \id_{\R^{\Traj}}, \ \bp, \ \im(h^{T^*}) \big) \\
\Model^{\Traj}_{\F} & \coloneqq \big(\Traj, \ \id_{\R^{\Traj}}, \ \bp, \ \im(\FGamma \circ h^*)\big)
\end{align*}
For example, $\Model^{\states \times \actions \times \states}_{\states \times \actions \times \states}$ is the model from~\Cref{ex:old_paper_model}; $\Model^{\states \times \actions \times \states}_{\F}$ is the same model, but with valid reward functions restricted to those that only ``care about'' abstractions; $\Model^{\Traj}_{\Traj}$ is a model in which the features are given by full trajectories, and there are no restrictions on the valid return functions; etc.

  Now, $\bp'$ naturally gives rise to the following three models, which we color differently to distinguish them more easily:
  {\color{magenta}
  \begin{align*}
    \Model'^{\F}_{\F} &= \big(\F, \ \FGamma \circ h^*, \ \bp' \circ h^*, \ \R^{\F} \big)  \\
    \Model'^{\states \times \actions \times \states}_{\F} &= \big( \states \times \actions \times \states, \ \FGamma, \ \bp', \ \im(h^*) \big)  \\
    \Model'^{\states \times \actions \times \states}_{\states \times \actions \times \states} &= \big( \states \times \actions \times \states, \ \FGamma , \ \bp', \ \R^{\states \times \actions \times \states} \big).
  \end{align*}
}
  Similarly, $\bp''$ gives rise to the following three models:
  {\color{cyan}
  \begin{align*}
    \Model''^{\F}_{\F} &= \big( \F, \ h^{T^*} \circ \FGamma_{\F}, \ \bp'' \circ \FGamma_{\F}, \ \R^{\F}  \big) \\
    \Model''^{\F^T}_{\F} &= \big( \F^{T}, \ h^{T^*}, \ \bp'', \ \im(\FGamma_{\F}) \big)  \\
    \Model''^{\F^T}_{\F^T} &= \big( \F^T, \ h^{T^*}, \ \bp'', \ \R^{\F^T} \big)
  \end{align*}
}
  Finally, $\bp'''$ gives rise to a single model:
  {\color{blue}
  \begin{equation*}
    \Model'''^{\F}_{\F} = \big( \F, \ \FGamma \circ h^*, \ \bp''', \ \R^{\F} \big)
  \end{equation*}
}

Note that all component linear functions appearing in any of these models (identities, $h^*, \FGamma, \FGamma_{\F}, h^{T^*}, \bp, \dots, \bp'''$) are row-constant. 
By~\Cref{lem:product_of_row_constant_matrices} then, also all compositions are row-constant, which then implies that all $16$ models are balanced, as defined in~\Cref{def:balanced}.
The first $9$ models are also faithful (\Cref{def:faithfulness}) since all feature belief functions factorize as in~\Cref{pro:faithfulness_charac}, with $Y$ given by $\bp$ in all cases.
The other $7$ models will typically not be faithful.

\subsection{Interpreting the matrix elements}\label{app:matrix_interpretations}

We now interpret the different feature belief functions that appeared in the nine first models of the previous subsection.
Recall that the linear function $\bp: \R^{\Traj} \to \R^{\Observations}$ ``comes from'' a function $b: \Observations \to \Delta(\Traj) \subseteq \R^{\Traj}$.
Thus, all matrix elements $\bp_{o \traj}$ can be interpreted as a probability $\big[b(o)\big](\traj)$ for the trajectory $\traj$ when viewing observation $o$.
We now explain similar interpretations for the matrix elements of all the other feature belief functions:

$\bp \circ \FGamma$: 
    It contains matrix elements
    \begin{equation*}
      (\bp \circ \FGamma)_{o,(s, a, s')} = \sum_{\traj} \big[b(o)\big](\traj) \sum_{t = 0}^{T-1}\gamma^t \delta_{(s, a, s')}(s_t, a_t, s_{t+1}),
    \end{equation*}
    the expected discounted number of times the transition $(s, a, s')$ is present in the trajectory.

 $\bp \circ h^{T^*}$:
    Write $\mathbf{f}$ for $(f_1, \dots, f_T)$.
    Then this contains matrix elements
    \begin{align*}
      (\bp \circ h^{T^*})_{o\mathbf{f}} &= \sum_{\traj \in \Traj} \big[b(o)\big](\traj) h^{T^*}_{\traj\mathbf{f}} \\
      &=  \sum_{\traj \in \Traj} \big[b(o)\big](\traj) \delta_{\mathbf{f}}(h^T(\traj)) \\
      &= \sum_{\traj \colon h^T(\traj) = \mathbf{f}} \big[ b(o) \big](\traj) \\
      &= \big[ b(o) \big]\big( (h^T)^{-1}(\mathbf{f}) \big) \\
      &= \big[b(o)_{h^T}\big](\mathbf{f}). 
    \end{align*}
    In the second to last step, we view $b(o)$ as a probability distribution that, when evaluated on a set, evaluates to the sum of the probabilities of the set's elements.
    In the last step, we use the definition of the distributional law of a random variable $X$ with respect to a probability distribution $P$ on the sample space: $P_X(x) = P(X^{-1}(x))$.
    The result is the believed probability, after observing $o$, of a trajectory with sequence of abstractions $\mathbf{f}$.

    Finally, we look at the matrix $\bp \circ \FGamma \circ h^*$ (for which we give two slightly different formulas):
    The matrix elements are given as
    \begin{align*}
      (\bp \circ \FGamma \circ h^*)_{of} &= \sum_{(s, a, s') \in \states \times \actions \times \states} (\bp \circ \FGamma)_{o,(s, a, s')} \cdot h^*_{(s, a, s'),f} \\
      &= \sum_{(s, a, s') \colon h(s, a, s') = f} \sum_{\traj \in \Traj} \big[b(o)\big](\traj) \sum_{t = 0}^{T-1} \gamma^t \delta_{(s, a, s')}(s_t, a_t, s_{t+1}) \\
      &= \sum_{\traj \in \Traj} \big[ b(o) \big](\traj) \sum_{t = 0}^{T-1} \gamma^t  \sum_{(s, a, s') \colon h(s, a, s') = f} \delta_{(s, a, s')}(s_t, a_t, s_{t+1}) \\
      &= \sum_{\traj \in \Traj} \big[ b(o) \big](\traj) \sum_{t = 0}^{T-1} \gamma^t \delta_{f}(h(s_t, a_t, s_{t+1})).
    \end{align*}
    This is the expected discounted number of times that one encounters the abstraction $f$.
    Using that $\FGamma \circ h^* = h^{T^*} \circ \FGamma_{\F}$ by~\Cref{pro:commuting_diagram}, we can also write this as
    \begin{align*}
      \big(\bp \circ h^{T^*} \circ \FGamma_{\F}\big)_{of} &= \sum_{\mathbf{f} \in \F^{T}} (\bp \circ h^{T^*})_{o \mathbf{f}} \cdot (\FGamma_{\F})_{\mathbf{f} f} \\
      &= \sum_{\mathbf{f} \in \F^{T}} \big[ b(o)_{h^T} \big](\mathbf{f}) \sum_{t = 0}^{T} \gamma^t \delta_{f}(f_t).
    \end{align*}
    This can also be described as the expected discounted number of times that one encounters the abstraction $f$.

\subsection{The resulting commutative diagram}\label{app:resulting_diagram}

Building on~\Cref{app:various_feedback_models}, the following is a commutative diagram of belief models and model morphisms, with four differently colored ``connected components'';  one can sometimes use~\Cref{pro:commuting_diagram} in the process of showing that every linear function in the diagram is a morphism of belief models, and that the final diagram commutes:

\begin{equation*}
  \begin{tikzcd}
    {\color{magenta}\Model'^{\states \times \actions \times \states}_{\states \times \actions \times \states}} & \Model^{\states \times \actions \times \states}_{\states \times \actions \times \states} \ar[r, "\FGamma"] 
    & \Model^{\Traj}_{\states \times \actions \times \states} \ar[r, "\id_{\R^{\Traj}}"]
    & \Model^{\Traj}_{\Traj} \\
    {\color{magenta}\Model'^{\states \times \actions \times \states}_{\F}} \ar[u, "\id_{\R^{\states \times \actions \times \states}}", magenta] &  \Model^{\states \times \actions \times \states}_{\F} \ar[u, "\id_{\R^{\states \times \actions \times \states}}"] \ar[r, "\FGamma"] 
    & \Model^{\Traj}_{\F} \ar[u, "\id_{\R^{\Traj}}"] \ar[r, "\id_{\R^{\Traj}}"] 
    & \Model^{\Traj}_{\F^T} \ar[u, "\id_{\R^{\Traj}}"] \\
    {\color{magenta}\Model'^{\F}_{\F}} \ar[u, "h^*", magenta] & \Model^{\F}_{\F} \ar[u, "h^*"] \ar[r, "\FGamma_{\F}"] 
    & \Model^{\F^T}_{\F} \ar[u, "h^{T^*}"] \ar[r, "\id_{\R^{\F^T}}"] 
    & \Model^{\F^T}_{\F^T} \ar[u, "h^{T^*}"] \\
    {\color{blue}\Model'''^{\F}_{\F}} & {\color{cyan}\Model''^{\F}_{\F}} \ar[r, "\FGamma_{\F}", cyan] & {\color{cyan}\Model''^{\F^T}_{\F}} \ar[r, "\id_{\R^{\F^T}}", cyan] & {\color{cyan}\Model''^{\F^T}_{\F^T}}
  \end{tikzcd}
\end{equation*}

For example, the following diagram visualizes the fact that $h^*: \Model^{\F}_{\F} \to \Model^{\states \times \actions \times \states}_{\F}$ is a morphism:

\begin{equation*}
  \begin{tikzcd}
    & & & & \R^{\Traj} \\
    \substack{ \R^{\F} \\ \rotatebox{90}{$\subseteq$} \\ \R^{\F} }
   \ar[rrrru, "\FGamma \circ h^*", bend left = 15] \ar[rr, "h^*", dotted] \ar[rrrrd, "\bp \circ \FGamma \circ h^*"', bend right = 15]
   & & \substack{ \R^{\states \times \actions \times \states} \\ \rotatebox{90}{$\subseteq$} \\ \im(h^*)} 
   \ar[rru, "\FGamma"'] \ar[rrd, "\bp \circ \FGamma"]\\
   & & & & \R^{\Observations}.
  \end{tikzcd}
\end{equation*}

This gives rise to the following diagram of ambiguities:

\begin{equation*}
  \adjustbox{scale=0.9,center}{%
    \begin{tikzcd}[column sep={0.1cm, 0cm, 0cm, 0cm, 0cm}]
    {\color{magenta}\FGamma(\ker(\bp'))} & \FGamma\big( \ker(\bp \circ \FGamma) \big) & = &
    & \ker(\bp) \cap \im(\FGamma) & \subseteq &
    & \ker(\bp) \\
    {\color{magenta}\FGamma\big( \ker(\bp') \cap \im(h^*) \big)} \ar[u, "\subseteq"{marking}, phantom, magenta] & \FGamma\big( \ker(\bp \circ \FGamma) \cap \im (h^*) \big) \ar[u, "\subseteq"{marking}, phantom] & = & 
    & \ker(\bp) \cap \im(\FGamma \circ h^*) \ar[u,"\subseteq"{marking}, phantom] & \subseteq & 
    & \ker(\bp) \cap \im\big( h^{T^*} \big) \ar[u,"\subseteq"{marking}, phantom] \\
    {\color{magenta}(\FGamma \circ h^*)\big( \ker(\bp' \circ h^*) \big)} \ar[u, "="{marking}, phantom, magenta] & (\FGamma \circ h^*)\big( \ker(\bp \circ \FGamma \circ h^*) \big) \ar[u, "="{marking}, phantom] & = & 
    & h^{T^*}\big( \ker( \bp \circ h^{T^*} ) \cap \im(\FGamma_{\F}) \big) \ar[u, "="{marking}, phantom] & \subseteq &
    & h^{T^*}\big( \ker( \bp \circ h^{T^*} ) \big) \ar[u, "="{marking}, phantom] \\
    {\color{blue}(\FGamma \circ h^*)\big( \ker(\bp''') \big)} 
    & {\color{cyan}\big( h^{T^*} \circ \FGamma_{\F} \big)\big( \ker(\bp'' \circ \FGamma_{\F}) \big)} & {\color{cyan}=} & 
    & {\color{cyan}h^{T^*}\big( \ker(\bp'') \cap \im(\FGamma_{\F}) \big)} & {\color{cyan}\subseteq} &
    & {\color{cyan}h^{T^*}\big(\ker(\bp'')\big)}
  \end{tikzcd}
}
\end{equation*}

The ambiguities are computed using~\Cref{pro:ambiguities_characterizations}, and the inclusions and equalities of ambiguities follow from~\Cref{thm:Morphism_preserves_identifiability} and~\Cref{thm:existence_of_morphism}.
Here, the ambiguity $\FGamma\big( \ker(\bp \circ \FGamma) \big) = \ker(\bp) \cap \im(\FGamma)$ is the special case discussed in depth in~\citet{Lang2024}.
Note that the models $\Model^{\F}_{\F}, \Model^{\states \times \actions \times \states}_{\F}$ and $\Model^{\states \times \actions \times \states}_{\states \times \actions \times \states}$ are closely related to the models $\Model_1, \Model_2$ and $\Model_3$ from~\Cref{sec:equivariance_made_concrete}.

Assume we would use one of these models in practice. 
The further right or up it is in the diagram, the more ambiguity there is, but it is then also more likely that the model covers the true belief model (should it appear in the diagram in the first place). 
Thus, there is a trade-off between covering the true belief model, and keeping the ambiguity small.

\section{Details on the example with invariant features}\label{app:invariant_features_details}

Here, we present more mathematical details for~\Cref{sec:equivariance_made_concrete}.
This appendix is not self-contained and we recommend reading it alongside the section in the main paper.

\subsection{Details on the MDP and observations}\label{app:mdp_details}

Formally, the states are given by $\states = (\{L, R\} \times \{U, D\})^2$, with the first component being the hand-position, and the second component being the button position.
For example, the state in~\Cref{eq:example_state} is given by $((L, U), (R, D))$.

    Furthermore, we define functions $\Pos_H: \states \to \{L, R\} \times \{U, D\}$ and $\Pos_{B}: \states \to \{L, R\} \times \{U, D\}$ as the first and second projection. 
    These are the position of a ``hand'' $H$ and a ``button'' $B$, in a 2x2 gridworld.
    Then the state $s$ from~\Cref{eq:example_state} satisfies $\Pos_H(s) = (L, U)$ and $\Pos_B(s) = (R, D)$.

    The set of trajectories is formally given by $\Traj = (\states \times \actions)^3 \times \states$.
  The set of observations is formally given by
  \begin{equation*}
    \Observations = \Big[ \{L, R\}^2 \times \{P, \overline{P}\}\Big]^3 \times \{L, R\}^2.
  \end{equation*}
  Here, $\overline{P}$ means that it was \emph{not} observed that a button was pressed.

\subsection{Details on the belief models and symmetries}\label{app:details_human_models}

Let $G = D_4$ be the dihedral group of order $8$, i.e., the symmetry group of the square.
It is given by
\begin{equation*}
  G = D_4 = \left\lbrace e, \ r, \ r^2, \ r^3, \ f, \ rf, \ r^2f, \ r^3f \right\rbrace,
\end{equation*}
where $r$ is a clockwise rotations by $90^\circ$ and $f$ is a flip over the horizontal axis. 
In compositions, we apply $f$ first.
$G$ acts on $\states \times \actions$ by individually acting on states and actions:
\begin{equation*}
  g.(s, a) \coloneqq (g.s, g.a), 
\end{equation*}
where $g.s = g.(\Pos_H(s), \Pos_B(s)) \coloneqq (g.\Pos_H(s), g.\Pos_B(s))$, where on the generators $g = r$ and $g = f$ we have
\begin{align*}
  r.(L, U) &= (R, U), \quad r.(R, U) = (R, D), \quad r.(R, D) = (L, D), \quad r.(L, D) = (L, U) \\
  f.(L, U) &= (L, D), \quad f.(R, U) = (R, D), \quad f.(R, D) = (R, U), \quad f.(L, D) = (L, U). 
\end{align*}
This specifies the action on states. 
On actions, we specify 
\begin{align*}
  r.L &= U, \quad r.U = R, \quad r.R = D, \quad r.D = L, \quad r.P = P.  \\
  f.L &= L, \quad f.U = D, \quad f.R = R, \quad f.D = U, \quad f.P = P.
\end{align*}
Thus, the ``pressing'' action remains invariant.

With this group action, we obtain a set of equivalence classes of state-action pairs, given by $\overline{\states \times \actions}$.
A set of \emph{representatives} for the equivalence classes is given by
\begin{equation}\label{eq:representatives}
  \big(\{s_0\} \times \actions^{s_0}\big) \cup \big(\{s_1\} \times \actions^{s_1}\big) \cup \big(\{s_2\} \times \actions^{s_2}\big),
\end{equation} 
where
\begin{equation*}
  s_0 = ((R, D), (R, D)), \quad s_1 = ((L, D), (R, D)), \quad s_2 = ((L, U), (R, D)),
\end{equation*}
and where the (state-dependent) set of actions are given by
\begin{equation*}
  \actions^{s_0} = \actions^{s_2} =  \{L, D, P\}, \quad \actions^{s_1} = \{L, R, U, D, P\}.
\end{equation*}

We then have a function 
\begin{equation*}
  h: \states \times \actions \to \bigcup_{i \in \{0, 1, 2\}} \{s_i\} \times \actions^{s_i}
\end{equation*}
that maps each state-action pair to a representative, given by $h(s, a) = g.(s, a)$ for the unique $g \in D_4$ for which $g.(s, a)$ is in the set of representatives from~\Cref{eq:representatives}.
Via $h$, we now identify $\overline{\states \times \actions}$ with $\bigcup_{i \in \{0, 1, 2\}} \{s_i\} \times \actions^{s_i}$.

\Cref{eq:decomp} can be showed by
\begin{align}\label{eq:decomp_app}
  \begin{split}
    \FLambda_{\traj, (s, a)} &= \big[ \slambda(\traj) \big](s, a) \\
    &= \sum_{t = 0}^{2} \delta_{(s, a)}(h(s_t, a_t)) \\
  &= \sum_{(s', a') \in \states \times \actions} \delta_{(s, a)}(h(s', a')) \sum_{t = 0}^{2} \delta_{(s', a')}(s_t, a_t) \\
  &= \sum_{(s', a') \in \states \times \actions} h^*_{(s', a'),(s, a)} \FGamma_{\traj, (s', a')} \\
  &= \sum_{(s', a') \in \states \times \actions} \FGamma_{\traj, (s', a')} h^*_{(s', a'),(s, a)} \\
  &= (\FGamma \circ h^*)_{\traj, (s, a)}. 
  \end{split}
\end{align}
For the matrix elements of $h^*$, we used~\Cref{eq:early_h}.

The human's belief $\Featureb(s, a \mid o)$ for $(s, a) \in \overline{\states \times \actions}$ and $o \in \Observations$ is then given as follows:
The human has a uniform prior $\belief(s) = P_0$ over possible start-states sampled from $P_0$, and a uniform prior over possibly next actions given the current state, leading to a prior distribution over $\belief(\traj) \in \Delta(\Traj)$.
Then, upon seeing $o$, the human implicitly computes a posterior belief over trajectories compatible with the observation, simply given by
\begin{equation}\label{eq:belief_appendix}
  \belief(\traj \mid o) \propto \delta_{o}(O(\traj)) \cdot \belief(\traj).
\end{equation}
\Cref{eq:decomp_two} can be showed by
\begin{align}\label{eq:belief_function_equivariance}
  \begin{split}
    \Featureb_{o,(s, a)} &= \big[ \featureb(o) \big](s, a) \\
    &= \sum_{\traj \in \Traj}  \big[ b(o) \big](\traj) \cdot \big[ \slambda(\traj) \big](s, a) \\
    &= \sum_{\traj \in \Traj} \bp_{o \traj} \cdot \FLambda_{\traj,(s, a)}  \\
  &= (\bp \circ \FLambda)_{o,(s, a)} \\
  &= (\bp \circ \FGamma \circ h^*)_{o,(s, a)}
\end{split}
\end{align}

\subsection{Details on the ambiguity analysis for \texorpdfstring{$\Model_2$}{M2}}\label{app:details_ambiguity_analysis}

In all these computations, recall Equation~\eqref{eq:feature_belief_function_app} and the sentence following it for computing $\Featureb(\overline{R})(o)$ for an observation $o$.
  
  Recall the observation $o_2$:
\begin{equation*}
  o_2 = \raisebox{-0.35\height}{
  \begin{tikzpicture}[scale=1.5]
    \begin{scope}[yshift=-1.5cm]
      \draw[step=0.5cm,black, line width=1.5pt] (0,0) rectangle (1,0.5);
      \draw[line width=1pt] (0.5,0) -- (0.5,0.5);
      \node[text=blue] at (0.25,0.25) {\textbf{H}};
      \node[text=red] at (0.75,0.25) {\textbf{B}};
      \draw[->,thick] (1.2,0.25) -- node[above] {P} (1.8,0.25);
    \end{scope}
    
    \begin{scope}[xshift=2cm, yshift=-1.5cm]
      \draw[step=0.5cm,black, line width=1.5pt] (0,0) rectangle (1,0.5);
      \draw[line width=1pt] (0.5,0) -- (0.5,0.5);
      \node[text=blue] at (0.25,0.25) {\textbf{H}};
      \node[text=red] at (0.75,0.25) {\textbf{B}};
      \draw[->,thick] (1.2,0.25) -- node[above] {P}  (1.8,0.25);
    \end{scope}    

    \begin{scope}[xshift=4cm, yshift=-1.5cm]
      \draw[step=0.5cm,black, line width=1.5pt] (0,0) rectangle (1,0.5);
      \draw[line width=1pt] (0.5,0) -- (0.5,0.5);
      \node[text=blue] at (0.25,0.25) {\textbf{H}};
      \node[text=red] at (0.75,0.25) {\textbf{B}};
      \draw[->,thick] (1.2,0.25) -- node[above] {P} (1.8,0.25);
    \end{scope}

    \begin{scope}[xshift=6cm, yshift=-1.5cm]
      \draw[step=0.5cm,black, line width=1.5pt] (0,0) rectangle (1,0.5);
      \draw[line width=1pt] (0.5,0) -- (0.5,0.5);
      \node[text=blue] at (0.25,0.25) {\textbf{H}};
      \node[text=red] at (0.75,0.25) {\textbf{B}};
  \end{scope}
  \end{tikzpicture}}
\end{equation*}
Since we assumed that the starting state is one of the states in~\Cref{eq:four_states}, the human has the belief $\big[\featureb(o_2)\big](s_2, P) = 3$, i.e., the human is certain that, up to symmetry, the hand performed a pressing action three times in $s_2$. 
  Thus,
  \begin{equation*}
    0 = \big[\Featureb(\overline{R})\big](o_2) =  \big[\featureb(o_2)\big](s_2, P) \cdot \overline{R}(s_2, P) = 3 \cdot \overline{R}(s_2, P)
  \end{equation*}
  This implies $\overline{R}(s_2, P) = 0$.

Recall observation $o_1$:
\begin{equation*}
  o_1 = \raisebox{-0.35\height}{
  \begin{tikzpicture}[scale=1.5]
    \begin{scope}[yshift=-1.5cm]
      \draw[step=0.5cm,black, line width=1.5pt] (0,0) rectangle (1,0.5);
      \draw[line width=1pt] (0.5,0) -- (0.5,0.5);
      \node[text=blue] at (0.25,0.25) {\textbf{H}};
      \node[text=red] at (0.75,0.25) {\textbf{B}};
      \draw[->,thick] (1.2,0.25) -- (1.8,0.25);
    \end{scope}
    
    \begin{scope}[xshift=2cm, yshift=-1.5cm]
      \draw[step=0.5cm,black, line width=1.5pt] (0,0) rectangle (1,0.5);
      \draw[line width=1pt] (0.5,0) -- (0.5,0.5);
      \node[text=blue] at (0.25,0.25) {\textbf{H}};
      \node[text=red] at (0.75,0.25) {\textbf{B}};
      \draw[->,thick] (1.2,0.25) -- node[above] {P}  (1.8,0.25);
    \end{scope}    

    \begin{scope}[xshift=4cm, yshift=-1.5cm]
      \draw[step=0.5cm,black, line width=1.5pt] (0,0) rectangle (1,0.5);
      \draw[line width=1pt] (0.5,0) -- (0.5,0.5);
      \node[text=blue] at (0.25,0.25) {\textbf{H}};
      \node[text=red] at (0.75,0.25) {\textbf{B}};
      \draw[->,thick] (1.2,0.25) -- node[above] {P} (1.8,0.25);
    \end{scope}

    \begin{scope}[xshift=6cm, yshift=-1.5cm]
      \draw[step=0.5cm,black, line width=1.5pt] (0,0) rectangle (1,0.5);
      \draw[line width=1pt] (0.5,0) -- (0.5,0.5);
      \node[text=blue] at (0.25,0.25) {\textbf{H}};
      \node[text=red] at (0.75,0.25) {\textbf{B}};
    \end{scope}
\end{tikzpicture}}
\end{equation*}
Now, the first action could either not change anything, \emph{or} horizontally align $H$ and $B$.
An action that does not change anything is more likely (chance $2/3$ since there are two actions, in the direction of two different adjacent walls, that achieve this, which both correspond to action $L$ up to symmetry), and so we obtain 
\begin{alignat*}{2}
  & \big[\featureb(o_1)\big](s_2, L) = 2/3, \quad && \big[\featureb(o_1)\big](s_2, P) = 4/3, \\
  & \big[\featureb(o_1)\big](s_2, D) = 1/3, \quad && \big[\featureb(o_1)\big](s_1, P) = 2/3.
\end{alignat*}
Compare also with~\eqref{eq:belief_function_equivariance}.
Thus, we obtain
\begin{align*}
  0 &= \big[ \Featureb(\overline{R}) \big](o_1) \\
  &=  2/3 \cdot \overline{R}(s_2, L) + 4/3 \cdot \overline{R}(s_2, P) + 1/3 \cdot \overline{R}(s_2, D) + 2/3 \cdot \overline{R}(s_1, P) \\
  &= 2/3 \cdot \overline{R}(s_1, P).
\end{align*}
Here, we used that $\overline{R}(s_2, P) = 0$ by what we showed before, and $\overline{R}(s_2, L) = \overline{R}(s_2, D) = 0$ since $\overline{R}(s, a) = 0$ whenever $a \neq P$ (i.e., since $\overline{R} \in \Valid$).
Thus, we have $\overline{R}(s_1, P) = 0$ as well.

Finally, we look at the observation sequence $o_0$ given as follows:
  \begin{equation*}
    o_0 = \raisebox{-0.35\height}{
  \begin{tikzpicture}[scale=1.5]
    \begin{scope}[yshift=-1.5cm]
      \draw[step=0.5cm,black, line width=1.5pt] (0,0) rectangle (1,0.5);
      \draw[line width=1pt] (0.5,0) -- (0.5,0.5);
      \node[text=blue] at (0.25,0.25) {\textbf{H}};
      \node[text=red] at (0.75,0.25) {\textbf{B}};
      \draw[->,thick] (1.2,0.25) -- (1.8,0.25);
    \end{scope}
    
    \begin{scope}[xshift=2cm, yshift=-1.5cm]
      \draw[step=0.5cm,black, line width=1.5pt] (0,0) rectangle (1,0.5);
      \draw[line width=1pt] (0.5,0) -- (0.5,0.5);
      \node[text=blue] at (0.25,0.25) {\textbf{H}};
      \node[text=red] at (0.75,0.25) {\textbf{B}};
      \draw[->,thick] (1.2,0.25) -- (1.8,0.25);
    \end{scope}    

    \begin{scope}[xshift=4cm, yshift=-1.5cm]
      \draw[step=0.5cm,black, line width=1.5pt] (0,0) rectangle (1,0.5);
      \draw[line width=1pt] (0.5,0) -- (0.5,0.5);
      \node[text=blue] at (0.65,0.25) {\textbf{H}};
      \node[text=red] at (0.85,0.25) {\textbf{B}};
      \draw[->,thick] (1.2,0.25) -- node[above] {P} (1.8,0.25);
    \end{scope}

    \begin{scope}[xshift=6cm, yshift=-1.5cm]
      \draw[step=0.5cm,black, line width=1.5pt] (0,0) rectangle (1,0.5);
      \draw[line width=1pt] (0.5,0) -- (0.5,0.5);
      \draw[step=0.5cm,black, line width=1pt] (0,0) grid (1,0.5);
      \node[text=blue] at (0.65,0.25) {\textbf{H}};
      \node[text=red] at (0.85,0.25) {\textbf{B}};
    \end{scope}
  \end{tikzpicture}}
\end{equation*}
  Again, there is a chance of $2/3$ that the first action does not change anything.
  Given the first step, everything which follows is deterministic, leading to these feature beliefs:
  \begin{alignat*}{2}
    & \big[\featureb(o_0)\big](s_2, L) = 2/3, \quad  && \big[\featureb(o_0)\big](s_2, R) = 2/3, \quad \big[\featureb(o_0)\big](s_1, P) = 2/3 \\
    & \big[\featureb(o_0)\big](s_2, D) = 1/3, \quad  && \big[\featureb(o_0)\big](s_1, R) = 1/3, \quad \big[\featureb(o_0)\big](s_0, P) = 1/3.
  \end{alignat*}
  Compare again with~\eqref{eq:belief_function_equivariance}.
  This means that
  \begin{align*}
    0 &= \big[ \Featureb(\overline{R}) \big](o_0) \\
    &= 2/3 \cdot \overline{R}(s_2, L) + 2/3 \cdot \overline{R}(s_2, R) + 2/3 \cdot \overline{R}(s_1, P) + \\
    & \ \ \ + 1/3 \cdot \overline{R}(s_2, D) + 1/3 \cdot \overline{R}(s_1, R) + 1/3 \cdot \overline{R}(s_0, P) \\
    &= 1/3 \cdot \overline{R}(s_0, P). 
  \end{align*}
  Here, we used that $\overline{R}(s_1, P)$ by what we showed before, together, again, with the fact that $\overline{R}(s, a) = 0$ for all $a \neq P$.
  That shows $\overline{R}(s_0, P) = 0$.

\subsection{Details on the ambiguity analysis for \texorpdfstring{$\Model_3$}{M3}}\label{app:details_for_M3}

We have 

\begin{align}\label{eq:final_computation}
  \begin{split}
    \big[ (\bp \circ \FGamma)(R') \big](o) &= (\bp \circ \FGamma)_{o,(s_1',P)} \cdot R'(s_1', P) + (\bp \circ \FGamma)_{o,(s_1'',P)} \cdot R'(s_1'', P) \\
  &= (\bp \circ \FGamma)_{o,(s_1', P)} - (\bp \circ \FGamma)_{o,(s_1'', P)} \\
  &= 0
  \end{split}
\end{align}
In the computation, the second step follows from the definition of $R'$.
The last step follows from the symmetry remarked on before.

\section{Mathematical interpretations of related work in our framework}\label{app:interpretations_related_work}

In this appendix, we briefly interpret some of the related work from~\Cref{sec:related_work} in our framework for the special case that they learn linear reward probes.
Note that these interpretations are not meant to capture everything there is to say about that work --- the summaries we provide are quite coarse.
In all examples below, we assume access to a very capable foundation model $\widehat{\slambda}: \Traj \to \R^{\widehat{\Features}}$ that allows for a linear ontology translation $\Psi: \R^{\widehat{\Features}} \to \R^{\Features}$ to the human's ontology $\slambda$, as in~\Cref{sec:for_answering_1}: $\Psi \circ \widehat{\slambda} = \slambda$.
Define $\Phi \coloneqq \Psi^T: \R^{\Features} \to \R^{\widehat{\Features}}$, which then satisfies $\widehat{\FLambda} \circ \Phi = \FLambda$ by~\Cref{pro:ontology_translation}.
In all approaches below, we define $\widehat{\featureb}$ and assume that the return function is learned with the same method as in~\Cref{sec:learning_G}.
Notably, in all of the approaches one essentially just defines $\widehat{\featureb} \coloneqq \widehat{\slambda}$, i.e., no explicit modeling of humans is performed.

\subsection{Amplified oversight and eliciting latent knowledge}\label{app:amplified_oversight}

In amplified oversight, one \emph{amplifies} the human to give accurate feedback, which means we can assume $\Observations = \Traj$ and $\featureb = \slambda$.
One approach to achieve this would be to essentially \emph{define} $\featureb \coloneqq \Psi \circ \widehat{\slambda}$ by giving the human access to the linear ontology translation $\Psi$ for understanding the foundation model's thoughts.
This would roughly be in the spirit of eliciting latent knowledge~\citep{Christiano2021}, where the human can query a reporter to give information about arbitrary latent knowledge of an AI.

Accordingly, one can also choose $\widehat{\featureb} = \widehat{\slambda}$, leading to the following coverage diagram:

\begin{equation*}
  \begin{tikzcd}
    & & & \R^{\Traj}
    \\ 
    \R^{\Features}
    \ar[rrru, bend left = 15, "\FLambda"]
    \ar[rrrd, bend right = 15, "\FLambda"']
    \ar[rr, "\Phi", dotted]
    & & 
    \R^{\widehat{\Features}}
    \ar[ru, "\widehat{\FLambda}"]
    \ar[rd, "\widehat{\FLambda}"']
    \\
    & & & \R^{\Traj}
  \end{tikzcd}
\end{equation*}

The ontologies and feature belief functions are then the same, which automatically means that the ambiguity disappears: $\FLambda\big(\ker(\FLambda)\big) = 0$.

\subsection{Easy-to-hard generalization}\label{sec:easy-to-hard}

In this setting, $\Observations \subseteq \Traj$ is a \emph{subset} of trajectories that the human correctly understands.
Thus, for $\traj \in \Observations$, one has $\featureb(\traj) = \slambda(\traj)$, and so $\featureb = \slambda|_{\Observations}$ is simply a restriction.
In this setting, one can also set $\widehat{\featureb} = \widehat{\slambda}|_{\Observations}$.
Now, let $\FLambda|$ and $\widehat{\FLambda}|$ be the linear functions corresponding to $\slambda|_{\Observations}$ and $\widehat{\slambda}|_{\Observations}$, respectively, via~\Cref{pro:map_linear_correspondence}.
One obtains the following diagram:

\begin{equation*}
  \begin{tikzcd}
    & & & \R^{\Traj}
    \\ 
    \R^{\Features}
    \ar[rrru, bend left = 15, "\FLambda"]
    \ar[rrrd, bend right = 15, "\FLambda|"']
    \ar[rr, "\Phi", dotted]
    & & 
    \R^{\widehat{\Features}}
    \ar[ru, "\widehat{\FLambda}"]
    \ar[rd, "\widehat{\FLambda}|"']
    \\
    & & & \R^{\Observations}
  \end{tikzcd}
\end{equation*}

For $\Phi$ in this diagram to be a morphism, we need that the lower diagram commutes.
With~\Cref{pro:ontology_translation}, this follows from the assumption that $\Phi^T = \Psi$ is an ontology translation:
$\Psi \circ \widehat{\slambda} = \slambda$ implies $\Psi \circ \widehat{\slambda}|_{\Observations} = \slambda|_{\Observations}$.
The ambiguity is now given by $\widehat{\FLambda}\big(\ker(\widehat{\FLambda}|)\big)$.
By using~\Cref{thm:characterization_complete}, this ambiguity disappears if and only if for all $\traj \in \Traj$, we have $\widehat{\slambda}(\traj) \in \Big\lbrace \sum_{\traj \in \Traj} Z_{\traj} \widehat{\slambda}(\traj) \mid Z_{\traj} \in \R \Big\rbrace$.
Thus, the ambiguity vanishes if the trajectories that the human understands have enough variety in the vector space of feature strengths.

\subsection{Classical RLHF and weak-to-strong generalization}\label{sec:weak-to-strong}

In classical RLHF, without any safeguards, one just uses the model $\widehat{\featureb} = \widehat{\slambda}$ as the feature belief function even though $\featureb \neq \slambda$ and hopes for the best:

\begin{equation}\label{eq:hope_for_the_best}
  \begin{tikzcd}
    & & & \R^{\Traj}
    \\ 
    \R^{\Features}
    \ar[rrru, bend left = 15, "\FLambda"]
    \ar[rrrd, bend right = 15, "\Featureb"']
    \ar[rr, "\Phi", dotted]
    & & 
    \R^{\widehat{\Features}}
    \ar[ru, "\widehat{\FLambda}"]
    \ar[rd, "\widehat{\FLambda}"']
    \\
    & & & \R^{\Traj}
  \end{tikzcd}
\end{equation}

In this case, the lower triangle does not commute since $\widehat{\FLambda} \circ \Phi \neq \Featureb$, which, using~\Cref{pro:ontology_translation}, is due to $\Psi \circ \widehat{\slambda} = \slambda \neq \featureb$.
This means that the second model does not cover the true belief model, and so the guarantee from~\Cref{thm:Morphism_preserves_identifiability} breaks.
In fact, the return function that would be inferred using this model is $G_{\Observations}$, the observation return function itself (cf. the definition of feedback-compatible return functions,~\Cref{def:ambiguity}, applied to this faulty model).
~\citet{Lang2024} extensively discuss failure modes in this case, called deceptive inflation and overjustification.
We note that weak-to-strong generalization~\citep{Burns2023weak}, when used without additional techniques, also considers this setting, but tries to ensure that the learning process or $\widehat{\slambda}$ contains inductive biases that steer the learning process to learn $G$ anyway.

\end{document}